\documentclass{article}

\usepackage{microtype}
\usepackage{graphicx}
\usepackage{subcaption}
\usepackage{booktabs}
\usepackage{hyperref}
\usepackage{multirow}

\usepackage[accepted]{icml2026}

\usepackage{amsmath}
\usepackage{amssymb}
\usepackage{mathtools}
\usepackage{amsthm}
\usepackage{bm}
\usepackage[capitalize,noabbrev]{cleveref}
\theoremstyle{plain}
\newtheorem{theorem}{Theorem}[section]
\newtheorem{proposition}[theorem]{Proposition}

\newtheorem{corollary}[theorem]{Corollary}
\theoremstyle{definition}
\newtheorem{definition}[theorem]{Definition}

\theoremstyle{remark}
\newtheorem{remark}[theorem]{Remark}
\usepackage[table]{xcolor}
\usepackage{float}

\usepackage[textsize=tiny]{todonotes}

\icmltitlerunning{In Search of Engrams in AI}
\begin{document}
\twocolumn[
  \icmltitle{AI Engram: In Search of Memory Traces in Artificial Intelligence}
  \icmlsetsymbol{equal}{*}

  \begin{icmlauthorlist}
    \icmlauthor{Jea Kwon}{equal,mpi}
    \icmlauthor{Dong-Kyum Kim}{equal,mpi}
    \icmlauthor{Jiwon Kim}{equal,mpi,snu}
    \icmlauthor{Yonghyun Kim}{mpi}
    \icmlauthor{Woong Kook}{snu}
    \icmlauthor{Meeyoung Cha}{mpi,kaist}
  \end{icmlauthorlist}
  \icmlaffiliation{mpi}{Max Planck Institute for Security and Privacy (MPI-SP), Bochum, Germany}
  \icmlaffiliation{snu}{Seoul National University (SNU), Seoul, Korea}
  \icmlaffiliation{kaist}{KAIST, Daejeon, Korea}
  \icmlcorrespondingauthor{Meeyoung Cha}{mia.cha@mpi-sp.org}

  \icmlkeywords{Engram, Memory Localization, Memory Entanglement}

  \vskip 0.3in
]

\printAffiliationsAndNotice{\icmlEqualContribution}

\begin{abstract}
Memory formation is fundamental to intelligence, yet whether deep neural networks preserve identifiable memory traces analogous to biological memory units remains an open question. This work introduces a geometric framework to identify such ``AI engrams'' by formalizing the neuroscientific criteria of \textit{specificity, reactivation, sufficiency}, and \textit{necessity} into a constrained inverse problem. 
\textcolor{black}{We derive a closed-form estimator that isolates individual memory traces from globally entangled parameters, and show that this biologically-derived solution corresponds to a natural gradient update on the parameter manifold.}
AI engrams enable surgical manipulation of learned knowledge: any subset of memories can be composed or erased through linear arithmetic, without iterative optimization. Experiments ranging from simple MLPs to LLMs demonstrate the causal validity and substantial scalability of AI engrams. Together, these results bridge theories of biological memory and artificial representation learning and offer geometric insight into how deep networks simultaneously support functional specificity within distributed storage.
\end{abstract}

\section{Introduction}
\label{sec:introduction}
\begin{figure}[t]
\centering
\includegraphics[width=0.9\columnwidth]{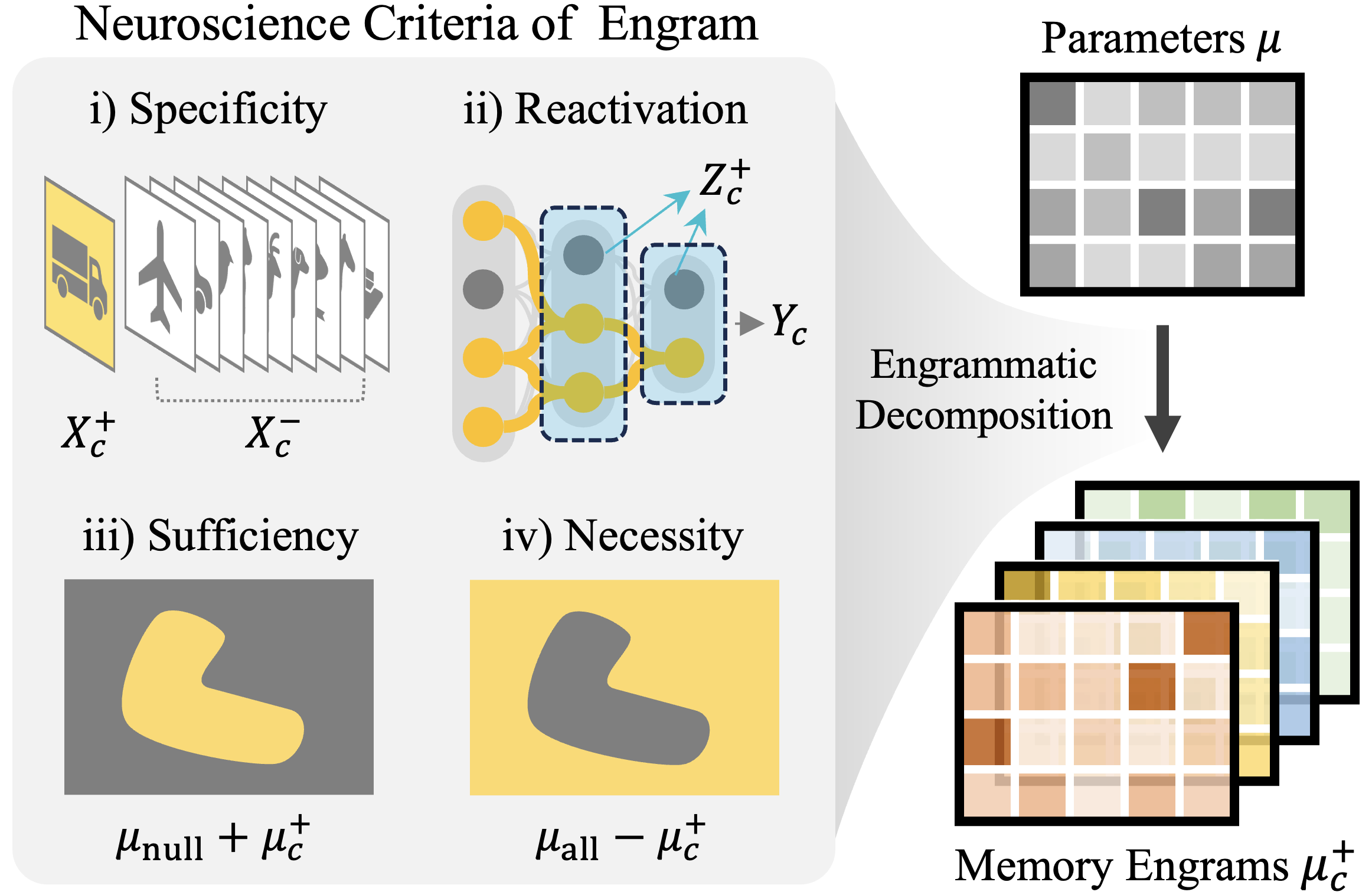}
\caption{\textbf{Illustration of neuroscience criteria for AI engram.} The engram $\bm{\mu}_c^+$ of target concept $c$ is identified as the causal substrate satisfying: 
i)~\textbf{Specificity}: selective encoding of target experience $\bm{X}_c^+$ relative to reference $\bm{X}_c^-$; 
ii)~\textbf{Reactivation}: consistent reproduction of learned internal representations $\bm{Z}_c^+$; 
iii)~\textbf{Sufficiency}: memory induction via injection of $\bm{\mu}_c^+$ into the null state $\bm{\mu}_\text{null}$; 
iv)~\textbf{Necessity}: memory elimination via surgical ablation of $\bm{\mu}_c^+$ from the learned state $\bm{\mu}_\text{all}$.}
\label{fig_constraints}
\vspace{-0.4cm}
\end{figure}

Memory is the foundation of intelligence~\cite{hawkins2004intelligence}. Intelligence emerges when systems learn to construct meaningful representation of the world~\cite{bengio2013representation}, suggesting that memory functions not as passive storage but as the structural representations that model reality~\cite{ha2018world}. This work begins from the premise that representations form as concrete, identifiable structures within a neural network, analogous to \textbf{neural engrams}~\cite{semon1921mneme}. Building on this idea, we introduce a method to locate such structures in a model's parameter--an AI parallel to the longstanding neuroscientific quest to pinpoint the physical substrate of memory.

Identifying physical memory traces in deep neural networks has long been intractable because learned parameters participate in many behaviors simultaneously, producing representations that are inherently distributed and entangled~\cite{hinton1984distributed,elhage2022toy,bau2017network}. This work introduces an analytical framework that isolates \textbf{AI engrams} from globally entangled parameters. Unlike heuristic attribution methods, we define neuroscience-based constraints that yield a closed-form solution and precisely recover memory traces directly from trained weights and activation statistics.

Specifically, we formulate the AI engram as an inverse problem governed by four canonical criteria (Fig.~\ref{fig_constraints}): \textit{specificity, reactivation, sufficiency,} and \textit{necessity}~\cite{josselyn2020memory,tome2024dynamic}. By enforcing these constraints, we extract the unique optimal substrate responsible for a specific memory. Theoretical analysis shows that our framework allows an information-geometric interpretation, revealing its connection to the underlying curvature of the parameter manifold. By translating the biological criteria into algebraic optimization constraints, we show that these constraints naturally yield the minimum-norm projection under the Fisher metric. The main contributions are summarized below, and \textcolor{black}{the code is available at \url{https://github.com/jeakwon/ai-engram}.}
\begin{itemize}
    \item \textbf{A framework for identifying memory traces in deep neural networks}: By translating biological criteria into algebraic constraints, we derive a scalable closed-form estimator that isolates AI engrams.
    \item \textbf{Amortized memory isolation}: Our framework decouples memory traces from the entangled parameter manifold, enabling the instantaneous composition or erasure of knowledge via simple linear arithmetic. This approach resolves the combinatorial complexity of unlearning, demonstrating robustness across architectures from simple MLP to billion-parameter LLMs.
    
    \item \textbf{A link between engrams and information geometry}: We discover that our closed-form estimator from neuroscience-inspired constraints; it recovers the Fisher curvature, coinciding with the natural gradient direction on the parameter manifold.
\end{itemize}

\section{Related Work}
\label{sec:related_work}

\paragraph{Selective knowledge manipulation} 
Controlling the behavior of pre‑trained models often relies on identifying and modifying the parameters that encode specific knowledge. In machine unlearning, such targeted interventions, typically via selective fine-tuning, can effectively remove the influence of designated training data
\cite{fan2023SalUnEmpoweringMachine}. Model editing follows a similar principle, with  advances such as Unified Concept Editing (UCE)~\cite{gandikota2024unified} that extends earlier approaches like TIME~\cite{orgad2023editing}, MEMIT~\cite{meng2022mass}, and ROME~\cite{meng2022locating} to enable simultaneous erasure and debiasing of multiple concepts without iterative optimization. Although UCE achieves efficiency through a closed-form solution that avoids gradient calculation, it requires computing covariance matrices of hidden activations. Our proposed method likewise provides a closed-form update but differs fundamentally by incorporating neuroscience-inspired constraints to define the memory trace.

\paragraph{Network decomposition} 
A central challenge in interpreting neural networks is to break them into components whose structure and interactions are interpretable. Most existing approaches decompose hidden activations, using sparse autoencoders to recover monosemantic dictionary elements~\cite{huben2024sparse}, yet these methods do not yield a functional decomposition of the parameters themselves~\cite{sharkey2025open}. In parameter space, work on weight disentanglement has aimed to isolate functional subnetworks using binary masks~\cite{mallya2018packnet,ramanujan2020hidden,panigrahi2023skill}. Moving beyond masks, Task Arithmetic~\cite{ilharco2023editing} showed that distinct capabilities can be linearly separated by treating model weights as vectors. However, these methods often operate at a coarse, global level.

{Linear parameter decomposition}~\cite{braun2025apd} was introduced to address this issue by decomposing neural networks into subcomponents in {parameter space}. \citet{braun2025apd} proposed Attribution-based Parameter Decomposition (APD) but it relies on gradient-based attribution and highly sensitive with the hyperparameter choice and not scalable. Stochastic Parameter Decomposition (SPD)~\cite{bushnaq2025spd} has been proposed to address this issue by using learnable rank-one subcomponents. In contrast to previous works that yield non-unique subcomponents or rely on iterative gradient-based optimization, our method decomposes the model into unique subcomponents (or engrams) in a single forward pass.

\section{Defining Learning and Memory}
\label{sec:method}

\begingroup\color{black}%
\paragraph{Overview.}
Modern neural networks store knowledge in a deeply entangled fashion, where a single weight matrix simultaneously supports many concepts with no obvious mapping from parameters to specific memories. This entanglement is the central obstacle to surgical knowledge manipulation, such as removing a concept without retraining, inserting a behavior without fine-tuning, or auditing what a model knows. The goal of this work is to identify the \emph{causal substrate of a specific concept} within trained weights, the sub-component whose ablation removes the concept and whose injection imparts it, while leaving unrelated knowledge intact. Following classical neuroscience, we call this substrate an \emph{engram} and formalize its identification by translating four canonical criteria from memory research (\textbf{specificity, reactivation, sufficiency, necessity}) into algebraic constraints on the parameter space.
\endgroup

We start by formally defining the learning of \textit{experience} $D$ (e.g., training data) for any learnable system.

\begin{definition}[\textbf{System and Experience}]
We define a learnable system $f$ as the tuple $(\bm{\mu}, \bm{\pi})$, consisting of:
\begin{itemize}
    \vspace*{-2.5mm}
    \item \textbf{Mutable components ($\bm{\mu}$):} The plastic elements (e.g., synaptic weights) are updated through experience.
    \vspace*{-0.5mm}
    \item \textbf{Immutable components ($\bm{\pi}$):} The fixed elements (e.g., architecture) that remain constant.
    \vspace*{-2.5mm}
\end{itemize}
Given an input $\bm{X}$, the system produces an output $\bm{Y} = f(\bm{X}; \bm{\mu}, \bm{\pi})$. We denote by $\bm{\mu}_t$ the memory state at time $t$, with $t=0$ and $t=1$ corresponding to the states before and after the learning of experience $D$.
\end{definition}

\begin{definition}[\textbf{Concept}]
{We define a \textit{concept} $\mathcal{C}$ as a functional specification governing how the mutable parameter $\bm{\mu}$ adapts under architecture $\bm{\pi}$ by partitioning the input space:}
\begin{itemize}
    \vspace*{-2.5mm}
    \item \textbf{Target region ($\bm{X}^+$):} Inputs requiring \hfill \\ \textbf{gain‑of‑function} to encode new representations.
    \vspace*{-1mm} 
    \item \textbf{Reference region ($\bm{X}^-$):} 
    {Inputs for which existing behaviour must exhibit  \textbf{conservation‑of‑function} within the null space of prior representations.}
    \vspace*{-2.5mm} 
\end{itemize}
\end{definition}

\subsection{Engram Identification as an Inverse Problem}
In realistic learning scenarios, the total parameter update $\Delta \bm{\mu} = \bm{\mu}_1 - \bm{\mu}_0$ is driven by mixed experiences $D = \{\bm{X}^+, \bm{X}^-\}$, where $\bm{X}^+$ denotes the target inputs and $\bm{X}^-$ represents prior or unrelated knowledge. Because learning must simultaneously acquire new behavior on $\bm{X}^+$ while preserving existing behavior on $\bm{X}^-$, the resulting update $\Delta \bm{\mu}$ becomes \textbf{physically entangled}, with multiple memory traces superposed in overlapping parameter directions.

Isolating the unique engram $\bm{\mu}^+$ is thus an \textbf{inverse problem} constrained by two biological principles (see Fig.~\ref{fig_constraints}). First, \textbf{representational nature} requires (i) \textit{specificity} for target $\bm{X}^+$ over reference inputs $\bm{X}^-$ and (ii) selective \textit{reactivation} of internal representation $\bm{Z}_t$. Second, \textbf{causal validity} requires that $\bm{\mu}^+$ satisfies (iii) \textit{sufficiency} to induce learned behavior via induction and (iv) \textit{necessity} for its expression via ablation.
\vspace{-3mm}

\begingroup\color{black}%
\paragraph{}{Concrete instantiation.}
For the CIFAR-10 cat-unlearning scenario, the concept $\mathcal{C} = \text{``cat''}$ partitions the training data into a target set $\bm{X}^{+}$ (cat images) and a reference set $\bm{X}^{-}$ (the remaining nine classes). The four neuroscientific criteria translate concretely: \emph{specificity} requires $\bm{\mu}^{+}_{\text{cat}}$ to remain inert on non-cat images, \emph{reactivation} requires it to reproduce the network's cat-specific internal states, \emph{sufficiency} requires that injecting it into a naive model imparts cat-recognition, and \emph{necessity} requires that ablating it removes cat-recognition while preserving the other nine classes. The remainder of the paper develops a closed-form estimator for $\bm{\mu}^{+}_{\text{cat}}$ and its multi-concept generalization.
\endgroup

\section{AI Engrams in Deep Neural Networks}

\subsection{Synaptic Decomposition of Memory}
In a deep neural network, the mutable parameter set $\bm{\mu}$ encompasses all trainable weights across every layer, expressed as $\bm{\mu} = \{ \bm{W}^{(1)}, \bm{W}^{(2)}, \dots, \bm{W}^{(L)} \}$. For a given experience $\bm{X}^+$, we define the corresponding \textbf{memory engram} $\bm{\mu}^+$ as the collection of \textit{causal} parameter updates that this experience induces throughout the network: 
\begin{equation}
    \bm{\mu}^+ = \{ \bm{W}^{+(1)}, \dots, \bm{W}^{+(L)} \}.
\label{eq:global_engram}
\end{equation}
To make the engram‑identification problem tractable, we decompose it into a layer-wise sub-problems. For each layer $l$, the objective is to recover the  \textbf{synaptic engram} $\bm{W}^{+(l)}$, the layer‑specific component of the causal memory trace.

\subsection{Retrospective Formulation in Linear $Z$-Space}
We formalize the synaptic engram in the pre-activation space $\bm{Z} = \bm{W}\bm{X}$, where the effect of weight updates can be isolated without non-linear distortions.\footnote{Layer superscripts are omitted for brevity; bias terms are absorbed into $\bm{W}$ via input augmentation.} Because the activation function $\sigma(\cdot)$ is fixed, matching the internal state $\bm{Z}$ provides a sufficient—and stricter—condition for reproducing the functional output $\bm{Y} = \sigma(\bm{Z})$. 

Engram identification is performed \textbf{retrospectively} on the fully trained network ($t=1$). Utilizing the input statistics $\bm{X}$ at this final state grounds the analysis in the \textbf{stable representational manifold} where the target memory resides, ensuring that the isolated engram reflects the model's converged causal structure.

\begingroup\color{black}%
\subsection{Algebraic Constraint Framework}
\endgroup

We identify $\bm{W}^+$ by contrasting the \textbf{observed states} produced during natural learning with the \textbf{intervened states} generated through \textcolor{black}{algebraic} manipulation. This contrast isolates the specific weight changes supporting the expression of the learned memory (as detailed by the two state conditions in Table~\ref{tab:compact_causal}).

\begin{table}[!t]
\definecolor{Emerald}{HTML}{00A86B}
\definecolor{Sufficiency}{HTML}{E6F6F0}
\definecolor{Necessity}{HTML}{F2F2F2}
\centering
\resizebox{\columnwidth}{!}{%
\renewcommand{\arraystretch}{1.4}
\setlength{\tabcolsep}{10pt}

\begin{tabular}{@{}rrr@{}}
\toprule
\textbf{Target} & \textbf{Baseline ($t=0$)} & \textbf{Learned ($t=1$)} \\ 
\midrule
\rowcolor{gray!10} \multicolumn{3}{@{}l}{\textit{1. Observed States (Natural Learning)}} \\
Target $(\bm{X}^+)$ & $\bm{W}_0 \bm{X}^+ = \bm{Z}_0^+$ & $\bm{W}_1 \bm{X}^+ = \bm{Z}_1^+$ \\
Ref. $(\bm{X}^-)$   & $\bm{W}_0 \bm{X}^- = \bm{Z}_0^-$ & $\bm{W}_1 \bm{X}^- = \bm{Z}_1^-$ \\
\midrule
\rowcolor{gray!10} \multicolumn{3}{@{}l}{\textit{2. Intervened States (Causal Manipulation)}} \\
Target $(\bm{X}^+)$ & $^\dagger$ $(\bm{W}_1 - \textcolor{Emerald}{\bm{W}^+})\bm{X}^+ \rightarrow \bar{\bm{Z}}_0^+$ & $^\ddagger$ $(\bm{W}_0 + \textcolor{Emerald}{\bm{W}^+})\bm{X}^+ \rightarrow \bar{\bm{Z}}_1^+$ \\
Ref. $(\bm{X}^-)$   & $(\bm{W}_0 + \textcolor{Emerald}{\bm{W}^+})\bm{X}^- \rightarrow \bar{\bm{Z}}_0^-$ & $(\bm{W}_1 - \textcolor{Emerald}{\bm{W}^+})\bm{X}^- \rightarrow \bar{\bm{Z}}_1^-$ \\
\bottomrule
\end{tabular}
}
\vspace*{2mm}
\caption{\textbf{Operationalizing Engrammatic Axioms as Optimization Constraints.} By contrasting observed states with intervened states, we establish formal linear-algebraic constraints from four neuroscience engram criteria: \textbf{i)~Specificity} ($X^\pm$, target vs. ref.), \textbf{ii)~Reactivation} ($\bar{Z}^\pm_{\{0,1\}}$, internal representation),  \textbf{iii)~Sufficiency}($\ddagger$, gain-of-function) and \textbf{iv)~Necessity} ($\dagger$, loss-of-function).}
\vspace*{-4mm}
\label{tab:compact_causal}
\end{table}

\textcolor{black}{The formal constraints specify that the synaptic engram $\bm{W}^+$ isolates the component of $\Delta \bm{W}$ that selectively reconstructs the target memory $\bm{X}^+$ while leaving the reference subspace inert.}
To meet \textbf{sufficiency}, injecting the engram must reproduce the learned internal state (yielding, $\bar{\bm{Z}}_{1}^{+} \approx \bm{Z}_{1}^{+}$), while ablating it must restore the naïve pre-learning state ($\bar{\bm{Z}}_{0}^{+} \approx \bm{Z}_{0}^{+}$). In parallel, strict \textbf{specificity} demands that the engram remain inert for all reference experiences $\bm{X}^-$, such that neither injection nor ablation perturbs their naturally observed states ($\bar{\bm{Z}} \approx \bm{Z}$).

\subsection{Deriving the Optimization Objective}
We seek the optimal synaptic engram $\bm{W}^+$ that minimizes the discrepancy between the intervened and observed states across all conditions. {By construction, this optimal engram is the loss‑minimizing solution that best satisfies sufficiency, necessity, and specificity within the trained manifold and can be defined as the sum of squared Frobenius norms:} 
\begin{align*}
\mathcal{L}(\bm{W}^{+})
&=
\underbrace{\bigl\|\bar{\bm{Z}}_{1}^{+}-\bm{Z}_{1}^{+}\bigr\|_{F}^{2} + \bigl\|\bar{\bm{Z}}_{0}^{+}-\bm{Z}_{0}^{+}\bigr\|_{F}^{2}}_{\text{Causal Reactivation}} \nonumber \\
\quad &+
\underbrace{\bigl\|\bar{\bm{Z}}_{0}^{-}-\bm{Z}_{0}^{-}\bigr\|_{F}^{2} + \bigl\|\bar{\bm{Z}}_{1}^{-}-\bm{Z}_{1}^{-}\bigr\|_{F}^{2}}_{\text{Target Specificity}}.
\end{align*}
Substituting the intervened state definitions from Table~1, the objective collapses into a \textbf{canonical dual-form}:
\begin{equation}
    \mathcal{L}({\bm{W}^+}) = 2 \| ({\bm{W}^+} - \Delta\bm{W})\bm{X}^+ \|_F^2 + 2 \| {\bm{W}^+}\bm{X}^- \|_F^2.
\label{eq:final_objective}
\end{equation}
Crucially, Eq.~\eqref{eq:final_objective} demonstrates that engram identification is equivalent to isolating the component of the layer‑wise update $\Delta \bm{W}$ that accounts for the target memory while remaining orthogonal to the reference subspace. This provides a clean mathematical translation of the neuroscientific engram definition into a scalable optimization problem.

\begin{proposition}[\textbf{Spectral AI Engram Estimator}]
Under the hard constraint of specificity ($\bm{W}^{+} \bm{X}^{-} = \bm{0}$), the optimal synaptic engram $\bm{W}^{+}$ that minimizes the discrepancy in Eq.~\eqref{eq:final_objective} is given by the minimum-norm closed-form:
\begin{equation}
\bm{W}^{+} = \Delta\bm{W}\,\bm{\Sigma}^{+}\,\bigl(\bm{\Sigma}^{+} + \bm{\Sigma}^{-}\bigr)^{\dagger}
\label{eq:main_closed_form}
\end{equation}
where $\bm{\Sigma}^{+} = \bm{X}^{+}\bm{X}^{+\top}$ and $\bm{\Sigma}^{-} = \bm{X}^{-}\bm{X}^{-\top}$ are uncentered covariance matrices.
\end{proposition}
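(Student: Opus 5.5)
The plan is to reduce the constrained problem to a single linear system in $\bm{W}^{+}$ and read off its minimum-norm solution through the Moore--Penrose pseudoinverse. Stack the inputs as $\bm{X} = [\bm{X}^{+}\;\bm{X}^{-}]$, so that $\bm{X}\bm{X}^{\top} = \bm{\Sigma}^{+} + \bm{\Sigma}^{-} =: \bm{S}$. The hard specificity constraint $\bm{W}^{+}\bm{X}^{-} = \bm{0}$ removes the second term of Eq.~\eqref{eq:final_objective}. What remains is to minimize $\|(\bm{W}^{+} - \Delta\bm{W})\bm{X}^{+}\|_F^2$ over the affine set $\{\bm{W}^{+} : \bm{W}^{+}\bm{X}^{-} = \bm{0}\}$.

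First I will argue that the infimum is $0$. This requires a transversality condition: the column spaces of $\bm{X}^{+}$ and $\bm{X}^{-}$ must intersect only in $\{\bm{0}\}$. Under that condition, the map $\bm{v} \mapsto (\bm{X}^{+\top}\bm{v}, \bm{X}^{-\top}\bm{v})$ restricted to $\operatorname{range}(\bm{S})$ lets us prescribe the row-actions on the two subspaces independently. Hence the joint system
\[
\bm{W}^{+}[\bm{X}^{+}\;\bm{X}^{-}] = [\Delta\bm{W}\bm{X}^{+}\;\;\bm{0}]
\]
is consistent. Its solution set is exactly the set of minimizers of the constrained problem. The minimum-Frobenius-norm element is then
\[
\bm{W}^{+} = [\Delta\bm{W}\bm{X}^{+}\;\;\bm{0}]\,\bm{X}^{\dagger}.
\]
Using the identity $\bm{X}^{\dagger} = \bm{X}^{\top}(\bm{X}\bm{X}^{\top})^{\dagger}$, this gives
\[
[\Delta\bm{W}\bm{X}^{+}\;\;\bm{0}]\,\bm{X}^{\top}\bm{S}^{\dagger} = \Delta\bm{W}\,\bm{X}^{+}\bm{X}^{+\top}\bm{S}^{\dagger} = \Delta\bm{W}\,\bm{\Sigma}^{+}(\bm{\Sigma}^{+}+\bm{\Sigma}^{-})^{\dagger},
\]
which is Eq.~\eqref{eq:main_closed_form}.

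As a direct check, I will also verify the two constraints algebraically. The key tool is the parallel sum $\bm{\Sigma}^{+}\bm{S}^{\dagger}\bm{\Sigma}^{-}$, whose range equals $\operatorname{range}(\bm{\Sigma}^{+}) \cap \operatorname{range}(\bm{\Sigma}^{-})$; this is the Anderson--Duffin result. Under transversality the parallel sum vanishes, so $\bm{W}^{+}\bm{\Sigma}^{-} = \bm{0}$. Since $\operatorname{range}(\bm{\Sigma}^{-}) = \operatorname{range}(\bm{X}^{-})$, this yields $\bm{W}^{+}\bm{X}^{-} = \bm{0}$. For the target term,
\[
\bm{\Sigma}^{+}\bm{S}^{\dagger}\bm{\Sigma}^{+} = \bm{\Sigma}^{+}\bm{S}^{\dagger}\bm{S} - \bm{\Sigma}^{+}\bm{S}^{\dagger}\bm{\Sigma}^{-} = \bm{\Sigma}^{+},
\]
because $\operatorname{range}(\bm{\Sigma}^{+}) \subseteq \operatorname{range}(\bm{S})$. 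Therefore $\bm{W}^{+}\bm{X}^{+} = \Delta\bm{W}\bm{X}^{+}$ and the loss is zero. Minimality follows because the rows of $\bm{W}^{+}$ lie in $\operatorname{range}(\bm{S}) = \operatorname{range}(\bm{X})$. Any other minimizer differs from $\bm{W}^{+}$ by a matrix annihilating $\bm{X}$, whose rows are orthogonal to $\operatorname{range}(\bm{X})$. This gives a Pythagorean decomposition of the norm.

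The main obstacle is the transversality hypothesis, which the statement leaves implicit. If $\operatorname{range}(\bm{X}^{+}) \cap \operatorname{range}(\bm{X}^{-}) \neq \{\bm{0}\}$, the hard constraint and exact reactivation conflict. The true constrained minimizer is then
\[
\Delta\bm{W}\,\bm{X}^{+}\bigl(\bm{P}^{\perp}\bm{X}^{+}\bigr)^{\dagger}\bm{P}^{\perp},
\]
where $\bm{P}^{\perp}$ projects onto $\operatorname{range}(\bm{X}^{-})^{\perp}$. This generally differs from Eq.~\eqref{eq:main_closed_form}, which in that case only satisfies specificity approximately. I would therefore state the proposition under the transversality assumption, which holds generically when $\bm{X}^{+}$ and $\bm{X}^{-}$ together have fewer columns than the input dimension. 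I would also remark that in the degenerate case Eq.~\eqref{eq:main_closed_form} is instead the minimum-norm minimizer of the unconstrained objective, Eq.~\eqref{eq:final_objective}, with equal weights. That follows from its normal equations $\bm{W}^{+}\bm{S} = \Delta\bm{W}\bm{\Sigma}^{+}$.
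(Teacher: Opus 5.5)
Your proposal is correct and follows the paper's own route: stack $\bm{X}^{+}$ and $\bm{X}^{-}$, take the minimum-norm solution $[\Delta\bm{W}\bm{X}^{+}\;\;\bm{0}]\,\bm{\mathcal{X}}^{\dagger}$ of the joint system, and collapse it with $\bm{\mathcal{X}}^{\dagger}=\bm{\mathcal{X}}^{\top}(\bm{\mathcal{X}}\bm{\mathcal{X}}^{\top})^{\dagger}$; the paper frames this with a Lagrangian and KKT conditions but never actually solves for the multiplier. Your transversality hypothesis is the consistency condition the paper silently assumes when it passes from the KKT conditions to the block system, and your remark that otherwise the formula is the minimum-norm minimizer of the unconstrained objective agrees with the paper's own later description of $\bm{P}^{+}$ as a soft rather than hard projector.
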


\textit{Proof Sketch.} We introduce a Lagrange-multiplier matrix to enforce the null-space constraint and apply the Karush–Kuhn–Tucker (KKT) conditions. By exploiting the property of the Moore-Penrose pseudoinverse $\bm{\mathcal{X}}^{\dagger} = \bm{\mathcal{X}}^{\top}(\bm{\mathcal{X}}\bm{\mathcal{X}}^{\top})^{\dagger}$, we decouple the memory footprint from the dataset size $N$ and reduce space complexity from linear $\mathcal{O}(Nd)$ to a constant $\mathcal{O}(d^2)$. Such scalability is essential for analyzing modern large-scale models where $N\gg d$. \textcolor{black}{While the constraint $\bm{W}^{+}\bm{X}^{-} = \bm{0}$ is imposed strictly in the Lagrangian, the resulting operator $\bm{P}^{+} = \bm{\Sigma}^{+}(\bm{\Sigma}^{+}+\bm{\Sigma}^{-})^{\dagger}$ acts as a soft, spectrally-weighted filter rather than a hard projector; we analyze its spectral structure in Section~\ref{sec:soft_projection}.} A full derivation is provided in {Appendix~\ref{app:optimization}}.

Remarkably, our analytical estimator relies solely on local input statistics, effectively decoupling engram extraction from the entire output hierarchy $(Y,Z)$. This formulation eliminates the need for backpropagation or iterative optimization, recasting memory identification from a stochastic search into a deterministic, one-shot spectral estimation.

\subsection{Practical Instantiation: Retrospective Estimation}
\label{sec:retrospective}
Although the total update is defined as $\Delta \bm{W} = \bm{W}_1 - \bm{W}_0$, random initialization $\bm{W}_0$ typically acts as a high-entropy baseline. Under the \textit{tabula rasa} (blank slate) assumption~\cite{turing1950computing}, this initial state contributes negligible structural information to the learned features. We therefore instantiate the update retrospectively as $\Delta \bm{W} \approx \bm{W} - \mathbf{0}$, where $\bm{W}$ represents the fully trained weights. This leads to the final estimator in our experiments from Eq.\eqref{eq:main_closed_form} (See Fig.~\ref{fig_engram_method}):
\begin{equation}
\boxed{
    \bm{W}^{+} = \bm{W} \bm{\Sigma}^{+} (\bm{\Sigma}^{+} + \bm{\Sigma}^{-})^{\dagger}
}
\label{eq:retrospective_estimator}
\end{equation}
By grounding the estimation in the converged weights $\bm{W}$, we isolate the engram directly from the stable representational manifold achieved after learning. 
\textcolor{black}{Beyond simplification, this instantiation has a structural consequence: it decouples the layer-wise sub-problems, enabling Eq.~\eqref{eq:retrospective_estimator} to be solved in a single forward pass across all layers in parallel; we formalize this property and contrast it with the delta-weight instantiation $\Delta\bm{W} = \bm{W}_{\text{ft}} - \bm{W}_{\text{pt}}$ in Appendix~\ref{app:tabula_rasa}.}
\begin{figure}[!t]
    \centering 
    \vspace{-2mm}
    \includegraphics[width=0.8\columnwidth]{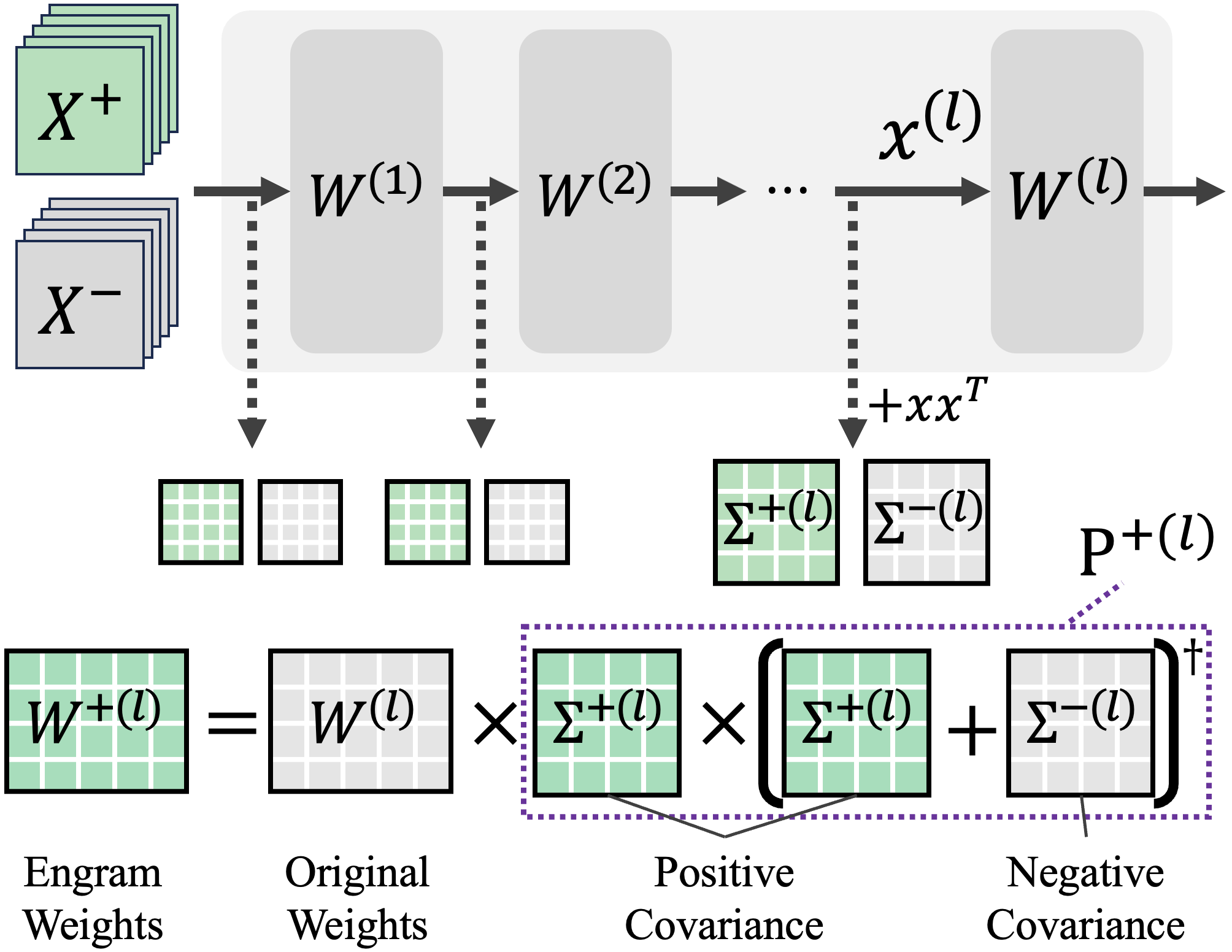} 
    \caption{\textbf{Mechanics of the Engram Method.} We compute the Engram weights as $W^{+(l)} = W^{(l)} \mathbf{P}^{+(l)}$, where the projection matrix $\mathbf{P}^{+(l)} = \bm{\Sigma}^{+(l)} (\bm{\Sigma}^{+(l)} + \bm{\Sigma}^{-(l)})^{\dagger}$ acts as a surgical filter, where covariances are accumulated during a single forward pass. }
    \label{fig_engram_method}
    \vspace{-2mm}
\end{figure}

Ultimately, the total \textbf{AI engram} $\bm{\mu}^+$ associated with experience $X^+$ is encapsulated as the ensemble of isolated layer‑wise causal substrates: $\bm{\mu}^+ = \{ \bm{W}^{(l)+} \}_{l=1}^L$. This structural assembly reveals that engram isolation is equivalent to a \textbf{global linear operation} on the parameter manifold, governed by a block-diagonal spectral projector $\mathbf{P}^+ = \text{diag}(\mathbf{P}^{(1)}, \dots, \mathbf{P}^{(L)})$ :
\begin{equation}
    \bm{\mu}^+ = \bm{\mu} \mathbf{P}^+.
\label{eq:global_operator}
\end{equation}
This formalization establishes a fundamental theoretical result: despite the inherent non-linearity of the network's forward pass, the
\textcolor{black}{\textbf{functional topology of memory}} resides in a linearizable subspace of the weight space. This linearity provides the algebraic foundation for engram compositionality, allowing the zero-shot synthesis or erasure of complex knowledge through simple spectral arithmetic. We provide the formal operator‑theoretic derivation and proof of this global resolution in {Appendix~\ref{app:operator_theory}}.

\subsection{Engrammatic Decomposition: Combinatorial Unlearning}\label{sec:combinatorial}
For a pre-trained weight $\bm{W}$ encoding concepts $\mathcal{C} = \{c_1, \dots, c_n\}$, we define the individual engram $\bm{W}_i^+$ by partitioning the total covariance space (see Fig.~\ref{fig_combinatorial_scalability}):
\begin{equation}
    \bm{W}_i^+ = \bm{W} \mathbf{\Sigma}_i \left( \sum_{j=1}^n \mathbf{\Sigma}_j \right)^\dagger.
    \label{eq:engram_decomposition}
\end{equation}
Since each $\mathbf{\Sigma}_i$ is computed independently and remains fixed, we can leverage the \textbf{linear additivity} of these components for zero-shot combinatorial unlearning. 

\begin{figure}[!t]
\centering
\includegraphics[width=\columnwidth]{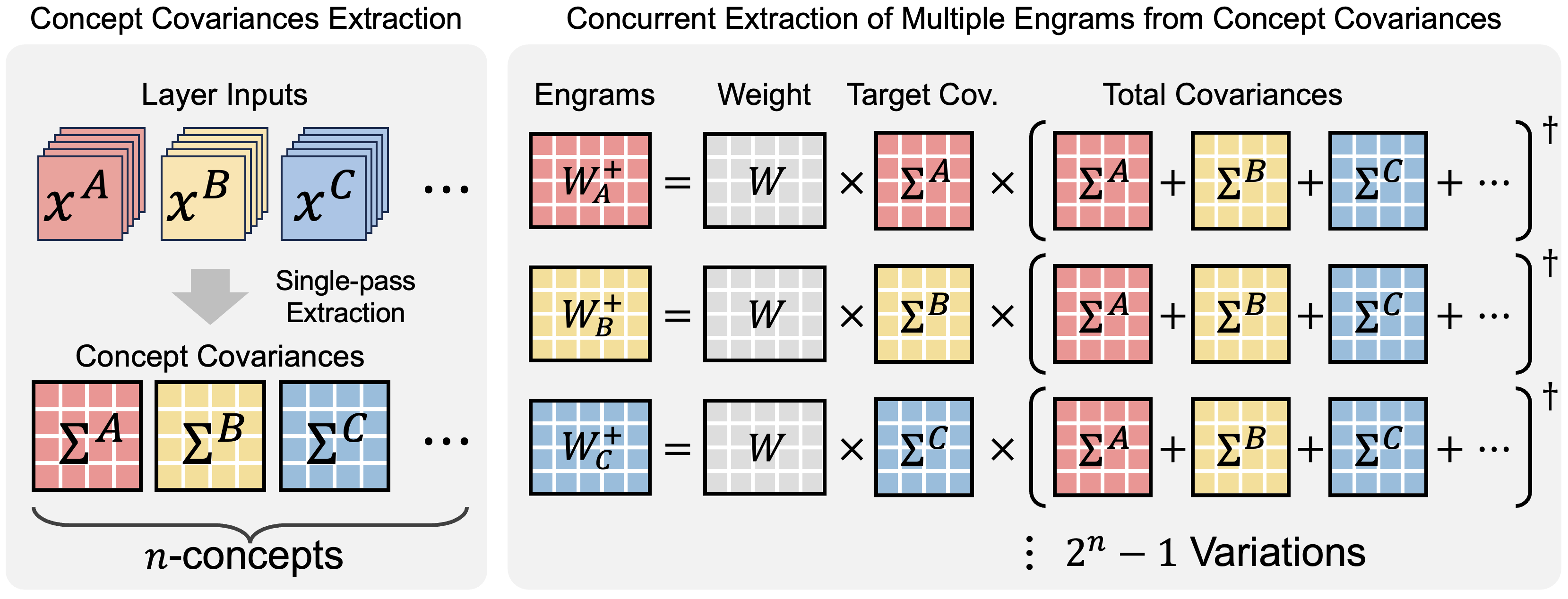}
\vspace{-4mm}
\caption{\textbf{Illustration of Engrammatic Decomposition.} Linear additivity of spectral subspaces enables the zero-shot synthesis of $2^n-1$ unique memory states from $n$ extracted engrams, mastering the combinatorial explosion of unlearning scenarios (see Appendix~\ref{app:unlearned_memory_states}).}
\label{fig_combinatorial_scalability}
\vspace{-2mm}
\end{figure}

For any subset $\mathcal{U} \subseteq \mathcal{C}$, the unlearned state is synthesized via simple linear arithmetic, bypassing the $\mathcal{O}(2^n \cdot \mathcal{T}_{\text{unlearn}})$ complexity of iterative methods with a linear $\mathcal{O}(n \cdot \mathcal{T}_{\text{stat}})$ pre-computation. With this we define \textbf{combinatorial unlearning} as:
\begin{equation}
    \text{Engram} (\alpha) := \bm{W} - \alpha \sum_{c_k \in \mathcal{U}} \bm{W}_k^+,
    \label{eq:engrammatic_decomposition}
\end{equation}
where $\alpha$ controls the edit strength. This algebraic compositionality enables instant, on-demand unlearning by treating memories as independent, manipulable units.

\section{Experimental Validation}
\label{experiments}

We validate the proposed {engrammatic decomposition} through three experiments: \textbf{(1) versatility and scalability} across different architectures and datasets, \textbf{(2) quantitative comparisons} with state-of-the-art baselines, and \textbf{(3) qualitative demonstrations} of engram arithmetic.

\subsection{Scalability and Versatility: Engram Unlearning}
The engrammatic decomposition derived in Eq.~\eqref{eq:engrammatic_decomposition} enables the simultaneous isolation and causal validation of multiple memory traces without the cumulative overhead of iterative optimization. Our framework \textbf{scales} seamlessly to high-dimensional concept spaces, effectively neutralizing the computational burden of massive label sets. It effortlessly accommodates the combinatorial complexity of removing any subset of concepts—whether scaling to the 100 classes of CIFAR-100 or the 1,000 categories of ImageNet-1k (Fig.~\ref{fig_vit_imagenet1k_classwise_accuracy} \& \ref{fig_resnet18_cifar100_classwise_accuracy})—by synthesizing all potential unlearned states from a single, unified linear decomposition.

\begin{figure}[!t]
    \vspace{-2mm}
    \centering 
    \includegraphics[width=0.9\columnwidth]{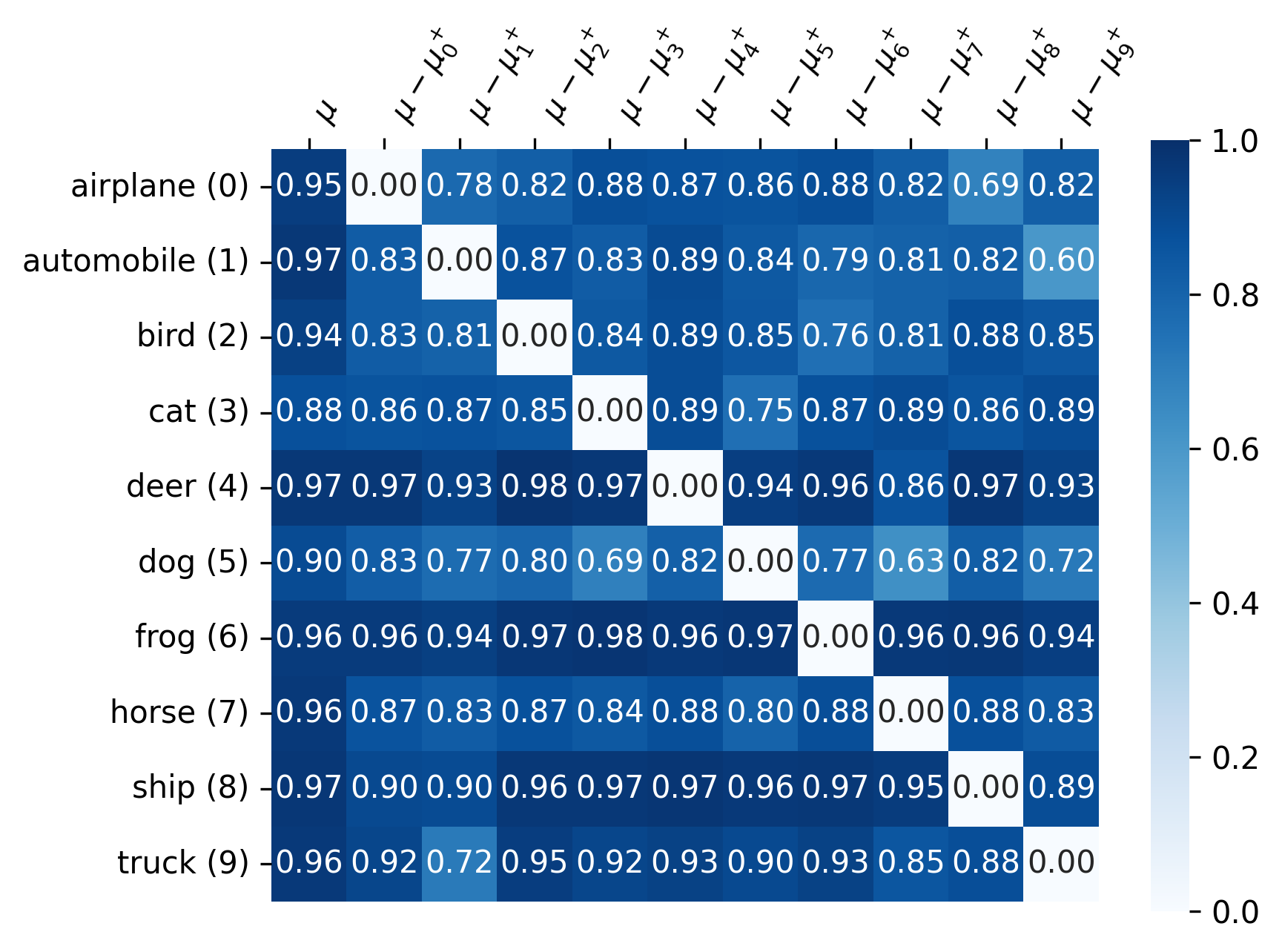} 
    \caption{\textbf{Surgical Precision on CIFAR-10 (ResNet-18).} 
    The diagonal drop in accuracy confirms that ablating a specific engram selectively erases the target concept, while the off-diagonal stability indicates that reference classes remain intact.}
    \label{fig_resnet18_cifar10_classwise_accuracy}
    \vspace{-2mm}
\end{figure}
Furthermore, we demonstrate the \textbf{universal versatility} of our framework across a comprehensive architectural spectrum (See {Appendix~\ref{app_extended_supervised_settings}})—ranging from fundamental MLPs and CNN (ResNet) to modern Vision Transformers (ViT) and Convolutional Autoencoders (ConvAE). 
In the supervised domain, Fig.~\ref{fig_resnet18_cifar10_classwise_accuracy} confirms that ablating an engram in a ResNet-18 on CIFAR-10 cleanly removes the target class while preserving unrelated concepts. 
This behavior extends to the unsupervised generative regime: as shown in Fig.~\ref{fig_convae_mnist}, removing an engram from a ConvAE selectively impairs the reconstruction of specific morphological features (e.g., a specific digit) even in the absence of explicit labels. 
These results suggest that the Engram functions as a fundamental, architecture-agnostic unit of memory, effective across both discriminative and generative paradigms.

\begin{figure}[!t]
    \centering 
    \includegraphics[width=\columnwidth]{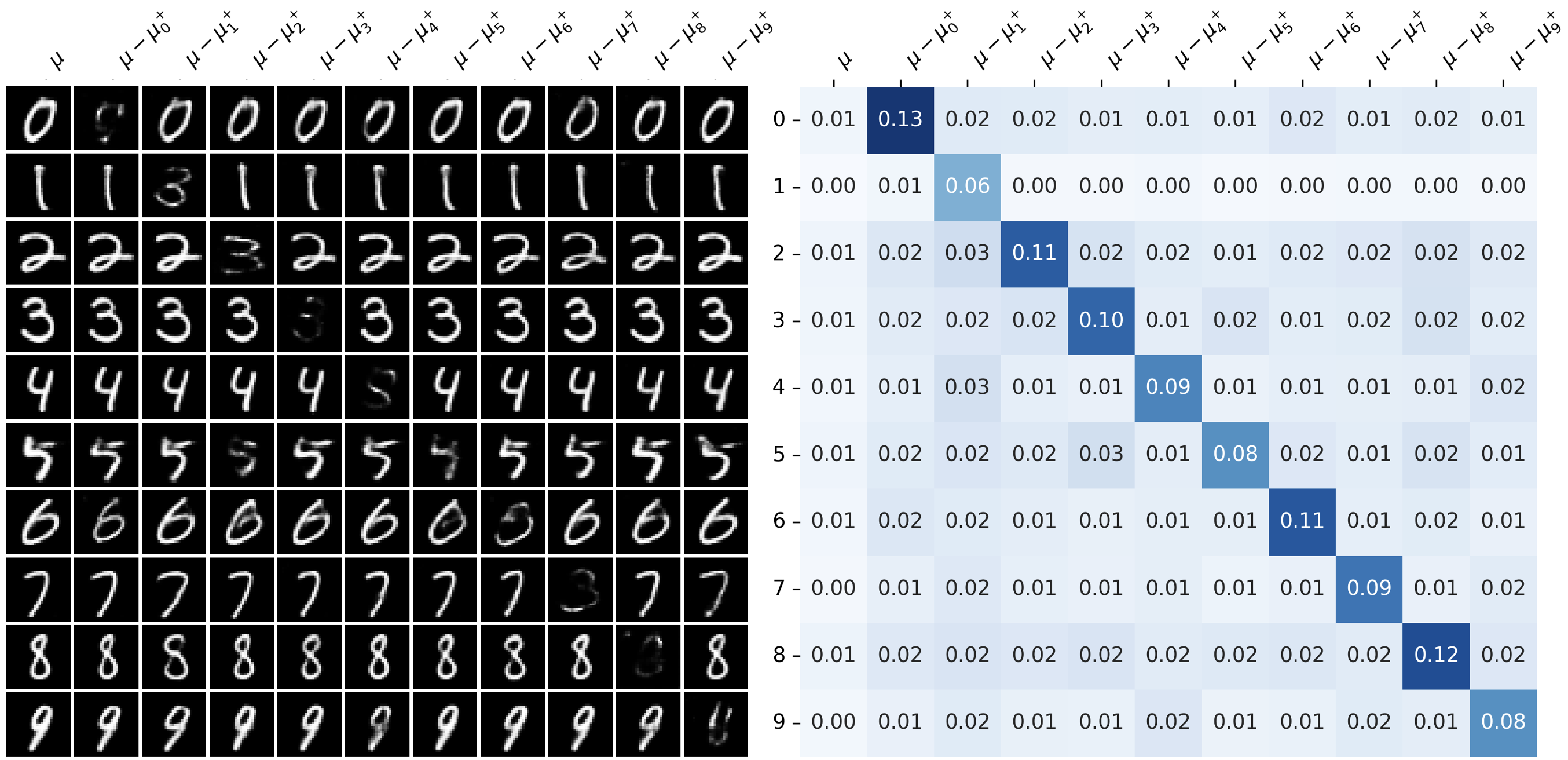} 
    \caption{\textbf{Selective Erasure in ConvAE (MNIST).} Ablating target engrams ($\mu - \mu^+$) selectively impairs reconstruction (Left), as confirmed by the specific increase in test-set MSE (Right).}
    \label{fig_convae_mnist}
    \vspace{-2mm}
\end{figure}

\subsection{Quantitative Comparisons: Engram Unlearning}
We benchmark {Engram} against widely used machine unlearning methods on the class-wise unlearning problem.
We evaluate on CIFAR-10 and CIFAR-100 using ResNet-18/50, targeting Class 0 for unlearning (See experimental details in Appendix~\ref{app:experimental_setup}).
Table~\ref{tab:classwise_unlearning} summarizes the quantitative comparisons. Engram achieves promising performance in the Tug-of-War (ToW) metric~\cite{zhao2024WhatMakesUnlearning}, demonstrating superior output-level unlearning. Importantly, this efficacy extends to the latent space; the representation-level metrics (DA and NMI)~\cite{seo2025RevisitingMachineUnlearning} confirm that our method effectively dissolves the structural traces of the forget set, rather than merely masking the output. (For detailed formulations of these metrics, please refer to Appendix~\ref{app:unlearning_metrics}.)

\begin{table}[!b]
\centering
\caption{Class-wise unlearning comparison on CIFAR-10/100. We report ToW($\uparrow$), DA($\uparrow$), and NMI($\downarrow$). Values in parentheses indicate the gap with the retrain models. Best values are \textbf{bolded}, and second-best values are \underline{underlined}. Shaded rows indicate our proposed method and $\alpha_{\text{best}}$ is the best $\alpha$ from grid search.}
\label{tab:classwise_unlearning}
\begin{small}
\begin{sc}
\resizebox{\columnwidth}{!}{%
\begin{tabular}{lccc}
\toprule
\multicolumn{4}{c}{\textbf{CIFAR-10 (ResNet18)}} \\
\midrule
Method & ToW$\uparrow$ & DA$\uparrow$ & NMI$\downarrow$ \\
\midrule
Retrain & 0.999 & 0.987 & 0.410 \\
Fine-tune & 0.952 (0.047) & 0.973 (0.014) & 0.547 (0.137) \\
NegGrad & 0.711 (0.288) & 0.970 (0.017) & 0.037 (0.373) \\
NegGrad+ & 0.936 (0.063) & 0.942 (0.045) & 0.244 (0.166) \\
$l_1$-Sparse & \underline{0.956 (0.043)} & \underline{0.975 (0.012)} & \underline{0.515 (0.105)} \\
Random Label & 0.916 (0.083) & 0.851 (0.136) & 0.996 (0.586) \\
SalUn & 0.878 (0.121) & 0.833 (0.154) & 0.911 (0.501) \\
\midrule
\rowcolor{gray!15} Engram ($\alpha=1$) & 0.930 (0.069) & \textbf{0.992 (0.005)} & \textbf{0.379 (0.031)} \\
\rowcolor{gray!15} Engram ($\alpha_{\text{best}}$) & \textbf{0.984 (0.015)} & 0.958 (0.029) & 0.611 (0.201) \\
\bottomrule
\toprule
\multicolumn{4}{c}{\textbf{CIFAR-100 (ResNet50)}} \\
\midrule
Method & ToW$\uparrow$ & DA$\uparrow$ & NMI$\downarrow$ \\
\midrule
Retrain & 0.985 & 0.734 & 0.525 \\
Fine-tune & 0.860 (0.125) & 0.768 (0.034) & 0.585 (0.060) \\
NegGrad & 0.593 (0.392) & 0.677 (0.057) & 0.149 (0.376) \\
NegGrad+ & 0.894 (0.091) & 0.632 (0.102) & 0.120 (0.405) \\
$l_1$-Sparse & 0.837 (0.148) & 0.770 (0.036) & \textbf{0.546 (0.021)} \\
Random Label & 0.964 (0.021) & 0.799 (0.065) & 0.778 (0.253) \\
SalUn & 0.980 (0.005) & \textbf{0.750 (0.016)} & 0.634 (0.109) \\
\midrule
\rowcolor{gray!15} Engram ($\alpha=1$) & \underline{0.988 (0.003)} & \underline{0.757 (0.023)} & 0.573 (0.048) \\
\rowcolor{gray!15} Engram ($\alpha_{\text{best}}$) & \textbf{0.983 (0.002)} & 0.767 (0.033) & \underline{0.496 (0.029)} \\
\bottomrule
\end{tabular}
}
\end{sc}
\end{small}
\end{table}

We also compute Centered Kernel Alignment (CKA)~\cite{kornblith2019SimilarityNeuralNetwork, davari2022ReliabilityCKASimilarity,kim2026AreWeTruly} between unlearned models and both the original and retrained models. Figure~\ref{fig_cka_scatter_resnet18_cifar10} visualizes these relationships: the ideal unlearned model occupies the top-left region, indicating high similarity to the retrained model and low similarity to the original. Engram method with best $\alpha$ consistently positions in this region and is closest to the retrained model.

\begin{figure}[!t]
\centering
\includegraphics[width=0.85\columnwidth]{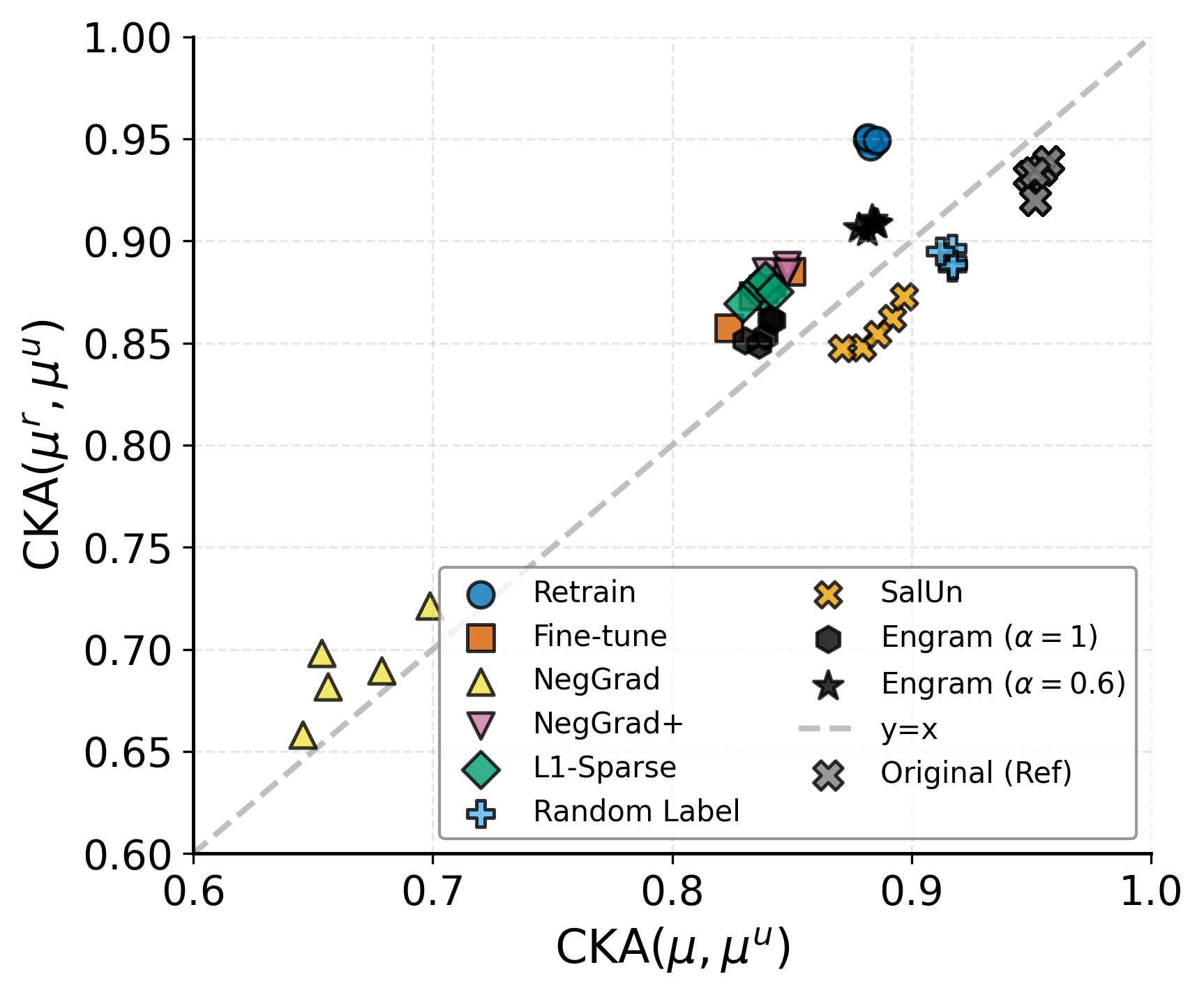}
\vspace{-0.2cm}
\caption{CKA similarity of the original (x-axis) versus the retrained (y-axis) models; ideal unlearning appears toward top-left.}
\label{fig_cka_scatter_resnet18_cifar10}
\vspace{-2mm}
\end{figure}

\subsection{Qualitative Demonstrations: Engram Arithmetics}

\paragraph{Continuous Semantic Control (Scalar Multiplication).}
If the engram $\bm{W}_c^+$ represents a true causal direction in the weight space, traversing along this vector should yield smooth semantic interpolation. 
As shown in Fig.~\ref{fig_celeba_main} (Top), by scaling the engram with a coefficient $\alpha$ (i.e., $\bm\mu' = \bm\mu - \alpha \bm\mu_c^+$), we achieve fine-grained control over attribute intensity (e.g., 'Eyeglasses' or 'Bangs'). 
This linearity mirrors the property of \textit{Linear Mode Connectivity}~\cite{frankle2020linear}, suggesting that the engram defines a stable, linearizable trajectory for semantic manipulation without retraining (See more examples in Fig.~\ref{fig_celeba_appendix}).

\paragraph{Compositional Vector Arithmetic (Vector Addition).}
We further validate the combinatorial nature of these traces through \textbf{Engram Arithmetics}. 
Analogous to the semantic algebra found in word embeddings (e.g., $\text{King} - \text{Man} + \text{Woman} \approx \text{Queen}$)~\cite{mikolov2013efficient}, we observe that engrams in the weight space exhibit similar additive compositionality.
Figure~\ref{fig_engram_arithmetic_main} demonstrates that multiple attributes can be simultaneously manipulated via vector addition:
\begin{equation}
    \bm\mu_{\text{new}} = \bm\mu \pm \bm\mu_{\text{Glasses}}^+ \mp \bm\mu_{\text{Goatee}}^+.
\end{equation}
This observation aligns with recent findings in \textit{Task Arithmetic}~\cite{ilharco2023editing}, confirming that distinct semantic concepts are encoded as orthogonal task vectors that can be linearly superimposed without interference.

\begin{figure}[!t]
\vspace{-0.3cm} 
\centering 
\includegraphics[width=0.99\columnwidth]{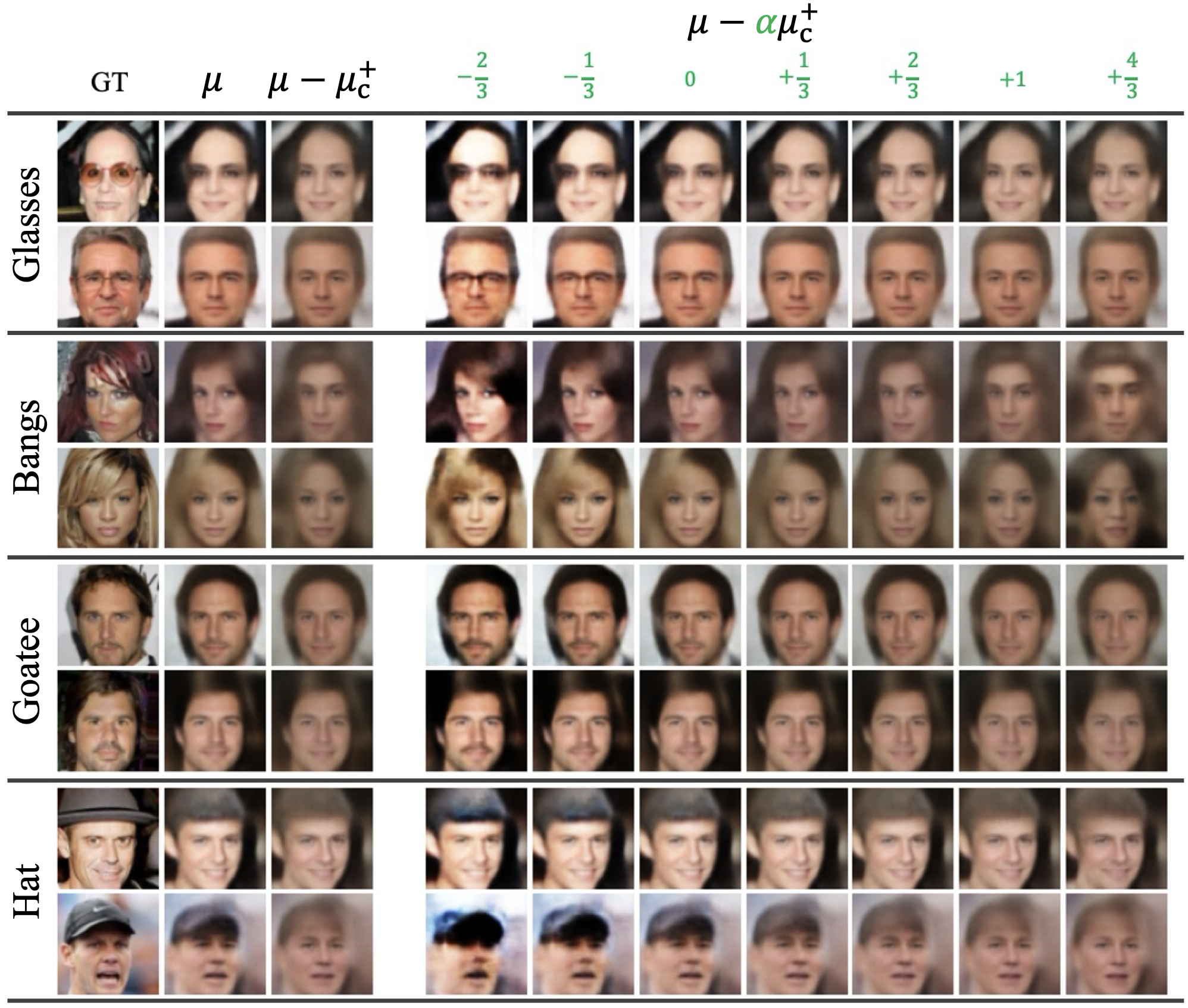} 
\caption{\textbf{Semantic Specificity Control in WAE (CelebA).} 
    Ablating attribute-specific engrams ($\mu - \mu_c^+$) removes fine-grained target features (e.g., eyeglasses, bangs) while strictly preserving facial identity. 
    The slider ($\mu - \alpha \mu_c^+$) demonstrates the linear compositionality of identified engrams, allowing for fine-grained, continuous manipulation of semantic intensity without retraining.}
    \label{fig_wae_celeba}
\label{fig_celeba_main}
\end{figure}

\begin{figure}[!t]

\centering 
\includegraphics[width=1\columnwidth]{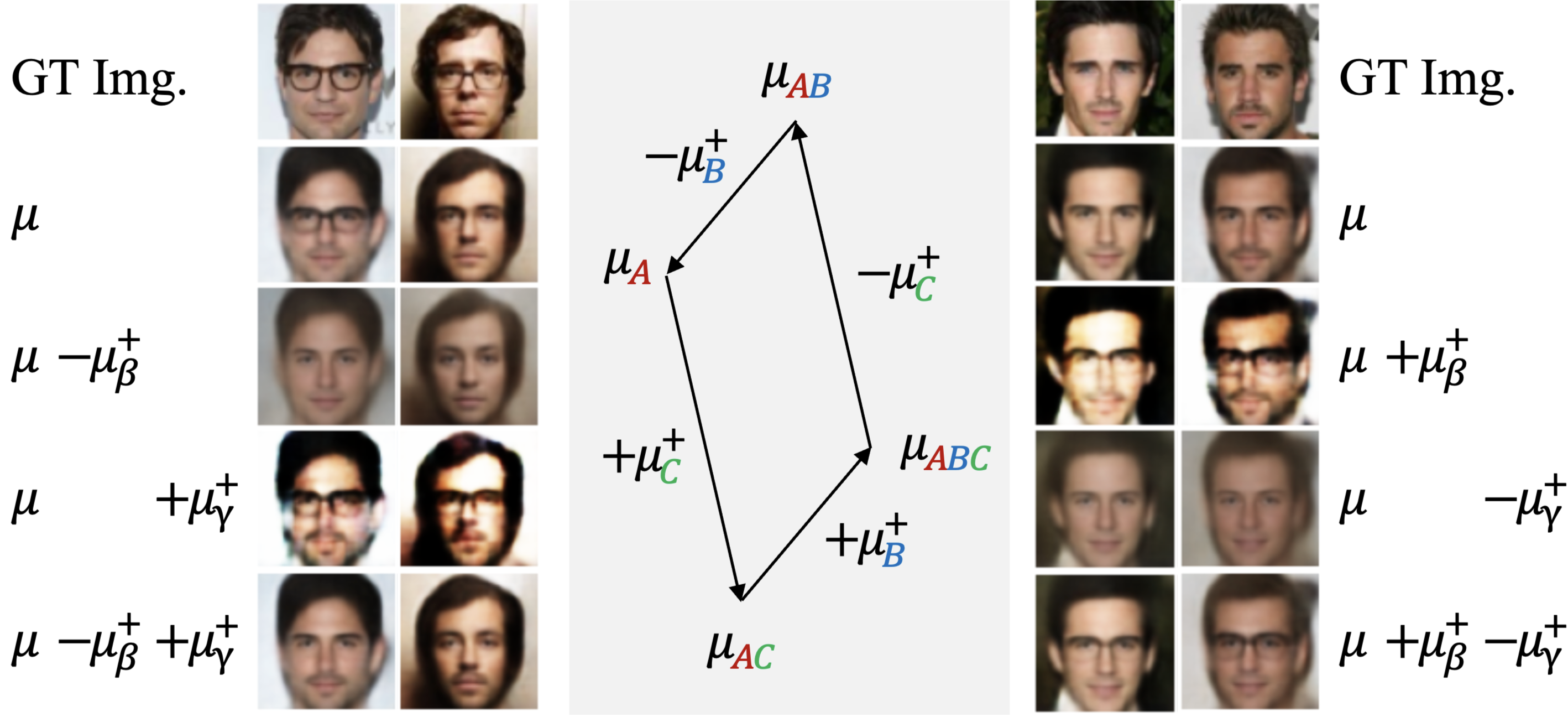} 
\caption{\textbf{Engram Arithmetics} 
Examples showing bidirectional arithmetic operation for reconstruction with identified engrams of   $\mu^+_{\beta}$~(Glasses) and $\mu^+_{\gamma}$~(Goatee). GT, Ground Truth.} 
\vspace{-0.3cm} 
\label{fig_engram_arithmetic_main}
\end{figure}

\section{Geometric Analysis of Engram}
\label{theoretical_analysis}

Our closed-form solution was derived from biological principles via Lagrangian optimization. Remarkably, the same solution emerges from a purely geometric perspective through the Fisher information metric \cite{Amari1998}. This convergence reveals that Engram is not merely a biologically-inspired heuristic, but captures \textbf{a fundamental geometric property of neural representations}.
In deep neural networks, exact Fisher Information Matrix (FIM) computation is intractable due to high dimensionality. Kronecker Factorization Approximation Curvature (K-FAC) addresses this through two key approximations \cite{martens2015optimizing}.

\subsection{Fisher and K-FAC Approximation}

K-FAC assumes layer independence, treating the FIM as block-diagonal and decomposing curvature layer-wise. This ignores inter-layer correlations but captures essential curvature for optimization \cite{martens2015optimizing}. Within each layer, K-FAC assumes independence between input activations and output gradients. As the FIM block is an expectation of their outer product, this factorizes it into a Kronecker product:
\begin{equation}
    F_l \approx A_{l-1} \otimes G_l
    \label{eq:kfacfactorization}
\end{equation}
where $A_{l-1} = \mathbb{E}[XX^{\top}]$ captures input activation geometry and $G_l = \mathbb{E}[gg^{\top}]$ captures output gradient covariance.

Since we analyze a converged model retrospectively, task-specific gradient information $G_l$ is unavailable. Invoking the Principle of Maximum Entropy~\cite{jaynes1957information}, we adopt the least informative prior $G_l \approx \sigma^2 I$—a choice that parallels damping strategies in natural gradient.

\subsection{Engram as Fisher Projection}

\begin{theorem}[Fisher-Engram Equivalence]
\label{thm:equivalence}
Let $\mathcal{M}$ be the parameter manifold equipped with the Fisher Information Metric $F$.
Under the assumptions of K-FAC ($F_l \approx A_{l-1} \otimes G_l$) and isotropic output curvature (Maximum Entropy Principle, $G_l \approx \sigma^2 I$), Engram identification problem:
\begin{equation}
    \min_{W^+}\mathcal{L}(W^+) = \| (W - W^+)X \|^2 \ \text{ s.t.} \quad W^+X^- = 0
\end{equation}
{corresponds} to the minimum norm projection onto the soft-constraint manifold under the Fisher metric $F$.
\end{theorem}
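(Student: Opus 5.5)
The plan is to rewrite the Engram objective as a squared distance in parameter space measured by the K-FAC Fisher metric, and then show that the constrained minimizer is exactly a Fisher-orthogonal projection. Vectorize the layer weights, $w=\mathrm{vec}(W)$ and $w^+=\mathrm{vec}(W^+)$. The standard Kronecker identity $\|MX\|_F^2=\mathrm{vec}(M)^\top (XX^\top\otimes I)\,\mathrm{vec}(M)$ gives
\[
\|(W-W^+)X\|_F^2=(w-w^+)^\top (A\otimes I)(w-w^+),
\]
with $A=XX^\top=\Sigma^++\Sigma^-$, up to the normalization $N$ in $\mathbb{E}[XX^\top]$. Under the two assumptions, $F_l\approx A_{l-1}\otimes\sigma^2 I$, so $\mathcal{L}(W^+)=\sigma^{-2}\|w-w^+\|_F^2$, where $\|v\|_F^2=v^\top F v$ is the Fisher (semi)norm. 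Since the factor $\sigma^{-2}$ and the constant $N$ do not change the argmin, minimizing $\mathcal{L}$ is the same as finding the point closest to $w$ in the Fisher metric.

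The next step is to identify the feasible set. The constraint $W^+X^-=0$ is linear in $w^+$, namely $(X^{-\top}\otimes I)w^+=0$, so it defines a subspace $\mathcal{S}$. This subspace is the "constraint manifold" of the statement. I would interpret "soft" as meaning that the Fisher metric restricted to $\mathcal{S}$ only sees the $\Sigma^+$-directions, which is why the solution acts as the spectral filter discussed after the Proposition. The Fisher-metric projection of $w$ onto $\mathcal{S}$ is then $\arg\min_{w^+\in\mathcal{S}}\|w-w^+\|_F^2$, and by the identity above this equals $\arg\min\mathcal{L}$ under the constraint. That equality is already the correspondence claimed in the theorem.

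To make the correspondence concrete, I would compute the projection explicitly and check that it reproduces Eq.~\eqref{eq:retrospective_estimator}. On $\mathcal{S}$ the term $W^+X$ reduces to $W^+X^+$ together with a zero block, so the objective becomes $\|(W-W^+)X^+\|^2+\|WX^-\|^2$. Its first-order condition, combined with the Lagrange multiplier for the constraint, is exactly the KKT system from the Proposition (Appendix~\ref{app:optimization}). The Proposition's minimum-norm solution $W\Sigma^+(\Sigma^++\Sigma^-)^\dagger$ is therefore the Fisher projection. I would also verify Fisher-orthogonality directly, i.e.\ that $\langle w-w^+,v\rangle_F=\mathrm{tr}\bigl((W-W^+)AV^\top\bigr)=0$ for all $v\in\mathcal{S}$, by using $A=\Sigma^++\Sigma^-$ and $VX^-=0$.

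The main obstacle is degeneracy. $A$ may be singular, so $F$ is only a semimetric and the projection is not unique. I plan to handle this by taking the minimum-norm representative, which the pseudoinverse in Eq.~\eqref{eq:main_closed_form} selects, and by restricting all statements to $\mathrm{range}(A)$. A second subtlety is that the orthogonality check may hold only on $\mathrm{range}(\Sigma^+)$ when $\Sigma^+$ and $\Sigma^-$ overlap. This is the reason I will phrase the result as "corresponds to" a soft projection rather than claim that $\Sigma^+(\Sigma^++\Sigma^-)^\dagger$ is an exact idempotent projector.
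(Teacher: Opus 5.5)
Your first two paragraphs are the paper's proof. The identity $\|MX\|_F^2=\mathrm{vec}(M)^\top(XX^\top\otimes I)\mathrm{vec}(M)$, with $XX^\top=\Sigma^++\Sigma^-$, makes $\mathcal{L}(W^+)$ a positive multiple of $d_F^2(W,W^+)$ under $F_l\approx A\otimes\sigma^2 I$, so the two constrained problems have the same minimizers. The paper stops there, and that is all the stated correspondence needs.

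Your third paragraph, however, contains a step that fails. $W\Sigma^+(\Sigma^++\Sigma^-)^\dagger$ is in general not the Fisher projection of $W$ onto $\mathcal{S}=\{W^+: W^+X^-=0\}$, because it does not lie in $\mathcal{S}$. Take $d=1$, $X^+=X^-=[1]$, $W=[1]$. Then $\mathcal{S}=\{0\}$, so the constrained minimizer is $0$, while the closed form gives $1/2$. More generally, whenever $\Sigma^-$ is invertible (the usual case once the reference set has more than $d$ generic samples), $\mathcal{S}=\{0\}$, yet the closed form is nonzero as soon as $WX^+\neq 0$. The cause is the Appendix's pseudoinverse step: it solves the block system $W^+[X^+\;X^-]=[WX^+\;\,0]$ only in the least-squares sense. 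That system is consistent for every $W$ only when $\mathrm{range}(X^+)\cap\mathrm{range}(X^-)=\{0\}$. What the closed form actually minimizes is the penalized dual form \eqref{eq:final_objective}, whose normal equation is $W^+(\Sigma^++\Sigma^-)=W\Sigma^+$. It satisfies the KKT stationarity equation with $\Lambda=-W^+X^-$ but violates primal feasibility, so appealing to the Appendix's KKT system does not rescue the identification.

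Your planned orthogonality check will not expose this. Because $\Sigma^+(\Sigma^++\Sigma^-)^\dagger(\Sigma^++\Sigma^-)=\Sigma^+$, the residual satisfies $(W-W^+)(\Sigma^++\Sigma^-)=W\Sigma^-$ exactly. Hence $\mathrm{tr}\bigl((W-W^+)(\Sigma^++\Sigma^-)V^\top\bigr)=\mathrm{tr}\bigl(WX^-(VX^-)^\top\bigr)=0$ for every $V\in\mathcal{S}$, with no restriction to $\mathrm{range}(\Sigma^+)$. The residual is Fisher-orthogonal to $\mathcal{S}$, but the point itself sits off $\mathcal{S}$. Your ``second subtlety'' therefore targets the wrong condition: under overlap it is feasibility, not orthogonality, that breaks.

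You have two options. Drop the explicit identification, since the theorem does not need it. Or state it correctly: the closed form is a constrained Fisher minimizer when $\mathrm{range}(\Sigma^+)\cap\mathrm{range}(\Sigma^-)=\{0\}$. In general, $P^+=\Sigma^+(\Sigma^++\Sigma^-)^\dagger$ acting by $U\mapsto UP^+$ is only a soft projection. It is self-adjoint for $\langle U,V\rangle=\mathrm{tr}\bigl(U(\Sigma^++\Sigma^-)V^\top\bigr)$ and satisfies $\langle UP^+,U\rangle=\mathrm{tr}(U\Sigma^+U^\top)\in[0,\langle U,U\rangle]$, but it is not idempotent and does not map into $\mathcal{S}$.
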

\begin{proof}
Under the K-FAC approximation, the global Fisher metric is block-diagonal, treating each layer independently. For a given layer, the local FIM 
$F_l \in \mathbb{R}^{d_l d_{l-1}\times d_l d_{l-1}}$ decomposes as Eq~\eqref{eq:kfacfactorization}.

Invoking the Principle of Maximum Entropy, we set $G_l \approx \sigma^2 I$. For a given layer with input activations $X$, the input covariance is 
$A = \mathbb{E}[XX^{\top}] \approx \frac{1}{N}(\Sigma^+ + \Sigma^-)$, where $\Sigma^+ + \Sigma^- = XX^{\top}$ is the empirical covariance matrix. 
Substituting into the K-FAC factorization, the metric tensor simplifies to:
\begin{equation*}
    F \approx c  ((\Sigma^+ + \Sigma^-) \otimes I), \quad c = \sigma^2/N
\end{equation*}
On the Riemannian manifold $(\mathcal{M}, F)$, the squared geodesic distance between two points $W$ and $W^+$ is given by the quadratic form induced by the metric tensor:
\begin{equation*}
    d^2_F(W,W^+) = \text{vec}(W-W^+)^{\top} F \text{ vec}(W-W^+)
\end{equation*}
Applying the identity 
\begin{equation*}
    \text{vec}(M)^{\top}(A \otimes I)\text{vec}(M) = \text{Tr}(MAM^{\top})
\end{equation*}
for symmetric matrix A, we observe:
\begin{equation*}
    d^2_F(W,W^+) =c  \text{ Tr}((W-W^+)(\Sigma^+ + \Sigma^-)(W-W^+)^{\top})
\end{equation*}
To establish the connection, consider the Engram objective:
\begin{equation*}
\begin{split}
    \mathcal{L}(W^+) &= \|(W-W^+)X\|^2 \\
    &= \text{Tr}((W-W^+)XX^{\top}(W-W^+)^{\top})
\end{split}
\end{equation*}
Since $XX^{\top} = \Sigma^+ + \Sigma^-$, we observe:
\begin{equation*}
    \text{Tr}((W-W^+)(\Sigma^+ + \Sigma^-)(W-W^+)^{\top}) \propto d^2_F(W,W^+)
\end{equation*}
Thus, minimizing the Engram loss $\mathcal{L}(W^+)$ is mathematically equivalent to minimizing the Fisher distance $d_F(W, W^+)$ subject to the constraint $W^+X^- = 0$. 
\end{proof}

\begingroup\color{black}%
\begin{remark}[Scope of the Fisher-Engram Equivalence]
The isotropic curvature assumption $\bm{G}_l \approx \sigma^2 \bm{I}$ is invoked to reveal a \emph{structural equivalence} between two independent derivations---the biological constraints and the Fisher projection---rather than as an operational requirement. The empirical estimator (Eq.~\eqref{eq:retrospective_estimator}) uses input statistics alone and does not invoke this assumption. Relaxing $\bm{G}_l$ to anisotropic curvature would recover output-side weighting via K-FAC's gradient block, but is unnecessary for the closed-form result.
\end{remark}
\endgroup

\begin{corollary}[Natural Gradient Interpretation]
\label{cor:natural-grad}
Define the forgetting objective $\mathcal{L}_{\mathrm{forget}}(W) = \|W X^+\|_F^2$. 
Under the K-FAC approximation with isotropic output curvature, 
the ablated model $W - W^+$ corresponds to a unit-step natural gradient descent on $\mathcal{L}_{\mathrm{forget}}$.
\end{corollary}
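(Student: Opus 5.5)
The plan is to compute the natural gradient step directly under the same K-FAC metric used in the proof of Theorem~\ref{thm:equivalence}, and then check that it reproduces the closed-form engram of Eq.~\eqref{eq:retrospective_estimator}.

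First, the Euclidean gradient. For $\mathcal{L}_{\mathrm{forget}}(W) = \|WX^+\|_F^2$ we have $\nabla_W \mathcal{L}_{\mathrm{forget}} = 2WX^+X^{+\top} = 2W\Sigma^+$. In vectorized form this is $\mathrm{vec}(\nabla) = 2(\Sigma^+\otimes I)\,\mathrm{vec}(W)$, using the column-stacking convention in which $\mathrm{vec}(WA) = (A^\top\otimes I)\mathrm{vec}(W)$ and $\Sigma^+$ is symmetric.

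Second, the metric. As in the proof of Theorem~\ref{thm:equivalence}, $F \approx c\,((\Sigma^++\Sigma^-)\otimes I)$ with $c=\sigma^2/N$. Its pseudoinverse is $c^{-1}((\Sigma^++\Sigma^-)^\dagger\otimes I)$, since the Kronecker product commutes with the Moore–Penrose inverse.

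Third, the natural gradient direction. Applying $F^\dagger$ to the gradient and un-vectorizing gives
\[
\widetilde{\nabla} = \tfrac{2}{c}\,W\Sigma^+(\Sigma^++\Sigma^-)^\dagger .
\]
Here the mixed-product property $(A\otimes I)(B\otimes I)=(AB)\otimes I$ and symmetry let the two factors combine as $(\Sigma^++\Sigma^-)^\dagger\Sigma^+$. Transposing this product under $\mathrm{vec}^{-1}$ yields right multiplication by $\Sigma^+(\Sigma^++\Sigma^-)^\dagger$.

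Fourth, the unit step. The update is $W - \eta\widetilde{\nabla}$, and "unit step" means the step size that cancels the constant, namely $\eta = c/2$; equivalently, the factor $2/c$ is absorbed into the learning rate. The updated weights are then $W - W\Sigma^+(\Sigma^++\Sigma^-)^\dagger = W - W^+$, by Eq.~\eqref{eq:retrospective_estimator}.

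The main obstacle is the bookkeeping in the third step, and I would handle it as follows.
\begin{itemize}
\item \emph{Kronecker ordering.} I would fix the column-stacking vec convention once and verify the identity $\mathrm{vec}(M)^\top(A\otimes I)\mathrm{vec}(M)=\mathrm{Tr}(MAM^\top)$ used in the theorem under that convention. This ensures the metric, the gradient, and the inverse are all expressed in the same convention.
\item \emph{Normalization of "unit step".} I would make explicit that a unit step means unit step size after absorbing $c/2$. This is consistent with the scale invariance of $G_l\approx\sigma^2 I$: $\sigma^2$ is arbitrary, so $c$ is arbitrary and can be normalized so that $2/c=1$.
\item \emph{Singular covariance.} If $\Sigma^++\Sigma^-$ is singular, I would note that the pseudoinverse natural gradient is the minimum-norm solution of $F\,\mathrm{vec}(\Delta)=\mathrm{vec}(\nabla)$. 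This solution exists because $\mathrm{range}(\Sigma^+)\subseteq\mathrm{range}(\Sigma^++\Sigma^-)$, both matrices being PSD. This matches the minimum-norm form of the Spectral AI Engram Estimator proposition.
\end{itemize}
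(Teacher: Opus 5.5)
Your proposal is correct and follows the paper's own argument: compute the Euclidean gradient $2W\Sigma^+$, apply $F^\dagger$ for the K-FAC metric $F\approx c\,((\Sigma^++\Sigma^-)\otimes I)$, and identify the result with $W^+=W\Sigma^+(\Sigma^++\Sigma^-)^\dagger$. Your version is more careful than the paper's, which writes $\propto$ and $\approx$ and leaves three things implicit that you make explicit: the vec convention, the absorbed factor $2/c$ (which is what ``unit step'' has to mean), and the consistency of the system when $\Sigma^++\Sigma^-$ is singular.
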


\begin{proof}
The Euclidean gradient is $\nabla_W \mathcal{L}_{\mathrm{forget}} = 2W\Sigma^+$. 
Under K-FAC, the natural gradient becomes:
\begin{equation*}
    \tilde{\nabla}\mathcal{L}_{\mathrm{forget}} =F^{\dagger}\nabla\mathcal{L}_{\mathrm{forget}} \propto W\Sigma^+(\Sigma^+ + \Sigma^-)^{\dagger} 
    \approx W^+.
\end{equation*}
Thus, $W - W^+ \approx W - \tilde{\nabla}\mathcal{L}_{\mathrm{forget}}$, confirming that engram ablation is equivalent to natural gradient forgetting.
\end{proof}

\begin{remark}[Statistical Interpretation]
Under the isotropic output curvature assumption ($G_l \approx \sigma^2 I$), the Fisher metric becomes a covariance-weighted norm. Unlike the \textit{Mahalanobis distance} which normalizes via inverse covariance, our metric weights directly by covariance, assigning higher cost to frequent directions to respect the data's statistical structure.
\end{remark}

\subsection{Structure of Effective Projection Operators}
\label{sec:soft_projection}
We now characterize this structure through spectral analysis of the Effective projection operators.

\textbf{Near-orthogonality from Lagrangian structure.}
Each concept $\mathcal{C}_i$, defined by target data $X_i^+$ and reference $X_i^-$, induces the operator $P_i = \Sigma_i^+(\Sigma_i^+ + \Sigma_i^-)^\dagger$, where $\Sigma_i^+ = X_i^+ X_i^{+{\top}}$.
Although $P_i$ is not an idempotent projection ($P_i^2 \neq P_i$), it acts as a \textit{soft projection} that effectively isolates the signal subspace based on spectral signal-to-noise ratios.
The Lagrangian formulation naturally enforces minimal spectral interference: the minimum-norm solution prefers directions that minimize interference with other memories. \textcolor{black}{This soft-projection structure is consistent with contemporary neuroscience evidence that biological engrams overlap rather than strictly partition~\cite{josselyn2020memory, buzsaki2004ensembles, kolibius2025taxonomies}: shared covariance directions between concepts receive attenuated weight rather than binary exclusion, accommodating overlapping but distinguishable memory traces.}

\textbf{Complement subspace.}
The projections do not span the entire parameter space: $\sum_i P_i \neq I$. The complement operator $Q := I - \sum P_i$ projects onto the subspace orthogonal to all memory directions.

Perturbations in the range of $Q$ induce zero distance under the degenerate Fisher metric. Among the infinite solutions satisfying the geometric constraints, the Moore-Penrose pseudoinverse implicitly selects the one with the minimum Frobenius norm. This ensures the model avoids parameter drift in these information-null directions, preserving the original weights wherever geometric curvature is absent.

\label{theory}

\section{In Search of Memory Trace in LLM}
\label{scaleup}
While Section~\ref{theoretical_analysis} provides a rigorous geometric foundation, a scaling challenge arises: \emph{can the linear curvature of the Engram capture the truth within the folded, non-linear manifolds of a billion-parameter model?} The dense entanglement of knowledge in Large Language Models (LLMs) often renders global updates destructive. We therefore move beyond theoretical abstraction to test if our closed-form estimator can perform surgical memory isolation on Llama-3.2-1B with TOFU benchmark (See experimental details in {Appendix ~\ref{app_tofu}}), seeking the limits of linear-algebraic unlearning in the face of emergent complexity.

\begin{table}[!t]
\centering
\caption{Evaluation results on Llama3.2-1B with TOFU dataset. The top two rows are baselines. Best values among the methods are \textbf{bolded}, and second-best values are \underline{underlined}. $\alpha=0.6$ denotes uniform $\alpha$ across layers, and $\alpha_\text{W-Norm}$ denotes adaptive $\alpha$ obtained by rescaling the weight norm ratio $\|W^+\|/\|W\|$ to $[0, 1]$. EM: Exact Memorization, FQ: $\log_{10}$ Forget Quality. }
\label{tab:tofu-results}
\begin{small}
\begin{sc}
\resizebox{\columnwidth}{!}{%
\begin{tabular}{lccc|cc}
\toprule
Method & Mem. $\uparrow$ & Util. $\uparrow$ & Priv. $\uparrow$ & EM $\downarrow$ & FQ $\uparrow$ \\
\midrule
Init Fine. & 0.0000 & 1.0000 & 0.0038 & 1.0000 & -21.4083 \\
Retain & 1.0000 & 0.9933 & 1.0000 & 0.0000 & 0.0000 \\
\midrule
GradDiff & 0.1849 & \textbf{0.9934} & 0.1471 & 0.7182 & -15.8262 \\
IdkNLL & 0.4547 & 0.9102 & 0.0578 & 0.6211 & -13.4845 \\
UNDIAL & 0.5644 & 0.9155 & 0.2272 & 0.7399 & -11.3335 \\
RMU & 0.8660 & 0.7471 & 0.6799 & 0.2953 & -0.2357 \\
AltPO & 0.9254 & \underline{0.9864} & \textbf{0.9549} & 0.0953 & -0.7424 \\
NPO & 0.9339 & 0.9501 & \underline{0.9484} & 0.0948 & -1.9986 \\
SimNPO & 0.9435 & 0.9706 & 0.9232 & 0.1035 & \underline{-0.0897} \\
IdkDPO & \underline{0.9532} & 0.9660 & 0.7751 & 0.1155 & -1.1856 \\
\midrule
\rowcolor{gray!15} Engram ($\alpha{}=0.6$) & 0.9176 & 0.8801 & 0.4832 & \textbf{0.0069} & -0.6125 \\
\rowcolor{gray!15} Engram ($\alpha_\text{W-Norm}$) & \textbf{0.9627} & 0.9256 & 0.6453 & \underline{0.0276} & \textbf{-0.0637} \\
\bottomrule
\end{tabular}
}
\end{sc}
\end{small}
\end{table}
\begin{figure}[!t]
    \centering 
    \includegraphics[width=0.95\columnwidth]{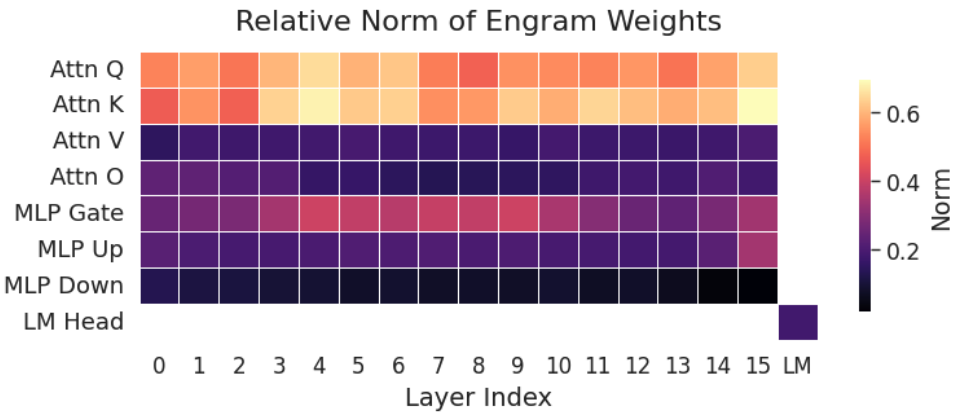} 
    \caption{\textbf{Relative Engram Weight Norm as a Proxy for Causal Memory Traces.} 
     We show the identified relative weight norm of the engram weights over original weight in Llama-3.2-1B for Tofu dataset. The heatmap reveals that the changes are predominantly concentrated in the \textit{Query} and \textit{Key} of self-attention and \textit{Gate} of MLP. With this relative norm, we applied surgical strength in TOFU unlearning process.} 
    \label{fig_weight_norm}
    \vspace{-4mm}
\end{figure}

\paragraph{Emergent Performance and the Limits of Uniformity} 
As evidenced in Table~\ref{tab:tofu-results}, the Engram method demonstrates highest in suppressing \textit{Exact Memorization} metric compared to traditional gradient-based unlearning when employing a \textbf{uniform edit strength} ($\alpha=0.6$). However, the performance gain of established unlearning metrics~\cite{maini2024tofu} compared to existing baselines is constrained. This suggests that while the linear theory provides the correct ``causal direction,'' the global application with uniform scale at that direction fails to account for the heterogeneous sensitivity across the Transformer layers.

\paragraph{Engram Weights for Memory Tracing}
To delineate the spatial distribution of memory within the network, we analyzed the relative weight norm ratio $\text{W-Norm}=\frac{\|\bm{W}^+\|_F}{\|\bm{W}\|_F}$ (Fig.~\ref{fig_weight_norm}). The visualization indicates that memory footprints are not randomly dispersed but display a concentrated structural pattern. The causal substrate is anchored within the \textbf{Query ($\bm{W}_Q$)} and \textbf{Key ($\bm{W}_K$)} projections, alongside the \textbf{MLP gating layers} ($gate\_proj$). This empirical evidence identifies the \textit{Memory Trace}, enabling a transition from heuristic manipulation to precise structural intervention. \textcolor{black}{Layer-type ablation in Appendix~\ref{app:layer_ablation} (Table~\ref{tab:layer_ablation}) confirms this localization causally: Q/K + Gate alone matches the all-layer Overall score, while excluding these layers collapses unlearning performance.}

\paragraph{Structural Automation via W-Norm Scaling} 
Based on the identified memory trace, \textbf{adaptive scaling ($\alpha_\text{W-Norm}$)} adjusts the intervention strength proportional to W-Norm (rescaled to max=1). In Table~\ref{tab:tofu-results}, this adaptive Engram achieves highest memorization while preserving general utility, proving that the model's own spectral statistics provide the optimal schedule for knowledge erasure.

\begingroup\color{black}%
\paragraph{Engram as a Distinct Operating Point on the Compute–Accuracy Frontier.}
Among closed-form editing methods, Engram outperforms UCE and Task Arithmetic by substantial margins (Table~\ref{tab:closed_form_comparison}); however, it still trails dedicated iterative methods (AltPO, NPO, SimNPO) on Overall and Privacy metrics. This residual gap reflects a structural design choice rather than an empirical shortfall: Engram derives the entire weight update from a single forward pass over target and reference statistics, without backpropagation or iterative refinement, placing it on a fundamentally different point of the compute–accuracy frontier. As detailed in Appendix~\ref{app:llm_compute} (Table~\ref{tab:llm_compute}), gradient-based methods consume roughly $\sim 30\times$ more FLOPs and $\sim 2.3\times$ more peak memory per concept, with wall-clock times on the order of tens of minutes versus $\sim 2$ minutes for Engram. Despite this cost asymmetry, Engram still dominates on Exact Memorization (EM: $\mathbf{0.028}$ vs.\ $\geq 0.095$) and remains competitive on Forget Quality. 

We therefore position Engram not as a replacement for iterative unlearning but as an orthogonal regime: $\mathcal{O}(1)$-pass, gradient-free memory isolation that complements rather than competes with refinement-based methods, particularly valuable when (i) the number of concepts to unlearn is large (combinatorial compositionality, Section~\ref{sec:combinatorial}), (ii) compute or weight access is restricted, or (iii) the goal is structural identification rather than distributional indistinguishability. We discuss further scope conditions and the interpretive distinction between functional identification and learning-trajectory reconstruction in Appendix~\ref{app:limitations}.

\textcolor{black}{\begin{table}[t]
\caption{\textbf{Comparison with closed-form editing baselines on TOFU} (Llama-3.2-1B, forget10). Engram, UCE, and Task Arithmetic are all gradient-free closed-form methods. Despite this shared paradigm, Engram outperforms both baselines across all metrics except utility. See Appendix~\ref{app:closed_form_baselines} for algebraic correspondence, hyperparameter sweeps, and screening results.}
\label{tab:closed_form_comparison}
\vskip -0.1cm
\centering
\small
\resizebox{\columnwidth}{!}{%
\begin{tabular}{lccccc}
\toprule
\textbf{Method} & \textbf{Overall}~$\uparrow$ & \textbf{Mem.}~$\uparrow$ & \textbf{Util.}~$\uparrow$ & \textbf{Priv.}~$\uparrow$ & \textbf{EM}~$\downarrow$ \\
\midrule
Task Arithmetic & 0.584 & 0.900 & 0.678 & 0.392 & 0.128 \\
UCE             & 0.659 & 0.901 & 0.951 & 0.418 & 0.135 \\
\textbf{Engram} & \textbf{0.818} & \textbf{0.963} & 0.926 & \textbf{0.645} & \textbf{0.028} \\
\bottomrule
\end{tabular}
}
\vskip -0.2cm
\end{table}}

\endgroup
\section{Conclusion}
\label{sec:conclusion}

We introduced a principled framework for identifying and manipulating memory traces in artificial intelligence. By formalizing neuroscientific engram axioms into a constrained inverse problem, we have derived a scalable, closed-form estimator that remains robust across diverse architectures—from MLP, CNN, and ViT to billion-parameter LLM.

Our theoretical analysis establishes a theoretical link between biological memory axioms and information geometry, specifically through the lens of \textbf{Fisher-KFAC approximation}. Beyond its geometric view, the empirical discovery of the condensed localization pattern of \textbf{memory traces} in the LLM provides a new perspective on neural interpretability. We have demonstrated that even within highly non-linear systems, causal substrates of knowledge can be isolated and manipulated through deterministic spectral resolutions.

By shifting the paradigm of machine unlearning from \textit{stochastic gradient search} to \textit{structural identification}, our framework enables surgical memory isolation with unprecedented efficiency. Future work may examine the temporal dynamics of engrams in continual learning and find the bridge between biological memory theories and the pursuit of robust, interpretable artificial intelligence.

\clearpage
\section*{Impact Statement}
This work identifies AI engrams, causal memory units that allow learned knowledge to be composed or erased through simple linear arithmetic. By enabling precise  editing without retraining, our method could support safer model behavior and clearer interpretability. These results establish a novel connection between theories of biological memory and artificial representation learning, pointing toward more accountable and controllable AI systems. As models become easier to edit at the level of individual memories, a broader discussion is needed about how and by whom such fine‑grained control should be governed.

\section*{Acknowledgment}
\textcolor{black}{This work was partly supported by the National Science Foundation of Korea (NRF) Grant [RS-2022-00165347, RS-2026-25469757]. We thank Krishna P. Gummadi and Peter Dayan for valuable discussions and encouraging feedback on this work.}
\bibliography{icml2026}

\begin{thebibliography}{53}
\providecommand{\natexlab}[1]{#1}
\providecommand{\url}[1]{\texttt{#1}}
\expandafter\ifx\csname urlstyle\endcsname\relax
  \providecommand{\doi}[1]{doi: #1}\else
  \providecommand{\doi}{doi: \begingroup \urlstyle{rm}\Url}\fi

\bibitem[Amari(1998)]{Amari1998}
Amari, S.-i.
\newblock Natural gradient works efficiently in learning.
\newblock \emph{Neural Computation}, 10\penalty0 (2):\penalty0 251--276, 1998.

\bibitem[Bau et~al.(2017)Bau, Zhou, Khosla, Oliva, and Torralba]{bau2017network}
Bau, D., Zhou, B., Khosla, A., Oliva, A., and Torralba, A.
\newblock Network dissection: Quantifying interpretability of deep visual representations.
\newblock In \emph{CVPR}, 2017.

\bibitem[Bengio et~al.(2013)Bengio, Courville, and Vincent]{bengio2013representation}
Bengio, Y., Courville, A., and Vincent, P.
\newblock Representation learning: A review and new perspectives.
\newblock \emph{IEEE transactions on pattern analysis and machine intelligence}, 35\penalty0 (8):\penalty0 1798--1828, 2013.

\bibitem[Bourtoule et~al.(2021)Bourtoule, Chandrasekaran, Choquette-Choo, Jia, Travers, Zhang, Lie, and Papernot]{bourtoule2021machine}
Bourtoule, L., Chandrasekaran, V., Choquette-Choo, C.~A., Jia, H., Travers, A., Zhang, B., Lie, D., and Papernot, N.
\newblock Machine unlearning.
\newblock In \emph{2021 IEEE symposium on security and privacy (SP)}, pp.\  141--159. IEEE, 2021.

\bibitem[Braun et~al.(2025)Braun, Bushnaq, Heimersheim, Mendel, and Sharkey]{braun2025apd}
Braun, D., Bushnaq, L., Heimersheim, S., Mendel, J., and Sharkey, L.
\newblock Interpretability in {{Parameter Space}}: {{Minimizing Mechanistic Description Length}} with {{Attribution-based Parameter Decomposition}}.
\newblock \emph{arXiv preprint arXiv:2501.14926}, 2025.

\bibitem[Bushnaq et~al.(2025)Bushnaq, Braun, and Sharkey]{bushnaq2025spd}
Bushnaq, L., Braun, D., and Sharkey, L.
\newblock Stochastic {{Parameter Decomposition}}.
\newblock \emph{arXiv preprint arXiv:2506.20790}, 2025.

\bibitem[Buzs{\'a}ki(2004)]{buzsaki2004ensembles}
Buzs{\'a}ki, G.
\newblock Large-scale recording of neuronal ensembles.
\newblock \emph{Nature Neuroscience}, 7\penalty0 (5):\penalty0 446--451, 2004.

\bibitem[Davari et~al.(2023)Davari, Horoi, Natik, Lajoie, Wolf, and Belilovsky]{davari2022ReliabilityCKASimilarity}
Davari, M., Horoi, S., Natik, A., Lajoie, G., Wolf, G., and Belilovsky, E.
\newblock Reliability of {{CKA}} as a {{Similarity Measure}} in {{Deep Learning}}.
\newblock In \emph{{{ICLR}}}, 2023.

\bibitem[Dong et~al.(2025)Dong, Lin, Belkin, Huerta, and Vuli{\'c}]{dong2025undial}
Dong, Y.~R., Lin, H., Belkin, M., Huerta, R., and Vuli{\'c}, I.
\newblock {UNDIAL}: Self-distillation with adjusted logits for robust unlearning in large language models.
\newblock In \emph{Proceedings of the 2025 Conference of the Nations of the Americas Chapter of the Association for Computational Linguistics: Human Language Technologies (Volume 1: Long Papers)}, pp.\  8827--8840, 2025.

\bibitem[Dorna et~al.(2025)Dorna, Mekala, Zhao, McCallum, Lipton, Kolter, and Maini]{dorna2025openunlearning}
Dorna, V., Mekala, A., Zhao, W., McCallum, A., Lipton, Z.~C., Kolter, J.~Z., and Maini, P.
\newblock {OpenUnlearning}: Accelerating {LLM} unlearning via unified benchmarking of methods and metrics.
\newblock \emph{arXiv preprint arXiv:2506.12618}, 2025.

\bibitem[Dosovitskiy et~al.(2021)Dosovitskiy, Beyer, Kolesnikov, Weissenborn, Zhai, Unterthiner, Dehghani, Minderer, Heigold, Gelly, Uszkoreit, and Houlsby]{dosovitskiy2020image}
Dosovitskiy, A., Beyer, L., Kolesnikov, A., Weissenborn, D., Zhai, X., Unterthiner, T., Dehghani, M., Minderer, M., Heigold, G., Gelly, S., Uszkoreit, J., and Houlsby, N.
\newblock An image is worth 16x16 words: Transformers for image recognition at scale.
\newblock In \emph{International Conference on Learning Representations}, 2021.
\newblock URL \url{https://openreview.net/forum?id=YicbFdNTTy}.

\bibitem[Elhage et~al.(2022)Elhage, Hume, Olsson, Schiefer, Henighan, Kravec, Hatfield-Dodds, Lasenby, Drain, Chen, et~al.]{elhage2022toy}
Elhage, N., Hume, T., Olsson, C., Schiefer, N., Henighan, T., Kravec, S., Hatfield-Dodds, Z., Lasenby, R., Drain, D., Chen, C., et~al.
\newblock Toy models of superposition.
\newblock \emph{arXiv preprint arXiv:2209.10652}, 2022.

\bibitem[Fan et~al.(2024)Fan, Liu, Zhang, Wong, Wei, and Liu]{fan2023SalUnEmpoweringMachine}
Fan, C., Liu, J., Zhang, Y., Wong, E., Wei, D., and Liu, S.
\newblock {{SalUn}}: {{Empowering Machine Unlearning}} via {{Gradient-based Weight Saliency}} in {{Both Image Classification}} and {{Generation}}.
\newblock In \emph{{{ICLR}}}, 2024.

\bibitem[Fan et~al.(2025)Fan, Liu, Lin, Jia, Zhang, Mei, and Liu]{fan2024simplicity}
Fan, C., Liu, J., Lin, L., Jia, J., Zhang, R., Mei, S., and Liu, S.
\newblock Simplicity prevails: Rethinking negative preference optimization for {LLM} unlearning.
\newblock In \emph{NeurIPS}, 2025.

\bibitem[Frankle et~al.(2020)Frankle, Dziugaite, Roy, and Carbin]{frankle2020linear}
Frankle, J., Dziugaite, G.~K., Roy, D., and Carbin, M.
\newblock Linear mode connectivity and the lottery ticket hypothesis.
\newblock In \emph{ICML}, 2020.

\bibitem[Gandikota et~al.(2024)Gandikota, Orgad, Belinkov, Materzy{\'n}ska, and Bau]{gandikota2024unified}
Gandikota, R., Orgad, H., Belinkov, Y., Materzy{\'n}ska, J., and Bau, D.
\newblock Unified concept editing in diffusion models.
\newblock In \emph{Proceedings of the IEEE/CVF Winter Conference on Applications of Computer Vision}, pp.\  5111--5120, 2024.

\bibitem[Golatkar et~al.(2020)Golatkar, Achille, and Soatto]{golatkar2020EternalSunshineSpotless}
Golatkar, A., Achille, A., and Soatto, S.
\newblock Eternal {{Sunshine}} of the {{Spotless Net}}: {{Selective Forgetting}} in {{Deep Networks}}.
\newblock In \emph{{{CVPR}}}, 2020.

\bibitem[Grattafiori et~al.(2024)Grattafiori, Dubey, Jauhri, Pandey, Kadian, Al-Dahle, Letman, Mathur, Schelten, Vaughan, et~al.]{dubey2024llama}
Grattafiori, A., Dubey, A., Jauhri, A., Pandey, A., Kadian, A., Al-Dahle, A., Letman, A., Mathur, A., Schelten, A., Vaughan, A., et~al.
\newblock The {Llama} 3 herd of models.
\newblock \emph{arXiv preprint arXiv:2407.21783}, 2024.

\bibitem[Ha \& Schmidhuber(2018)Ha and Schmidhuber]{ha2018world}
Ha, D. and Schmidhuber, J.
\newblock World models.
\newblock In \emph{Advances in Neural Information Processing Systems (NeurIPS)}, volume~31, pp.\  2461--2472, 2018.
\newblock URL \url{https://nips.cc}.

\bibitem[Hawkins \& Blakeslee(2004)Hawkins and Blakeslee]{hawkins2004intelligence}
Hawkins, J. and Blakeslee, S.
\newblock \emph{On intelligence}.
\newblock Macmillan, 2004.

\bibitem[He et~al.(2016)He, Zhang, Ren, and Sun]{he2016deep}
He, K., Zhang, X., Ren, S., and Sun, J.
\newblock Deep residual learning for image recognition.
\newblock In \emph{CVPR}, 2016.

\bibitem[Hinton(1984)]{hinton1984distributed}
Hinton, G.~E.
\newblock \emph{Distributed representations}.
\newblock Carnegie Mellon University, 1984.

\bibitem[Huben et~al.(2024)Huben, Cunningham, Smith, Ewart, and Sharkey]{huben2024sparse}
Huben, R., Cunningham, H., Smith, L.~R., Ewart, A., and Sharkey, L.
\newblock Sparse autoencoders find highly interpretable features in language models.
\newblock In \emph{ICLR}, 2024.

\bibitem[Ilharco et~al.(2023)Ilharco, Ribeiro, Wortsman, Gururangan, Schmidt, Hajishirzi, and Farhadi]{ilharco2023editing}
Ilharco, G., Ribeiro, M.~T., Wortsman, M., Gururangan, S., Schmidt, L., Hajishirzi, H., and Farhadi, A.
\newblock Editing models with task arithmetic.
\newblock In \emph{ICLR}, 2023.

\bibitem[Jaynes(1957)]{jaynes1957information}
Jaynes, E.~T.
\newblock Information theory and statistical mechanics.
\newblock \emph{Physical review}, 106\penalty0 (4):\penalty0 620, 1957.

\bibitem[Jia et~al.(2023)Jia, Liu, Ram, Yao, Liu, Liu, Sharma, and Liu]{jia2023ModelSparsityCan}
Jia, J., Liu, J., Ram, P., Yao, Y., Liu, G., Liu, Y., Sharma, P., and Liu, S.
\newblock Model {{Sparsity Can Simplify Machine Unlearning}}.
\newblock In \emph{{{NeurIPS}}}, 2023.

\bibitem[Josselyn \& Tonegawa(2020)Josselyn and Tonegawa]{josselyn2020memory}
Josselyn, S.~A. and Tonegawa, S.
\newblock Memory engrams: Recalling the past and imagining the future.
\newblock \emph{Science}, 367\penalty0 (6473):\penalty0 eaaw4325, 2020.

\bibitem[Kim et~al.(2026)Kim, Cha, and Kim]{kim2026AreWeTruly}
Kim, Y., Cha, S., and Kim, D.
\newblock Are we truly forgetting? {{A}} critical re-examination of machine unlearning evaluation protocols.
\newblock \emph{Engineering Applications of Artificial Intelligence}, 167:\penalty0 113785, 2026.

\bibitem[Kolibius et~al.(2025)]{kolibius2025taxonomies}
Kolibius, L.~D. et~al.
\newblock Hippocampal neurons code individual episodic memories: Implications for taxonomies of cell types.
\newblock \emph{Trends in Cognitive Sciences}, 2025.

\bibitem[Kornblith et~al.(2019)Kornblith, Norouzi, Lee, and Hinton]{kornblith2019SimilarityNeuralNetwork}
Kornblith, S., Norouzi, M., Lee, H., and Hinton, G.
\newblock Similarity of {{Neural Network Representations Revisited}}.
\newblock In \emph{ICML}, 2019.

\bibitem[Kurmanji et~al.(2023)Kurmanji, Triantafillou, Hayes, and Triantafillou]{kurmanji2023UnboundedMachineUnlearninga}
Kurmanji, M., Triantafillou, P., Hayes, J., and Triantafillou, E.
\newblock Towards {{Unbounded Machine Unlearning}}.
\newblock In \emph{{{NeurIPS}}}, 2023.

\bibitem[Li et~al.(2024)Li, Pan, Gopal, et~al.]{li2024wmdp}
Li, N., Pan, A., Gopal, A., et~al.
\newblock The {WMDP} benchmark: Measuring and reducing malicious use with unlearning.
\newblock \emph{arXiv preprint arXiv:2403.03218}, 2024.

\bibitem[Liu et~al.(2025)Liu, Yao, Jia, Casper, Baracaldo, Hase, Yao, Liu, Xu, Li, Varshney, Bansal, Koyejo, and Liu]{liu2025gd}
Liu, S., Yao, Y., Jia, J., Casper, S., Baracaldo, N., Hase, P., Yao, Y., Liu, C.~Y., Xu, X., Li, H., Varshney, K.~R., Bansal, M., Koyejo, S., and Liu, Y.
\newblock Rethinking machine unlearning for large language models.
\newblock \emph{Nature Machine Intelligence}, 7\penalty0 (2):\penalty0 181--194, 2025.

\bibitem[Maini et~al.(2024)Maini, Feng, Schwarzschild, Lipton, and Kolter]{maini2024tofu}
Maini, P., Feng, Z., Schwarzschild, A., Lipton, Z.~C., and Kolter, J.~Z.
\newblock {TOFU}: A task of fictitious unlearning for {LLM}s.
\newblock In \emph{Conference on Language Modeling}, 2024.

\bibitem[Mallya \& Lazebnik(2018)Mallya and Lazebnik]{mallya2018packnet}
Mallya, A. and Lazebnik, S.
\newblock Packnet: Adding multiple tasks to a single network by iterative pruning.
\newblock In \emph{CVPR}, 2018.

\bibitem[Martens \& Grosse(2015)Martens and Grosse]{martens2015optimizing}
Martens, J. and Grosse, R.
\newblock Optimizing neural networks with {Kronecker}-factored approximate curvature.
\newblock In \emph{ICML}, 2015.

\bibitem[Mekala et~al.(2025)Mekala, Dorna, Dubey, Lalwani, Koleczek, Rungta, Hasan, and Lobo]{mekala2024alternate}
Mekala, A., Dorna, V., Dubey, S., Lalwani, A., Koleczek, D., Rungta, M., Hasan, S., and Lobo, E.
\newblock Alternate preference optimization for unlearning factual knowledge in large language models.
\newblock In \emph{International Conference on Computational Linguistics}, 2025.

\bibitem[Meng et~al.(2022)Meng, Bau, Andonian, and Belinkov]{meng2022locating}
Meng, K., Bau, D., Andonian, A., and Belinkov, Y.
\newblock Locating and editing factual associations in {GPT}.
\newblock In \emph{NeurIPS}, 2022.

\bibitem[Meng et~al.(2023)Meng, Sharma, Andonian, Belinkov, and Bau]{meng2022mass}
Meng, K., Sharma, A.~S., Andonian, A.~J., Belinkov, Y., and Bau, D.
\newblock Mass-editing memory in a transformer.
\newblock In \emph{ICLR}, 2023.

\bibitem[Mikolov et~al.(2013)Mikolov, Chen, Corrado, and Dean]{mikolov2013efficient}
Mikolov, T., Chen, K., Corrado, G., and Dean, J.
\newblock Efficient estimation of word representations in vector space.
\newblock In \emph{International Conference on Learning Representations (ICLR) Workshop}, 2013.
\newblock URL \url{https://arxiv.org/abs/1301.3781}.

\bibitem[Orgad et~al.(2023)Orgad, Kawar, and Belinkov]{orgad2023editing}
Orgad, H., Kawar, B., and Belinkov, Y.
\newblock Editing implicit assumptions in text-to-image diffusion models.
\newblock In \emph{ICCV}, 2023.

\bibitem[Panigrahi et~al.(2023)Panigrahi, Saunshi, Zhao, and Arora]{panigrahi2023skill}
Panigrahi, A., Saunshi, N., Zhao, H., and Arora, S.
\newblock Task-specific skill localization in fine-tuned language models.
\newblock In \emph{ICML}, 2023.

\bibitem[Ramanujan et~al.(2020)Ramanujan, Wortsman, Kembhavi, Farhadi, and Rastegari]{ramanujan2020hidden}
Ramanujan, V., Wortsman, M., Kembhavi, A., Farhadi, A., and Rastegari, M.
\newblock What's hidden in a randomly weighted neural network?
\newblock In \emph{CVPR}, 2020.

\bibitem[Semon(1921)]{semon1921mneme}
Semon, R.~W.
\newblock \emph{The mneme}.
\newblock G. Allen \& Unwin Limited, 1921.

\bibitem[Seo et~al.(2025)Seo, Kim, and Han]{seo2025RevisitingMachineUnlearning}
Seo, S., Kim, D., and Han, B.
\newblock Revisiting {{Machine Unlearning}} with {{Dimensional Alignment}}.
\newblock In \emph{{{WACV}}}, 2025.

\bibitem[Sharkey et~al.(2025)Sharkey, Chughtai, Batson, Lindsey, Wu, Bushnaq, Goldowsky-Dill, Heimersheim, Ortega, Bloom, Biderman, Garriga-Alonso, Conmy, Nanda, Rumbelow, Wattenberg, Schoots, Miller, Saunders, Michaud, Casper, Tegmark, Bau, Todd, Geiger, Geva, Hoogland, Murfet, and McGrath]{sharkey2025open}
Sharkey, L., Chughtai, B., Batson, J., Lindsey, J., Wu, J., Bushnaq, L., Goldowsky-Dill, N., Heimersheim, S., Ortega, A., Bloom, J.~I., Biderman, S., Garriga-Alonso, A., Conmy, A., Nanda, N., Rumbelow, J.~M., Wattenberg, M., Schoots, N., Miller, J., Saunders, W., Michaud, E.~J., Casper, S., Tegmark, M., Bau, D., Todd, E., Geiger, A., Geva, M., Hoogland, J., Murfet, D., and McGrath, T.
\newblock Open problems in mechanistic interpretability.
\newblock \emph{Transactions on Machine Learning Research}, 2025.
\newblock ISSN 2835-8856.

\bibitem[Thudi et~al.(2022)Thudi, Deza, Chandrasekaran, and Papernot]{thudi2022UnrollingSGDUnderstanding}
Thudi, A., Deza, G., Chandrasekaran, V., and Papernot, N.
\newblock Unrolling {{SGD}}: {{Understanding Factors Influencing Machine Unlearning}}.
\newblock In \emph{{{European Symposium}} on {{Security}} and {{Privacy}} ({{EuroS}}\&{{P}})}, 2022.

\bibitem[Tolstikhin et~al.(2018)Tolstikhin, Bousquet, Gelly, and Sch{\"o}lkopf]{tolstikhin2017wasserstein}
Tolstikhin, I., Bousquet, O., Gelly, S., and Sch{\"o}lkopf, B.
\newblock Wasserstein auto-encoders.
\newblock In \emph{International Conference on Learning Representations (ICLR)}, 2018.
\newblock URL \url{https://openreview.net/forum?id=HkL7n1-0b}.

\bibitem[Tom{\'e} et~al.(2024)Tom{\'e}, Zhang, Aida, Mosto, Lu, Chen, Sadeh, Roy, and Clopath]{tome2024dynamic}
Tom{\'e}, D.~F., Zhang, Y., Aida, T., Mosto, O., Lu, Y., Chen, M., Sadeh, S., Roy, D.~S., and Clopath, C.
\newblock Dynamic and selective engrams emerge with memory consolidation.
\newblock \emph{Nature neuroscience}, 27\penalty0 (3):\penalty0 561--572, 2024.

\bibitem[Turing(1950)]{turing1950computing}
Turing, A.~M.
\newblock Computing machinery and intelligence.
\newblock \emph{Mind}, 59\penalty0 (236):\penalty0 433--460, 1950.

\bibitem[Warnecke et~al.(2023)Warnecke, Pirch, Wressnegger, and Rieck]{warnecke2023MachineUnlearningFeaturesa}
Warnecke, A., Pirch, L., Wressnegger, C., and Rieck, K.
\newblock Machine {{Unlearning}} of {{Features}} and {{Labels}}.
\newblock In \emph{Network and Distributed System Security Symposium}, 2023.

\bibitem[Zhang et~al.(2024)Zhang, Lin, Bai, and Mei]{zhang2024negative}
Zhang, R., Lin, L., Bai, Y., and Mei, S.
\newblock Negative preference optimization: From catastrophic collapse to effective unlearning.
\newblock In \emph{Conference on Language Modeling}, 2024.

\bibitem[Zhao et~al.(2024)Zhao, Kurmanji, B{\u a}rbulescu, Triantafillou, and Triantafillou]{zhao2024WhatMakesUnlearning}
Zhao, K., Kurmanji, M., B{\u a}rbulescu, G.-O., Triantafillou, E., and Triantafillou, P.
\newblock What makes unlearning hard and what to do about it.
\newblock In \emph{{{NeurIPS}}}, 2024.

\end{thebibliography}
\bibliographystyle{icml2026}

\clearpage
\appendix
\section{Derivation of Closed-form Solution}
\label{app:optimization}
\subsection{Constrained Optimization Formulation}
From the engram objective function Eq.~\ref{eq:final_objective} established by neuroscience engram criteria, we now formulate the optimization problem.

\begin{proposition}[\textbf{Engram Identification Objective}]
The optimal synaptic engram $\bm{W}^{+}$ is the solution to the following constrained optimization problem:
\begin{align}
\underset{\bm{W}^{+}}{\text{minimize}} \quad 
& \mathcal{J}(\bm{W}^{+}) = \frac{1}{2} \bigl\|(\Delta\bm{W}-\bm{W}^{+})\bm{X}^{+}\bigr\|_{F}^{2} 
\label{eq:constrained_obj:main} \\
\text{subject to}\quad 
& \bm{W}^{+}\bm{X}^{-} = \bm{0}.
\label{eq:constrained_cond:main}
\end{align}
\end{proposition}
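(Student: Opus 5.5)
The plan is to show that the constrained problem in Eqs.~\eqref{eq:constrained_obj:main}--\eqref{eq:constrained_cond:main} is just the four-term reactivation/specificity loss $\mathcal{L}$ with the specificity terms turned into a hard constraint. I would argue in three steps.

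\textbf{Step 1: re-derive Eq.~\eqref{eq:final_objective} from the intervened states of Table~\ref{tab:compact_causal}.} Write $\Delta\bm{W} = \bm{W}_1 - \bm{W}_0$ and expand each residual:
\begin{align*}
\bar{\bm{Z}}_1^+ - \bm{Z}_1^+ &= (\bm{W}_0 + \bm{W}^+ - \bm{W}_1)\bm{X}^+ = (\bm{W}^+ - \Delta\bm{W})\bm{X}^+,\\
\bar{\bm{Z}}_0^+ - \bm{Z}_0^+ &= (\bm{W}_1 - \bm{W}^+ - \bm{W}_0)\bm{X}^+ = (\Delta\bm{W} - \bm{W}^+)\bm{X}^+,\\
\bar{\bm{Z}}_0^- - \bm{Z}_0^- &= \bm{W}^+\bm{X}^-, \qquad \bar{\bm{Z}}_1^- - \bm{Z}_1^- = -\bm{W}^+\bm{X}^-.
\end{align*}
The Frobenius norm is invariant under sign flips. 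Hence the two reactivation terms coincide, the two specificity terms coincide, and $\mathcal{L}(\bm{W}^+) = 2\|(\bm{W}^+-\Delta\bm{W})\bm{X}^+\|_F^2 + 2\|\bm{W}^+\bm{X}^-\|_F^2$.

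\textbf{Step 2: restrict to the feasible set.} Let $\mathcal{F} = \{\bm{W}^+ : \bm{W}^+\bm{X}^- = \bm{0}\}$. This is a linear subspace (left null space condition on $\bm{X}^-$), and it is nonempty because it contains $\bm{0}$. On $\mathcal{F}$ the specificity term vanishes identically, so
\begin{equation*}
\mathcal{L}(\bm{W}^+) = 2\|(\Delta\bm{W}-\bm{W}^+)\bm{X}^+\|_F^2 = 4\,\mathcal{J}(\bm{W}^+), \qquad \bm{W}^+ \in \mathcal{F}.
\end{equation*}
Multiplication by the positive constant $4$ preserves minimizers. Therefore $\arg\min_{\mathcal{F}}\mathcal{L} = \arg\min_{\mathcal{F}}\mathcal{J}$, which is exactly the stated program.

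\textbf{Step 3: well-posedness.} $\mathcal{J}$ is a convex quadratic, bounded below by $0$, on a finite-dimensional subspace, so its minimum is attained. Minimizers need not be unique when $\bm{X}^+$ is rank-deficient on $\mathcal{F}$. In that case the ``optimal'' engram is understood as the minimum-Frobenius-norm element of the solution set, consistent with Eq.~\eqref{eq:main_closed_form}. That formula is obtained afterwards from the KKT system with multiplier $\bm{\Lambda}$ for the constraint.

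The step I expect to need the most care is \emph{justifying the passage from the soft penalty to the hard constraint}, i.e.\ why the hard-constrained problem is the right reading of ``optimal'' for $\mathcal{L}$. I would handle it by considering the weighted family $\mathcal{L}_\lambda = 2\|(\bm{W}^+-\Delta\bm{W})\bm{X}^+\|_F^2 + 2\lambda\|\bm{W}^+\bm{X}^-\|_F^2$. The standard penalty-method argument then gives that, as $\lambda\to\infty$, the (minimum-norm) minimizers of $\mathcal{L}_\lambda$ converge to those of $\mathcal{J}$ on $\mathcal{F}$. This exhibits the proposition as the strict-specificity limit of the loss in Eq.~\eqref{eq:final_objective}.
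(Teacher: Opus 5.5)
Steps 1 and 2 are correct and follow the paper's own route. The paper likewise obtains the dual form \eqref{eq:final_objective} by substituting the intervened states of Table~\ref{tab:compact_causal}. It then simply promotes the specificity term to the hard constraint \eqref{eq:constrained_cond:main}, reading \eqref{eq:constrained_obj:main} as reactivation and \eqref{eq:constrained_cond:main} as specificity. Your explicit residuals, the identity $\mathcal{L}=4\mathcal{J}$ on $\mathcal{F}$, and the (correct) penalty-limit argument with $\mathcal{L}_\lambda$ make that modeling step more precise than the paper does.

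One claim in Step 3 is false, however, and should be removed or qualified. The minimum-norm minimizer of the constrained problem is not, in general, \eqref{eq:main_closed_form}, and the KKT system with multiplier $\bm{\Lambda}$ does not produce that formula. Your own penalty family shows why.
\begin{itemize}
\item The minimum-norm minimizer of $\mathcal{L}_\lambda$ is $\Delta\bm{W}\bm{\Sigma}^{+}(\bm{\Sigma}^{+}+\lambda\bm{\Sigma}^{-})^{\dagger}$. So \eqref{eq:main_closed_form} is the $\lambda=1$ member, i.e.\ the minimizer of the \emph{unconstrained} dual form \eqref{eq:final_objective}. The minimum-norm hard-constrained solution is instead the $\lambda\to\infty$ limit.
\item The two coincide exactly when the block system $\bm{W}^{+}[\bm{X}^{+}\;\bm{X}^{-}]=[\Delta\bm{W}\bm{X}^{+}\;\bm{0}]$ is consistent, e.g.\ when $[\bm{X}^{+}\;\bm{X}^{-}]$ has full column rank. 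Otherwise the $\lambda=1$ minimizer violates $\bm{W}^{+}\bm{X}^{-}=\bm{0}$.
\item In the $N\gg d$ regime the paper targets, $\bm{X}^{-}$ generically has full row rank. Then $\mathcal{F}=\{\bm{0}\}$ and the constrained minimizer is $\bm{0}$, while \eqref{eq:main_closed_form} is generically nonzero.
\item Already for $d=1$, $\bm{X}^{+}=\bm{X}^{-}=1$, $\Delta\bm{W}=w$, the KKT conditions give $\bm{W}^{+}=0$, $\bm{\Lambda}=-w$, whereas \eqref{eq:main_closed_form} gives $w/2$.
\end{itemize}
The paper's appendix makes the same conflation. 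It replaces the KKT conditions by the block system \eqref{eq:block_system:main} and takes its least-squares pseudoinverse solution. It implicitly concedes the point when it describes $\bm{P}^{+}$ as a soft rather than hard filter.

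This does not affect the proposition you were asked to justify. Just drop the ``consistent with \eqref{eq:main_closed_form}'' sentence, or state that \eqref{eq:main_closed_form} solves the hard-constrained program only under the consistency condition above.
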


This mathematical formulation directly maps to two biological criteria: Eq.\eqref{eq:constrained_obj:main} satisfies \textbf{causal reactivation} by maximizing the target's reconstruction fidelity, whereas Eq.\eqref{eq:constrained_cond:main} enforces \textbf{target specificity} by guaranteeing zero interference with reference memories.

\begin{remark}[\textbf{Isolation of Unique Traces}]
\textit{Strict orthogonality} is an analytical instrument to isolate the \textbf{unique causal component} of the target memory. Although neural networks naturally rely on shared entangled representations, this formulation mathematically removes the shared substrate, leaving only the specific memory trace that separates the target $\bm{X}^+$ from the background knowledge $\bm{X}^-$.
\end{remark}

This formulation yields a trace is causal for the target (minimizing the reconstruction error) and inert with respect to the reference (null-space constraint).

\subsection{Lagrangian Derivation and KKT Conditions}
To obtain a closed‑form solution to the constrained optimization problem, we introduce a Lagrange multiplier matrix $\bm{\Lambda} \in \mathbb{R}^{m \times n}$ corresponding to the equality constraint \eqref{eq:constrained_cond:main}. The Lagrangian function $\mathcal{L}(\bm{W}^{+}, \bm{\Lambda})$ is given by:
\begin{align}
\mathcal{L} &= \frac{1}{2} \operatorname{tr}\Bigl( \bm{X}^{+\top}(\Delta\bm{W}-\bm{W}^{+})^{\top}(\Delta\bm{W}-\bm{W}^{+})\bm{X}^{+} \Bigr) \nonumber \\
&\quad - \operatorname{tr}\bigl( \bm{\Lambda}^{\top}\bm{W}^{+}\bm{X}^{-} \bigr).
\end{align}
The optimal solution is driven by the Karush–Kuhn–Tucker (KKT) conditions, which provide stationary points of equality‑constrained problems.

\textbf{1. Stationarity:} 
We enforce this condition by setting the gradient of the Lagrangian with respect to $\bm{W}^{+}$ to zero:
\begin{equation}
\nabla_{\bm{W}^{+}}\mathcal{L} = -(\Delta\bm{W}-\bm{W}^{+})\bm{X}^{+}\bm{X}^{+\top} - \bm{\Lambda}\bm{X}^{-\top} = \bm{0}.
\label{eq:stationarity}
\end{equation}

\textbf{2. Primal Feasibility:} The solution must also satisfy the original constraint:
\begin{equation}
\bm{W}^{+}\bm{X}^{-} = \bm{0}.
\label{eq:feasibility}
\end{equation}

To solve for $\bm{W}^{+}$, we turn KKT into a projection problem: the engram must reconstruct the observed weight update on the target subspace while projecting the reference subspace to zero. Introducing the concatenated matrix $\bm{\mathcal{X}} \coloneqq [\bm{X}^{+} \;\; \bm{X}^{-}]$ allows us to combine reconstruction and orthogonality into a unified block linear projection:
\begin{equation}
\bm{W}^{+} \bm{\mathcal{X}} = \bigl[ \Delta\bm{W}\bm{X}^{+} \;\; \bm{0} \bigr].
\label{eq:block_system:main}
\end{equation}
The general solution to Eq.~\eqref{eq:block_system:main} is given by the Moore-Penrose pseudoinverse $\bm{\mathcal{X}}^{\dagger}$, {yielding a closed‑form expression with clear geometric meaning}:
\begin{equation}
\bm{W}^{+} = \bigl[ \Delta\bm{W}\bm{X}^{+} \;\; \bm{0} \bigr] \bm{\mathcal{X}}^{\dagger}.
\label{eq:pinv_solution}
\end{equation}

\subsection{Scalable Estimation via Sufficient Statistics}
Directly computing the pseudoinverse of the \textbf{concatenated input matrix} $\bm{\mathcal{X}} \in \mathbb{R}^{d \times N}$ in Eq.~\eqref{eq:pinv_solution} is computationally prohibitive for large datasets, since its size grows linearly with sample size $N$. To make this tractable, we reformulate the solution by exploiting the structure of the pseudoinverse.

Using the identity $\bm{\mathcal{X}}^{\dagger} = \bm{\mathcal{X}}^{\top}(\bm{\mathcal{X}}\bm{\mathcal{X}}^{\top})^{\dagger}$, we can express the solution in terms of the \textbf{total covariance matrix} $\bm{\mathcal{X}}\bm{\mathcal{X}}^{\top}$. Substituting this into Eq.~\eqref{eq:pinv_solution} yields:
\begin{align}
\bm{W}^{+} &= \bigl[ \Delta\bm{W}\bm{X}^{+} \;\; \bm{0} \bigr] 
\underbrace{\begin{bmatrix} \bm{X}^{+\top} \\ \bm{X}^{-\top} \end{bmatrix}}_{\bm{\mathcal{X}}^{\top}} 
\underbrace{(\bm{\mathcal{X}}\bm{\mathcal{X}}^{\top})^{\dagger}}_{\text{Inv. Covariance}} \nonumber \\
&= \bigl( \Delta\bm{W}\bm{X}^{+}\bm{X}^{+\top} + \bm{0} \bigr) (\bm{X}^{+}\bm{X}^{+\top} + \bm{X}^{-}\bm{X}^{-\top})^{\dagger}.
\end{align}
This derivation reveals that the entire computation depends on the uncentered covariance matrices, defined as:
\begin{equation}
\bm{\Sigma}^{+} = \bm{X}^{+}(\bm{X}^{+})^{\top}, \quad 
\bm{\Sigma}^{-} = \bm{X}^{-}(\bm{X}^{-})^{\top}.
\label{eq:sufficient_stats}
\end{equation}
{These covariance terms are sufficient statistics: once computed, the full dataset is no longer required.}

\paragraph{Computational Advantage.} 
Because the covariance matrices are additive, they can be \textbf{accumulated iteratively} over mini-batches. This avoids storing the full activation matrix and updates statistics on the fly: $\bm{\Sigma} \leftarrow \bm{\Sigma} + \bm{X}_{\text{batch}}\bm{X}_{\text{batch}}^{\top}$. The memory footprint drops from linear $\mathcal{O}(Nd)$ to a constant $\mathcal{O}(d^2)$. Such scalability is essential for analyzing modern large-scale models where $N\gg d$.

Substituting these statistics into the expanded equation gives our final, scalable closed-form solution:
\begin{align}
\boxed{
\bm{W}^{+} = \Delta\bm{W}\,\bm{\Sigma}^{+}\,\bigl(\bm{\Sigma}^{+} + \bm{\Sigma}^{-}\bigr)^{\dagger}.
}
\label{eq:final_closed_form}
\end{align}
By utilizing the pseudoinverse ($\dagger$), this solution inherently provides the minimum-norm solution when the covariance matrix is rank-deficient (e.g., due to dead neurons), avoiding the need for manual hyperparameter tuning.

\subsection{Practical Instantiation: Retrospective Estimation}

The theoretical derivation uses the differential update $\Delta \bm{W}$, but applying this framework to deep networks requires reconsidering the role of initialization.
As defined in Section~\ref{sec:method}, the initial state $\bm{\mu}_0$ serves as a \textit{functional baseline}. 
{In deep models, however, the random initialization $\bm{W}_0$ acts as a high-entropy state with no coherent response to the shifted input manifold $\bm{X}_{t=1}$.}
Therefore, preserving it as a reference would introduce noise rather than a meaningful baseline. 

Consistent with defining learning as a gain-of-function from a null state, we instantiate the effective memory update as the total accumulated functional structure:
\begin{equation}
    \Delta \bm{W}_{\text{effective}} \approx \bm{W}_{\text{trained}} - \mathbf{0}.
\end{equation}
Under this framework, the condition of \textit{necessity} implies \textbf{functional silencing}: removing the engram should return the system to a null-response state for the target, not to the random fluctuations of initialization. Substituting this practical form into Eq.~\eqref{eq:final_closed_form} yields the final estimator:
\begin{align}
\bm{W}^{+} = \bm{W}_{\text{trained}}\,\bm{\Sigma}^{+}\,\bigl(\bm{\Sigma}^{+} + \bm{\Sigma}^{-}\bigr)^{\dagger}.
\label{eq:relaxed_solution}
\end{align}
This formulation enables the rigorous engram extraction using only the final model checkpoint and current activation statistics, fully decoupling identification from the training trajectory while satisfying the causal definitions.

\section{Experimental Details for Causal Validation of Engram Method}
\label{appendix:experimental_details}

In this section, we provide the comprehensive technical specifications, architectural configurations, and hyperparameter settings utilized to validate the \textbf{AI Engram} framework. Our experiments are designed to demonstrate the framework's versatility across discriminative, generative, and large-scale foundational models.

\subsection{Computational Resources}
All engram extraction processes were performed on a single NVIDIA A100 (80GB) GPU. Due to the closed-form nature of the estimator, the extraction of all 100 class engrams for CIFAR-100 (ResNet-18) takes less than 2 minutes, compared to several hours required for iterative gradient-based unlearning methods.

\subsection{Architectural Configurations and Datasets}
We evaluate the \textbf{Spectral AI Engram} framework on four primary benchmarks, encompassing both discriminative and generative tasks across various parameter scales:

\begin{itemize}
    \item \textbf{CIFAR-10/100 (ResNet-18):} We utilize a standard ResNet-18 architecture \cite{he2016deep} pre-trained on CIFAR-10 and CIFAR-100. The models are sourced from the Hugging Face Hub (e.g., \texttt{edadaltocg/resnet18\_cifar10/100}), achieving 95.2\% top-1 accuracy on CIFAR-10 and 77.8\% on CIFAR-100 respectively. For these discriminative tasks, engrams are extracted from the convolutional filters and final linear projection layers.

    \item \textbf{MNIST (ConvAE):} A 3-layer Convolutional Autoencoder is implemented. The encoder employs $3 \times 3$ kernels with 16 and 32 channels, leading to a latent bottleneck of 64 dimensions. The model is optimized using MSE loss and the Adam optimizer for 50 epochs to establish a baseline for morphological reconstruction fidelity.

    \item \textbf{CelebA (WAE):} A Wasserstein Autoencoder (WAE) \cite{tolstikhin2017wasserstein} is utilized to investigate latent manifold editing. The architecture follows a DCGAN-based structure with a latent dimension of $d=128$ and an MMD penalty. The model is trained on cropped CelebA images ($64 \times 64$) to encode complex semantic attributes, enabling the validation of engram arithmetic.

    \item \textbf{ImageNet-1K (ViT-B/16):} To verify the scalability of our $O(d^2)$ estimator, we employ a pre-trained Vision Transformer (\texttt{google/vit-base-patch16-224}) \cite{dosovitskiy2020image} from the Hugging Face Transformers library. Input images are processed via the 
    
    \texttt{ViTImageProcessor} with RGB conversion and a resolution of $224 \times 224$. We focus on the internal projection layers ($W_Q, W_K, W_V, W_O$) within the transformer blocks and the MLP heads to isolate causal engrams at scale.
\end{itemize}

\subsection{Engram Extraction: Implementation Details}
The extraction process follows the \textbf{Retrospective Estimation} protocol described in Section 4.5. Unlike gradient-based unlearning, our method is deterministic and one-shot.

\paragraph{Covariance Accumulation.}
The sufficient statistics $\bm{\Sigma}^{+}$ and $\bm{\Sigma}^{-}$ are computed during a single forward pass over the target ($X^+$) and reference ($X^-$) datasets. To maintain a constant memory footprint, we accumulate the uncentered covariance matrices iteratively:
\begin{equation}
    \bm{\Sigma} \leftarrow \bm{\Sigma} + \sum_{i=1}^{B} \mathbf{x}_i \mathbf{x}_i^\top
\end{equation}
where $B$ is the mini-batch size. For all experiments, we use $B=128$. This reduces the space complexity from $O(Nd)$ to $O(d^2)$, where $d$ is the layer dimension (e.g., $d=768$ for ViT-B). 

\paragraph{Numerical Stability and Pseudoinverse.}
To compute $(\bm{\Sigma}^{+} + \bm{\Sigma}^{-})^\dagger$, we employ the Moore-Penrose pseudoinverse using Singular Value Decomposition (SVD). To ensure numerical stability against "dead neurons" or rank-deficient subspaces, we apply a singular value threshold $\epsilon = 10^{-6} \times \lambda_{max}$. This provides the \textbf{minimum-norm solution} for $W^+$, which is critical for maintaining model stability after ablation. 

\subsection{Evaluation Protocols for Unlearning}

\subsubsection{CIFAR10/100 class-wise unlearning}
\label{app:experimental_setup}

We evaluate on CIFAR-10 (10 classes) and CIFAR-100 (100 classes) using ResNet-18 and ResNet-50 architectures. For all experiments, we target Class~0 for unlearning: the model is first trained on the complete dataset, then unlearned to forget Class~0 while retaining knowledge of remaining classes. \textbf{Retain set} $X^-$ is training samples from classes other than target class. \textbf{Forget set} $X^+$ is training samples from target class. \textbf{Test set} $\mathcal{D}_{\text{test}}$ is full test set.

Let $\mu$ denote model parameters, $X^-$ the retain set, $X^+$ the forget set, and $\mathcal{L}_{\text{CE}}$ the cross-entropy loss. Following is a list of the machine unlearning algorithms demonstrated in this work.

\paragraph{Fine-tune~\cite{warnecke2023MachineUnlearningFeaturesa}.}
Standard fine-tuning on the retain set:
\begin{align*}
    \mathcal{L}_{\text{Finetune}} = \mathbb{E}_{(x, y) \sim X^-} \left[\mathcal{L}_{\text{CE}}(f_\mu(x), y)\right]
\end{align*}

\paragraph{NegGrad~\cite{thudi2022UnrollingSGDUnderstanding}.}
Gradient ascent on the forget set to maximize loss:
\begin{align*}
    \mathcal{L}_{\text{NegGrad}} = -\mathbb{E}_{(x, y) \sim X^+} \left[\mathcal{L}_{\text{CE}}(f_\mu(x), y)\right]
\end{align*}

\paragraph{NegGrad+~\cite{kurmanji2023UnboundedMachineUnlearninga}.}
Combined objective balancing retention and forgetting:
\begin{align*}
    \mathcal{L}_{\text{NegGrad+}} &= \beta \, \mathbb{E}_{(x, y) \sim X^-} \left[\mathcal{L}_{\text{CE}}(f_\mu(x), y)\right] \\
    &- (1-\beta) \, \mathbb{E}_{(x, y) \sim X^+} \left[\mathcal{L}_{\text{CE}}(f_\mu(x), y)\right]
\end{align*}

\paragraph{$l_1$-Sparse~\cite{jia2023ModelSparsityCan}.}
Fine-tuning with $l_1$ regularization to promote parameter sparsity:
\begin{align*}
    \mathcal{L}_{l_1} = \mathbb{E}_{(x, y) \sim X^-} \left[\mathcal{L}_{\text{CE}}(f_\mu(x), y)\right] + \gamma \sum_i |w_i|
\end{align*}

\paragraph{Random Label~\cite{golatkar2020EternalSunshineSpotless}.}
Training on combined data with randomized labels for the forget set:
\begin{align*}
    \mathcal{L}_{\text{Rand}} &= \mathbb{E}_{(x, y) \sim X^-} \left[\mathcal{L}_{\text{CE}}(f_\mu(x), y)\right] \\
    &+ \mathbb{E}_{(x, \cdot) \sim X^+} \left[\mathcal{L}_{\text{CE}}(f_\mu(x), y_{\text{rand}})\right]
\end{align*}
where $y_{\text{rand}}$ denotes uniformly sampled incorrect labels.

\paragraph{SalUn~\cite{fan2023SalUnEmpoweringMachine}.}
Saliency-based unlearning with masked parameter updates:
\begin{enumerate}
    \item Compute gradient magnitude $|\nabla_\mu \mathcal{L}_{\text{CE}}(X^+)|$ on the forget set
    \item Generate binary mask $M \in \{0, 1\}^{|\mu|}$ preserving top $(1-\text{sparsity\_ratio})$\% parameters by gradient magnitude
    \item Update parameters using Random Label objective with mask: $\mu_{t+1} = \mu_t - \eta (M \odot \nabla \mathcal{L}_{\text{Rand}})$
\end{enumerate}

\paragraph{Engram (Ours).}
Closed-form projection that removes the statistical trace of forget data from model weights. For a weight matrix $W$, given input covariance matrices $\Sigma_f$ (forget set) and $\Sigma_{\text{total}} = \Sigma_r + \Sigma_f$ (total training set):
\begin{align*}
    W_{\text{new}} = W - \alpha \, W \Sigma_f \Sigma_{\text{total}}^{\dagger}
\end{align*}
where $(\cdot)^{\dagger}$ denotes the Moore-Penrose pseudo-inverse and $\alpha$ controls edit strength. This operation is applied layer-wise to all layers.

\paragraph{Hyperparameter Search}

We perform grid search for each algorithm, selecting hyperparameters that maximize the Tug-of-War (ToW) metric. All methods use batch size 128 with SGD optimizer (momentum 0.9, weight decay $5 \times 10^{-4}$). See Table~\ref{tab:hyperparameter_search}. The best hyperparameter for Engram, $\alpha_{\text{best}}$, obtained from grid search is 0.6 and 1.6 for CIFAR-10 and CIFAR-100, respectively.

\begin{table}[h]
\centering
\small
\caption{Hyperparameter search spaces for each unlearning method.}
\label{tab:hyperparameter_search}
\renewcommand{\arraystretch}{1.2}
\begin{tabular}{lp{5.5cm}}
\toprule
\textbf{Method} & \textbf{Hyperparameters (LR / Epochs / Others)} \\
\midrule
Fine-tune & LR: \{0.1, 0.01, 0.001, 5e-4, 3e-4, 1e-4, 5e-5\}, Epochs: 10 \\
\midrule
$l_1$-Sparse & LR: \{as above\}, $\gamma$: \{1e-6, 1e-5, 1e-4\}, Epochs: 10 \\
\midrule
NegGrad & LR: \{as above\}, Epochs: 10 \\
\midrule
NegGrad+ & LR: \{as above\}, $\beta$: 0.99, Epochs: \{5, 10\} \\
\midrule
SalUn & LR: \{as above\}, Sparsity: 0.5, Epochs: 10 \\
\midrule
Random Label & LR: \{as above\}, Epochs: 10 \\
\midrule
Engram & $\alpha$ (edit strength): \{0.5, 0.6, ..., 2.0\} (step 0.1) \\
\bottomrule
\end{tabular}
\end{table}

\paragraph{Evaluation Metrics}
\label{app:unlearning_metrics}
We adopt the Tug-of-War (ToW) metric~\cite{zhao2024WhatMakesUnlearning}, which quantifies alignment between an unlearned model $\mu^u$ and a model retrained from scratch $\mu^r$:
\begin{align*}
    \text{ToW} &= (1 - \text{da}(\mu^u, \mu^r, X^+)) \cdot (1 - \text{da}(\mu^u, \mu^r, X^-)) \\ 
    & \cdot (1 - \text{da}(\mu^u, \mu^r, \mathcal{D}_{\text{test}})),
\end{align*}
where $\text{da}(\mu^u, \mu^r, \mathcal{D}) = |a(\mu^u, \mathcal{D}) - a(\mu^r, \mathcal{D})|$ denotes the absolute accuracy difference on dataset $\mathcal{D}$. ToW ranges from 0 to 1; higher values indicate closer behavioral alignment to exact unlearning.

Table~\ref{tab:classwise_unlearning} presents quantitative comparisons. Engram achieves the highest ToW scores on both CIFAR-10 (0.984) and CIFAR-100 (0.983), outperforming all baselines. The closed-form solution of Engram provides an optimal trade-off between forgetting quality, retention, and model utility without iterative gradient optimization.

ToW measures output-level behavior, but models may retain information about the forget set in internal representations without manifesting it at the output layer. We therefore evaluate representation-level metrics~\cite{seo2025RevisitingMachineUnlearning}: Dimensional Alignment (DA), which measures how well forget-set features align with the retain-set principal subspace, and Normalized Mutual Information (NMI), which quantifies clustering separability between forget and retain features. Table~\ref{tab:classwise_unlearning} shows that Engram achieves moderate DA and NMI values, indicating that it retains less information regarding the forget set.

In addition, we compute Centered Kernel Alignment (CKA)~\cite{kornblith2019SimilarityNeuralNetwork, davari2022ReliabilityCKASimilarity,kim2026AreWeTruly} between unlearned models and both the original and retrained models. Figure~\ref{fig_cka_scatter_resnet18_cifar10} (CIFAR-10) and Figure~\ref{fig_cka_scatter_resnet50_cifar100} (CIFAR-100) visualizes these relationships: the ideal unlearned model occupies the top-left region, indicating high similarity to the retrained model and low similarity to the original. Engram method with best $\alpha$ consistently positions in this region and is closest to the retrained model.

\begin{figure}[!t]
\centering
\includegraphics[width=0.85\columnwidth]{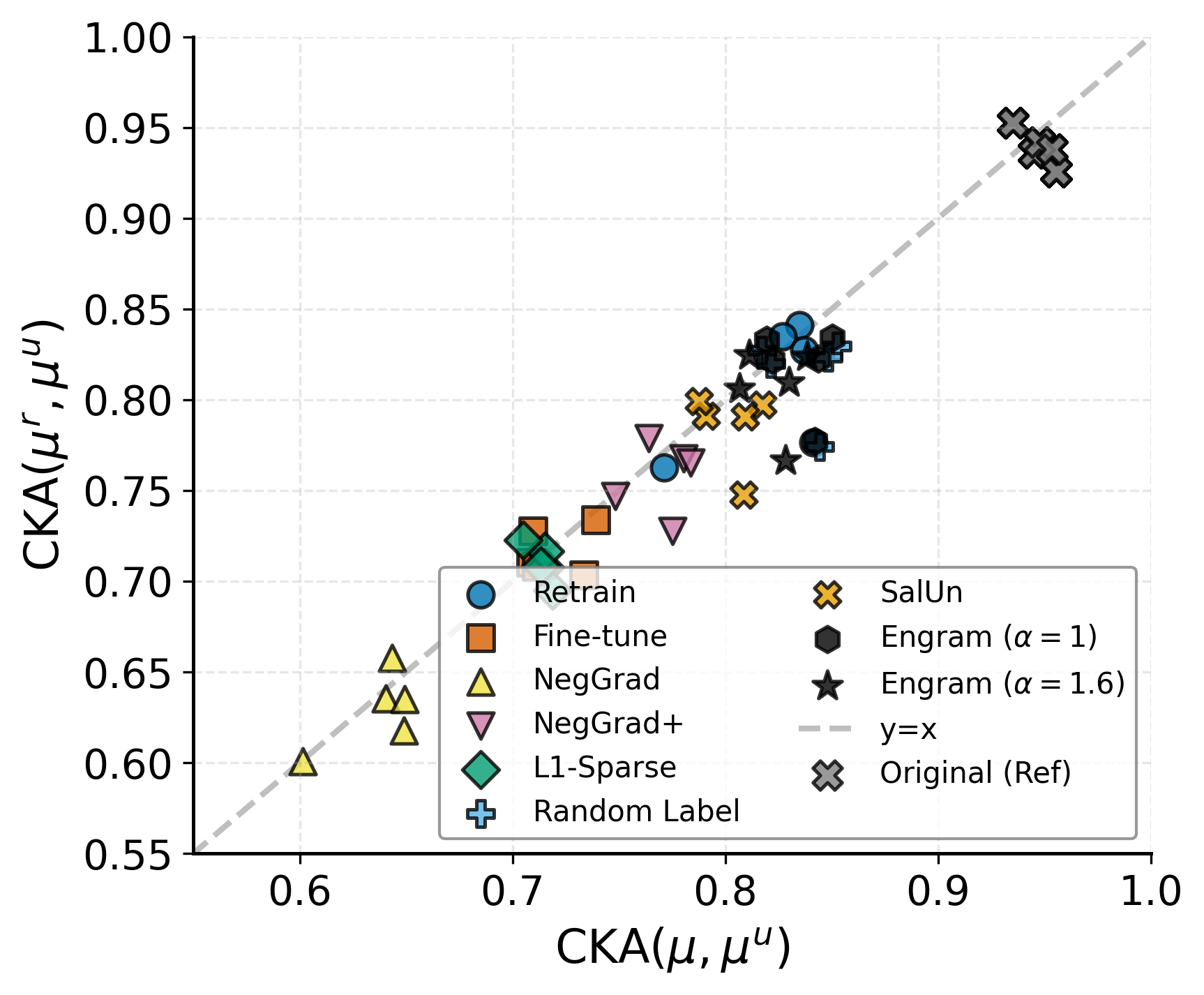}
\vspace{-0.2cm}
\caption{CKA similarity of the original (x-axis) versus the retrained (y-axis) models for CIFAR-100.}
\label{fig_cka_scatter_resnet50_cifar100}
\vspace{-2mm}
\end{figure}

\paragraph{Evaluation Protocol}
After hyperparameter selection, we run each unlearning method with 5 different random seeds. Both the original trained model and the retrained model use corresponding seeds for fair comparison.

\paragraph{Note on Reference Model Metrics.}
The retrained model does not achieve ToW = 1.0 because we compute ToW between retrained models with different random seeds (e.g., ToW between retrain-seed-1 and retrain-seed-2). This procedure captures inherent stochasticity in training and provides a realistic upper bound. The same protocol applies to CKA measurements: CKA between retrained models with different seeds is less than 1.0, and CKA for the original model reflects comparison across seeds rather than self-similarity.

\begingroup\color{black}%
\section{Surgical Validation of  Engram Method.}
\label{app_extended_supervised_settings}
\subsection{MNIST on MLP}
\begin{figure}[H]
\vspace{-0.3cm} 
\centering 
\includegraphics[width=0.80\columnwidth]{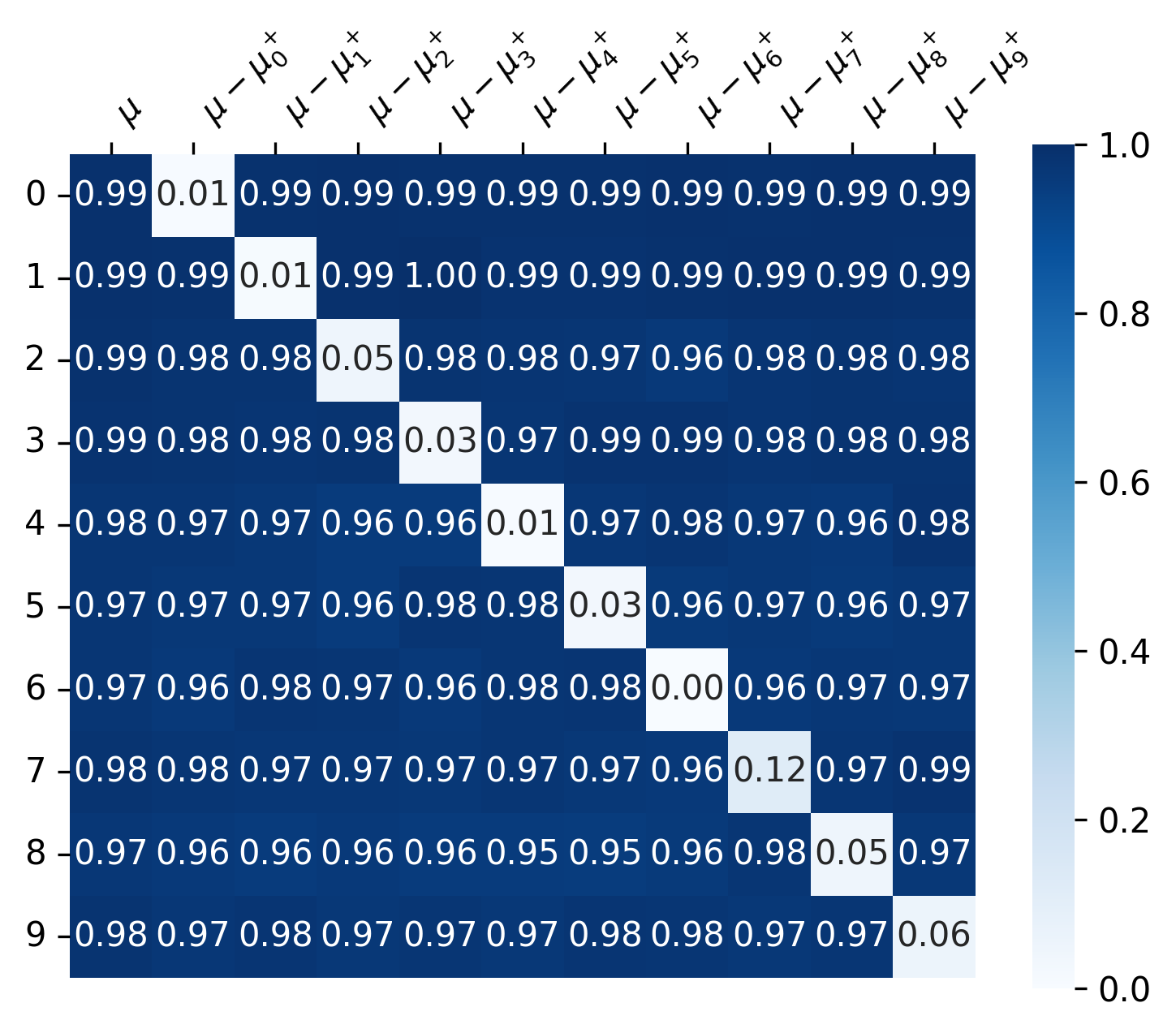} 
\caption{\textbf{Class-wise unlearning performance on MNIST 3-layer MLP} We evaluate the Engram Method by unlearning each of the 10 classes individually. The heatmap illustrates that the accuracy of the target class (white diagonal) drops to near zero, while the performance on the remaining 99 classes (dark blue off-diagonal) is perfectly preserved, showcasing the method's ability to handle high-dimensional class sets without collateral interference.}
\vspace{-0.3cm} 
\label{fig_mlp_mnist_classwise_accuracy}
\end{figure}

\onecolumn
\subsection{ImageNet-1k on ViT}
\begin{figure}[H]
\vspace{-0.3cm} 
\centering 
\includegraphics[width=0.9\columnwidth]{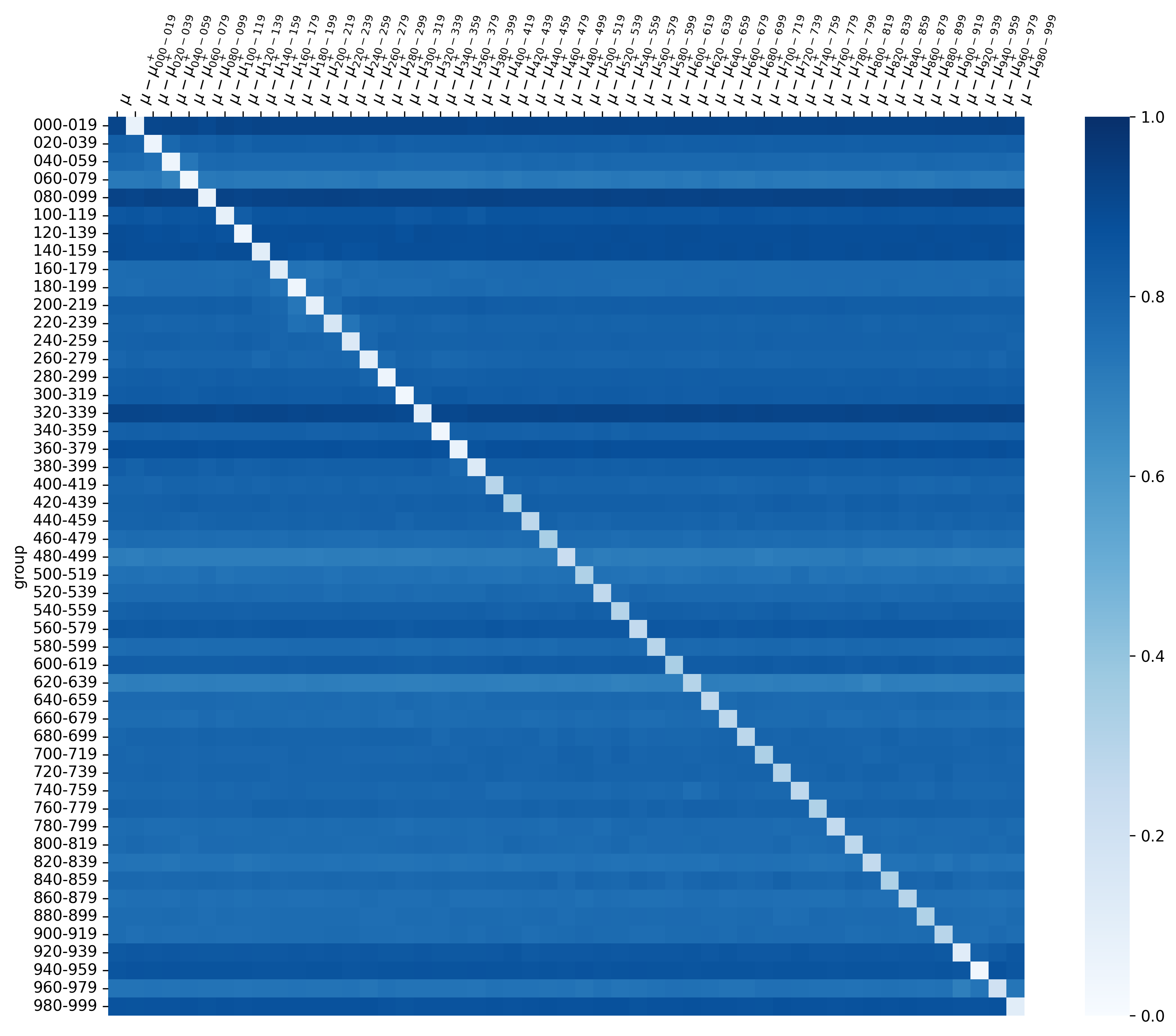} 
\caption{\textbf{Grouped class-wise unlearning performance on ImageNet-1k with ViT} We show 20-class grouped-wise unlearning result. Each row represents a model trained to unlearn a specific group of classes. The white diagonal elements indicate near-zero accuracy for the target forgotten groups, while the dark blue off-diagonal regions demonstrate that the accuracy for other classes is preserved, confirming the surgical precision of the method.}
\vspace{-0.3cm} 
\label{fig_vit_imagenet1k_classwise_accuracy}
\end{figure}
\clearpage

\subsection{CIFAR-100 on ResNet-18}
\begin{figure}[H]
\vspace{-0.3cm} 
\centering 
\includegraphics[width=0.99\textwidth]{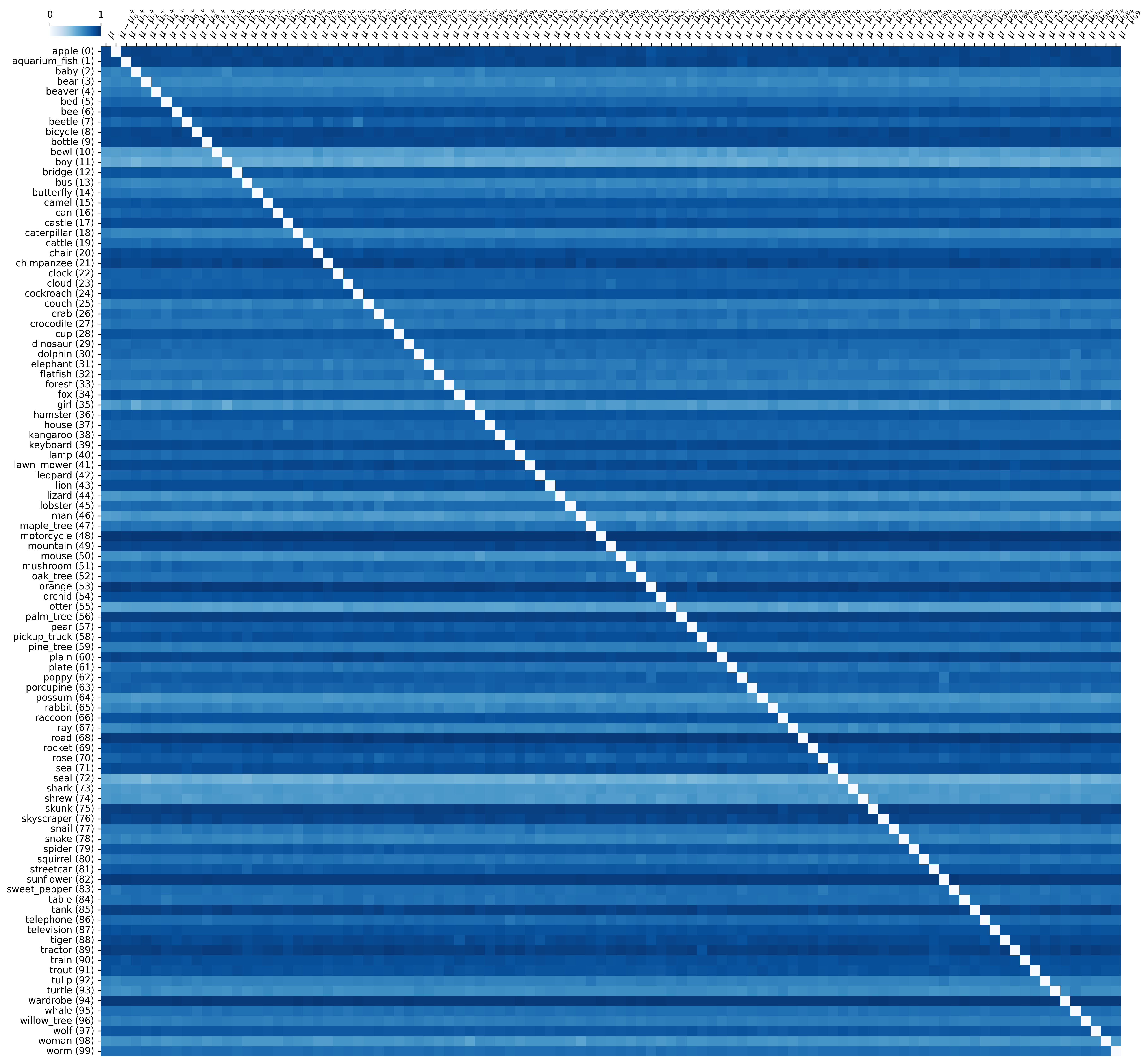} 
\caption{\textbf{Class-wise unlearning performance on CIFAR-100 in ResNet-18.} Each row $i$ represents a model specifically updated to unlearn class $i$ using the Engram Method. The prominent white diagonal indicates that the accuracy for the target forgotten class drops to near zero, while the consistently dark blue off-diagonal regions demonstrate that the knowledge of all other 99 classes remains intact. This highlights the method's ability to perform class-specific erasure with minimal collateral damage to unrelated categories.}
\vspace{-0.3cm} 
\label{fig_resnet18_cifar100_classwise_accuracy}
\end{figure}
\clearpage

\subsection{Graceful Degradation under Semantic Overlap}
\label{app:semantic_overlap}

The soft-projection structure analyzed in Section~\ref{sec:soft_projection} predicts that engram ablation should degrade smoothly and proportionally with semantic similarity between target and reference concepts, rather than catastrophically failing when the null-space constraint is violated. We empirically verify this prediction on CIFAR-100 with ResNet-18, exploiting the dataset's native two-level structure (20 superclasses, each containing 5 fine classes).

\paragraph{Setup.}
For each fine class $c$ as the target, we extract the engram $\bm{W}^{+}_c$ and ablate it from the trained model. We then measure the test-set accuracy drop on every other class $c'$, partitioning $c'$ into two groups: \emph{same-superclass} pairs (high feature overlap, e.g., \texttt{maple\_tree}/\texttt{oak\_tree}) and \emph{different-superclass} pairs (low overlap, e.g., \texttt{maple\_tree}/\texttt{dolphin}). Within each group, we additionally compute cosine similarity between class-conditional mean input representations.

\paragraph{Results.}
Same-superclass pairs show a mean accuracy drop of $\mathbf{0.80}$ percentage points, while different-superclass pairs show near-zero degradation. Within a superclass, the drop varies smoothly and monotonically with cosine similarity (Fig.~\ref{fig:semantic_overlap}): semantically closer pairs show proportionally larger interference, but the curve is continuous with no discontinuous collapse. This confirms that the estimator \emph{maximizes} separation rather than \emph{forcing} it---the Lagrangian's hard constraint $\bm{W}^{+}\bm{X}^{-} = \bm{0}$ shapes the optimization, while the resulting operator $\bm{P}^{+}$ accommodates partial concept overlap through its spectral structure.

\begin{figure*}[h]
\centering
\includegraphics[width=0.85\linewidth]{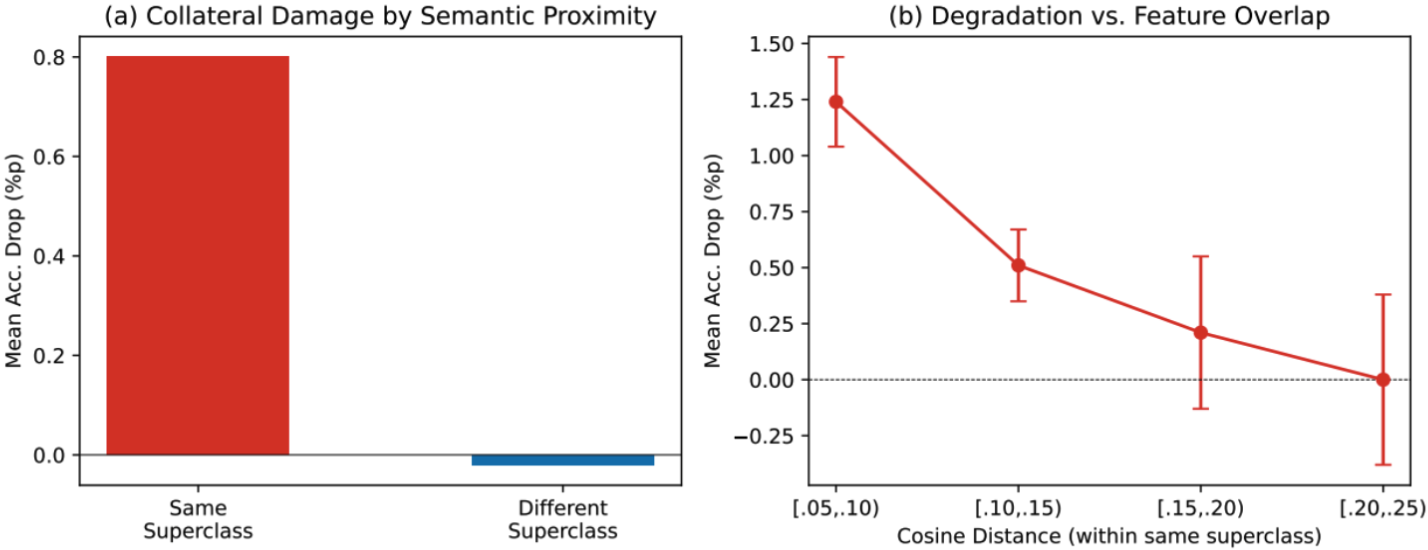}
\caption{\textbf{Graceful degradation under semantic overlap on CIFAR-100 (ResNet-18).} 
(a) Mean test-set accuracy drop on reference classes after ablating the target engram, split by same-superclass (high feature overlap) versus different-superclass (low overlap) pairs. Same-superclass pairs show only $0.80$ percentage-point mean degradation; different-superclass pairs are effectively unaffected. 
(b) Within a superclass, accuracy drop varies smoothly and monotonically with cosine similarity between class-conditional input representations, with no discontinuous collapse. These results empirically confirm that the soft-projection operator $\bm{P}^{+}$ accommodates partial concept overlap.}
\label{fig:semantic_overlap}
\end{figure*}

\endgroup

\clearpage
\subsection{Generative Models}
\begin{figure}[H]
\vspace{-0.3cm} 
\centering 
\includegraphics[width=0.8\textwidth]{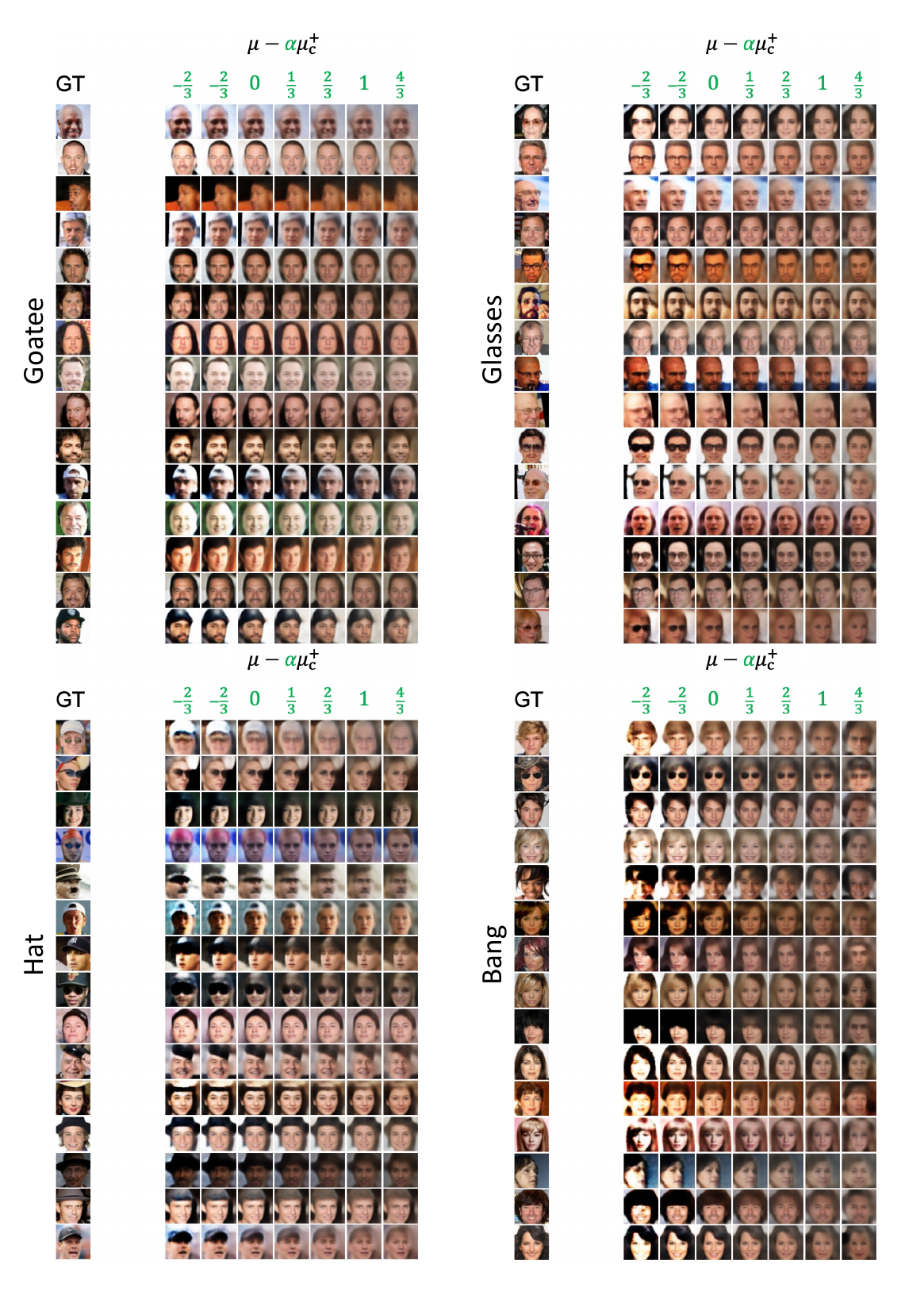} 
\caption{\textcolor{black}{\textbf{Semantic attribute control on CelebA (WAE).} 
For each attribute (Goatee, Glasses, Hat, Bangs), we apply continuous engram-based editing $\bm{\mu} - \alpha\bm{\mu}^{+}_c$ with $\alpha \in \{-2/3, -1/3, 0, 1/3, 2/3, 1, 4/3\}$. Increasing $\alpha$ progressively removes the target attribute while preserving facial identity; negative $\alpha$ amplifies the attribute. The smooth, monotonic interpolation across $\alpha$ values demonstrates that the identified engram defines a stable, linearizable trajectory in weight space for semantic manipulation without retraining. GT: ground truth reconstruction at $\alpha=0$.}}
\vspace{-0.3cm} 
\label{fig_celeba_appendix}
\end{figure}

\twocolumn
\newpage
\section{Operator-Theoretic Formalism of AI Engrams}
\label{app:operator_theory}

\begin{figure}[H]
\vspace{-0.3cm} 
\centering 
\includegraphics[width=0.99\columnwidth]{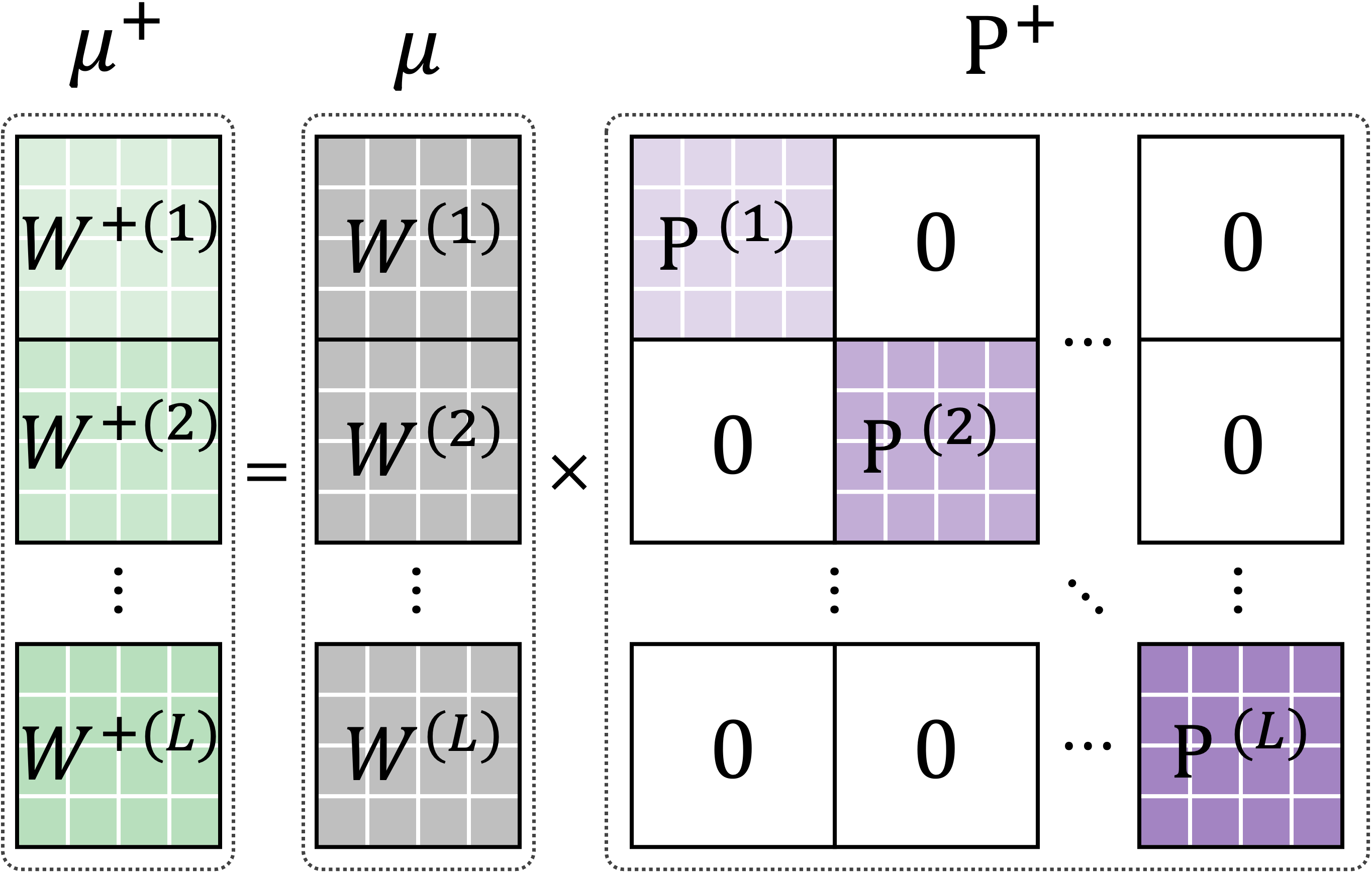} 
\caption{Global Engram Operator. Block-diagonal projector P acting on layer-wise weights $\bm\mu$.}
\vspace{-0.3cm} 
\label{fig_block_diagonal_projector}
\end{figure}

To formalize the global isolation of memories, we define a \textbf{Global Engram Operator} acting on the parameter manifold $\mathcal{W}$. This formalism elucidates how local synaptic changes aggregate into a global functional structure. Let the global parameter set $\bm{\mu}$ be represented as a block-column vector of layer-wise weights, and let $\mathbf{P}$ be the corresponding block-diagonal spectral projector:

\begin{equation}
\bm{\mu} = \begin{bmatrix} \mathbf{W}^{(1)} \\ \mathbf{W}^{(2)} \\ \vdots \\ \mathbf{W}^{(L)} \end{bmatrix}, \quad 
\mathbf{P}^+ = \text{diag}\left( \mathbf{P}^{+(1)}, \mathbf{P}^{+(2)}, \dots, \mathbf{P}^{+(L)} \right)
\end{equation}

where each block $\mathbf{P}^{+(l)} = \bm{\Sigma}_l^+ (\bm{\Sigma}_l^+ + \bm{\Sigma}_l^-)^\dagger$ denotes the spectral filter for layer $l$. As illustrated in Figure~\ref{fig_block_diagonal_projector} the total AI engram $\bm{\mu}^+$ is the result of a \textbf{global linear transformation} in weight space:

\begin{equation}
    \bm{\mu}^+ = \bm{\mu} \mathbf{P}^+
\end{equation}

\paragraph{Linearity and Global Decoupling.}
The block-diagonal structure of $\mathbf{P}$ provides the mathematical justification for our layer-wise decomposition. It proves that identifying the global engram is equivalent to a parallelizable resolution of identity across the weight manifold. Crucially, this confirms that engram isolation is not a heuristic localization but a \textbf{spectral resolution of the entire network state}, allowing for the exact reconstruction of causal traces without iterative feedback or inter-layer communication.
\section{Experimental Setup for Large-Language Model Unlearning}
\label{app_tofu}
\begin{table*}[t]
\centering
\caption{Detailed breakdown of unlearning performance on the TOFU dataset (Forget10). The \textbf{Overall} score is the harmonic mean of Memorization, Utility, and Privacy scores. \textbf{Memorization}, \textbf{Utility}, and \textbf{Privacy} scores are harmonic means of their respective sub-metrics (denoted in the columns below them). $\uparrow$ indicates higher is better, and $\downarrow$ indicates lower is better. Best values (excluding baselines) are \textbf{bolded}.}
\label{table_tofu_app}
\vskip 0.1in
\begin{small}
\begin{sc}
\resizebox{\textwidth}{!}{%
\setlength{\tabcolsep}{3.5pt}
\begin{tabular}{l | c | ccccc | ccc | ccccc | c}
\toprule
\multirow{2}{*}{\textbf{Method}} & \multirow{2}{*}{\textbf{Overall} $\uparrow$} & \multicolumn{5}{c|}{\textbf{Memorization}} & \multicolumn{3}{c|}{\textbf{Utility}} & \multicolumn{5}{c|}{\textbf{Privacy}} & \multirow{2}{*}{\textbf{FQ} $\uparrow$} \\
\cmidrule(lr){3-7} \cmidrule(lr){8-10} \cmidrule(lr){11-15}
 & & Score $\uparrow$ & EM $\downarrow$ & ES $\downarrow$ & PP $\downarrow$ & TR $\downarrow$ & Score $\uparrow$ & MU $\uparrow$ & FF $\uparrow$ & Score $\uparrow$ & $S_{\text{loss}}$ $\uparrow$ & $S_{\text{zlib}}$ $\uparrow$ & $S_{\text{min-k}}$ $\uparrow$ & $S_{\text{min-k}++}$ $\uparrow$ & \\
\midrule
Init Fine. & 0.0000 & 0.0000 & 1.0000 & 1.0000 & 1.0000 & 1.0000 & 1.0000 & 1.0000 & 1.0000 & 0.0038 & 0.0071 & 0.0023 & 0.0062 & 0.0031 & -21.4083 \\
Retain & 0.9977 & 1.0000 & 0.0000 & 0.0000 & 0.0000 & 0.0000 & 0.9933 & 0.9866 & 1.0000 & 1.0000 & 1.0000 & 1.0000 & 1.0000 & 1.0000 & 0.0000 \\
\midrule
GradDiff & 0.2270 & 0.1849 & 0.7182 & 0.3358 & 0.5878 & 0.9294 & \textbf{0.9934} & 0.9870 & 1.0000 & 0.1471 & 0.1191 & 0.1169 & 0.1221 & 0.4869 & -15.8262 \\
IdkNLL & 0.1456 & 0.4547 & 0.6211 & 0.2715 & 0.5392 & 0.6175 & 0.9102 & 0.8351 & 1.0000 & 0.0578 & 0.0508 & 0.0674 & 0.0476 & 0.0730 & -13.4845 \\
UNDIAL & 0.4129 & 0.5644 & 0.7399 & 0.0340 & 0.0334 & 0.1473 & 0.9155 & 0.8571 & 0.9824 & 0.2272 & 0.6223 & 0.6885 & 0.2008 & 0.1045 & -11.3335 \\
RMU & 0.7568 & 0.8660 & 0.2953 & 0.0108 & 0.0373 & 0.1305 & 0.7471 & 0.8526 & 0.6648 & 0.6799 & 0.6345 & 0.6430 & 0.7150 & 0.7392 & -0.2357 \\
AltPO & \textbf{0.9549} & 0.9254 & 0.0953 & \textbf{0.0039} & \textbf{0.0095} & 0.1692 & 0.9864 & \textbf{0.9836} & 0.9891 & \textbf{0.9549} & 0.9380 & \textbf{0.9666} & 0.9468 & 0.9688 & -0.7424 \\
NPO & 0.9441 & 0.9339 & 0.0948 & 0.0376 & 0.0147 & 0.1106 & 0.9501 & 0.9049 & 1.0000 & 0.9484 & 0.9231 & 0.9480 & 0.9399 & \textbf{0.9846} & -1.9986 \\
SimNPO & 0.9454 & 0.9435 & 0.1035 & 0.0114 & 0.0359 & 0.0700 & 0.9706 & 0.9429 & 1.0000 & 0.9232 & \textbf{0.9882} & 0.9586 & \textbf{0.9593} & 0.8095 & -0.0897 \\
IdkDPO & 0.8890 & 0.9532 & 0.1155 & 0.0239 & 0.0249 & 0.0156 & 0.9660 & 0.9343 & 1.0000 & 0.7751 & 0.7529 & 0.8257 & 0.7561 & 0.7700 & -1.1856 \\
\midrule
Engram ($\bar{\alpha}$) & 0.6984 & 0.9176 & \textbf{0.0069} & 0.0071 & 0.0615 & 0.2185 & 0.8801 & 0.7858 & 1.0000 & 0.4832 & 0.6505 & 0.6898 & 0.5618 & 0.2848 & -0.6125 \\
Engram ($\tilde{\alpha}$) & 0.8177 & \textbf{0.9627} & 0.0276 & 0.0201 & 0.0842 & \textbf{0.0138} & 0.9256 & 0.8616 & 1.0000 & 0.6453 & 0.7688 & 0.7255 & 0.6905 & 0.4829 & \textbf{-0.0637} \\
\bottomrule
\end{tabular}
}
\end{sc}
\end{small}
\end{table*}

\subsection{Experimental Setup}
We utilized the \textbf{OpenUnlearning} framework~\cite{dorna2025openunlearning} to conduct a standardized evaluation of unlearning methods. 
\begin{itemize}
    \item \textbf{Dataset:} We employed the \textbf{TOFU} (Task-Specific Forgettable Unlearning) benchmark~\cite{maini2024tofu}. Specifically, we focused on the \texttt{forget10} split, which targets the unlearning of 10\% of the training data ($\mathcal{D}_{forget}$), while retaining the remaining 90\% ($\mathcal{D}_{retain}$).
    \item \textbf{Model:} We used the \textbf{Llama-3.2-1B-Instruct} model~\cite{dubey2024llama} as the base model. The initial finetuned model ($\theta_{ft}$) was obtained by full-parameter fine-tuning on the full TOFU dataset ($\mathcal{D}_{forget} \cup \mathcal{D}_{retain}$).
    \item \textbf{Baselines:} We compared our method against a comprehensive set of state-of-the-art baselines:
    \begin{itemize}
        \item \textbf{GradDiff}~\cite{liu2025gd}: A gradient-based method that maximizes loss on $\mathcal{D}_{forget}$ while minimizing it on $\mathcal{D}_{retain}$.
        \item \textbf{IdkNLL \& IdkDPO}~\cite{maini2024tofu}: Standard baselines that train the model to respond with "I don't know" for forget-set queries using NLL and DPO objectives, respectively.
        \item \textbf{UNDIAL}~\cite{dong2025undial}: A self-distillation approach that utilizes adjusted logits for robust knowledge erasure.
        \item \textbf{RMU}~\cite{li2024wmdp}: A representation-based method that misaligns forget-set activations with a steering vector.
        \item \textbf{NPO}~\cite{zhang2024negative}: Negative Preference Optimization, which uses a preference-based loss to steer the model away from $\mathcal{D}_{forget}$.
        \item \textbf{SimNPO}~\cite{fan2024simplicity}: A simplified, reference-free variant of NPO designed to mitigate reference model bias.
        \item \textbf{AltPO}~\cite{mekala2024alternate}: An alternate preference optimization method that incorporates alternate positive labels during unlearning.
    \end{itemize}
\end{itemize}

\subsection{Evaluation Metrics}
We evaluate unlearning performance across three dimensions: \textit{Memorization}, \textit{Utility}, and \textit{Privacy}. All metrics are normalized to the range $[0, 1]$. We denote the unlearned model as $\theta_{unl}$, the original finetuned model as $\theta_{ft}$, and the gold-standard retain model (trained only on $\mathcal{D}_{retain}$) as $\theta_{ret}$.

\subsubsection{Harmonic Mean Aggregation}
To ensure a balanced evaluation across varying scales, we employ the \texttt{safe\_hmean} as defined in the OpenUnlearning analysis suite. For a set of scores $S = \{s_1, \dots, s_n\}$, the harmonic mean is calculated as:
\begin{equation}
    H(S) = \frac{n}{\sum_{i=1}^{n} \frac{1}{\max(s_i, \epsilon)}}
\end{equation}
where $\epsilon = 10^{-10}$ is a small constant to prevent division by zero.

\subsubsection{Memorization Metrics}
These metrics quantify the residual knowledge of $\mathcal{D}_{forget}$. Lower raw values indicate better unlearning, but for the aggregate score, we normalize and invert them so that $\uparrow$ indicates better forgetting.

\paragraph{Exact Memorization (EM)}
EM measures the proportion of tokens in the generated response that exactly match the ground truth. We rescale the raw EM score based on the performance gap between the retain and finetuned models. Let $E(\theta)$ be the raw exact match rate of model $\theta$. The rescaled metric is:
\begin{equation}
    \text{EM}_{rescaled} = \text{clip}\left( \frac{|E(\theta_{unl}) - E(\theta_{ret})|}{|E(\theta_{ft}) - E(\theta_{ret})|}, 0, 1 \right)
\end{equation}
A value of 0 implies the model behaves exactly like the retain model (ideal unlearning).

\paragraph{Forget Quality (FQ)}
FQ assesses the distributional similarity of model outputs. Following the TOFU protocol~\cite{maini2024tofu}, we compute the Kolmogorov-Smirnov (KS) test p-value between the Truth Ratio distributions of the unlearned model and the retain model on $\mathcal{D}_{forget}$.
\begin{equation}
    \text{FQ}_{raw} = \text{KS\_Test}\left( \text{TR}(\theta_{unl}, \mathcal{D}_{forget}), \text{TR}(\theta_{ret}, \mathcal{D}_{forget}) \right)
\end{equation}
For reporting, we use the logarithmic scale: $\text{FQ} = \log_{10}(\text{FQ}_{raw})$. Higher values (closer to 0) indicate that the unlearned model's probability distribution is statistically indistinguishable from the retain model.

\subsubsection{Utility Metrics}
These metrics ensure the model preserves capabilities on $\mathcal{D}_{retain}$ and general tasks.

\paragraph{Model Utility (MU)}
MU measures the accuracy on the retain set and real-world knowledge. It is normalized by the performance of the original finetuned model. Let $A(\theta)$ be the aggregate accuracy/utility score of model $\theta$.
\begin{equation}
    \text{MU} = \text{clip}\left( \frac{A(\theta_{unl})}{A(\theta_{ft})}, 0, 1 \right)
\end{equation}

\paragraph{Forget Fluency (FF)}
FF ensures the model generates grammatically correct responses even when refusing to answer forget-set queries, rather than generating gibberish. It is calculated using a binary classifier $C_{fl}$ trained to detect linguistic fluency.

\subsubsection{Privacy Metrics (Indistinguishability)}
We utilize Membership Inference Attacks (MIA) to measure privacy. We employ four distinct MIA methods: \textbf{Loss}, \textbf{Zlib}, \textbf{Min-K}, and \textbf{Min-K++}. For each method, we compute the Area Under the Curve (AUC) by distinguishing between the unlearned model's scores on forget samples versus the retain model's scores.

The \textit{Privacy Score} (Indistinguishability) is derived from the AUC. An AUC of 0.5 implies perfect indistinguishability (ideal privacy).
\begin{equation}
    \text{Privacy Score} = 1 - 2 \cdot |\text{AUC} - 0.5|
\end{equation}
The final Privacy metric is the harmonic mean of the scores from all four MIA methods.

\subsection{Method Details: Engram}
We evaluate two variants of our proposed method, \textbf{Engram}, which leverages the norm of weight updates to guide forgetting.

\begin{itemize}
    \item \textbf{Engram ($\alpha=k$):} This variant searches for a best hyperparameter $\alpha$ across all layers. In our experiments, $k=0.6$ was chosen from search space \{0.05, 0.10, ..., 0.95, 1.0\} (step 0.05).
    
    \item \textbf{Engram ($\alpha_\text{W-Norm}$):} This variant employs an adaptive $\alpha$. We compute the ratio of the weight update norms $\|W^+\|/\|W\|$ and rescale this ratio to the range $[0, 1]$ based on its maximum value across layers. This allows the model to apply stronger unlearning rates to parameters that changed significantly during the initial learning phase.
\end{itemize}

\begingroup\color{black}%
\subsection{Computational Cost Breakdown}
\label{app:llm_compute}

This appendix details the compute and memory profile underlying the compute–accuracy frontier discussion in Section~\ref{scaleup}. All numbers are computed for Llama-3.2-1B on the TOFU forget10 split ($\sim 1$M tokens), with batch size 2 and sequence length 256 on a single A100 80GB GPU.

\paragraph{FLOPs.}
For a Transformer with $P$ trainable parameters, the forward pass costs approximately $2P$ FLOPs per token (one multiply–accumulate per parameter), and the backward pass costs approximately $4P$ FLOPs per token (gradients with respect to weights and inputs). For Llama-3.2-1B, $P = 1.24 \times 10^{9}$, and the TOFU forget10 split contains $N \approx 10^{6}$ tokens. Standard gradient-based unlearning trains for 10 epochs, while Engram requires a single forward pass to accumulate covariance statistics, followed by a closed-form pseudoinverse solve whose cost is dominated by SVD on each layer's $d \times d$ covariance matrix ($\mathcal{O}(d^{3})$ per layer, $\sim 0.01$ PFLOPs total).

\paragraph{Memory.}
Gradient-based methods store the model weights ($4P$ bytes in FP32), gradients ($4P$), optimizer states (Adam: $8P$), and activations ($\sim 4P$ at our batch/sequence setting). Engram stores only the weights and the layer-wise covariance matrices ($\sim 5.92$ GB for all linear projections combined).

\paragraph{Summary.}
Table~\ref{tab:llm_compute} consolidates the breakdown. Engram requires roughly $\sim 30\times$ fewer FLOPs and $\sim 2.3\times$ less peak memory than gradient-based unlearning, with wall-clock times of $\sim 2$ minutes versus tens of minutes to an hour. This asymmetry is what places Engram on a distinct point of the compute–accuracy frontier discussed in Section~\ref{scaleup}.

\begin{table}[h]
\caption{\textbf{Compute and memory profile for gradient-based unlearning vs.\ Engram} on Llama-3.2-1B (TOFU forget10). Gradient-based estimates assume 10 epochs of training (e.g., AltPO/NPO/SimNPO); Engram requires a single forward pass plus a closed-form solve. $P = 1.24 \times 10^{9}$ parameters; activations estimated at batch size 2, sequence length 256.}
\label{tab:llm_compute}
\vskip 0.1in
\centering
\small
\resizebox{\columnwidth}{!}{%
\begin{tabular}{lccc}
\toprule
\textbf{Resource} & \textbf{Gradient-based} & \textbf{Engram} & \textbf{Ratio} \\
\midrule
Forward FLOPs           & $24.8$ PFLOPs    & $2.48$ PFLOPs    & $10\times$  \\
Backward FLOPs          & $49.6$ PFLOPs    & ---              & ---         \\
SVD overhead            & ---              & $\sim 0.01$ PFLOPs & negl.     \\
\textbf{Total FLOPs}    & $\mathbf{74.4}$ \textbf{PFLOPs} & $\mathbf{2.5}$ \textbf{PFLOPs} & $\mathbf{\sim 30\times}$ \\
\midrule
Weights ($4P$)          & $4.96$ GB        & $4.96$ GB        & ---         \\
Gradients ($4P$)        & $4.96$ GB        & ---              & ---         \\
Optimizer ($8P$)        & $9.92$ GB        & ---              & ---         \\
Activations ($\sim 4P$) & $\sim 4.96$ GB   & ---              & ---         \\
Covariance              & ---              & $5.92$ GB        & ---         \\
\textbf{Total Memory}   & $\mathbf{\sim 24.8}$ \textbf{GB} & $\mathbf{\sim 10.9}$ \textbf{GB} & $\mathbf{\sim 2.3\times}$ \\
\midrule
Wall time (A100)        & $\sim 10$--$60$ min & $\sim 2$ min  & $\sim 5$--$30\times$ \\
\bottomrule
\end{tabular}
}
\vskip -0.1in
\end{table}
\endgroup  

\begingroup\color{black}%
\subsection{Closed-Form Editing Baselines: Detailed Comparison}
\label{app:closed_form_baselines}

This appendix details the implementation of the closed-form editing baselines compared in Table~\ref{tab:closed_form_comparison} of the main text, along with the hyperparameter sweeps used for fair comparison.

\paragraph{Algebraic correspondence.}
Both UCE~\cite{gandikota2024unified} and Engram can be written as
\begin{equation}
\bm{W}^{+} = \bm{W} \cdot \bm{\Sigma}^{+} \cdot \bm{D},
\end{equation}
differing only in their regularization choice for $\bm{D}$:
\begin{align*}
\text{UCE:}    &\quad \bm{D} = (\bm{\Sigma}^{+} + \bm{\Sigma}^{-} + \lambda\bm{I})^{-1} \\
\text{Engram:} &\quad \bm{D} = (\bm{\Sigma}^{+} + \bm{\Sigma}^{-})^{\dagger}
\end{align*}
UCE's uniform damping $\lambda\bm{I}$ stabilizes the inverse by perturbing \emph{all} spectral directions, including well-conditioned ones; when these per-layer perturbations accumulate across the full network, they shift output behavior even where the original covariance is well-conditioned. The Moore–Penrose pseudoinverse instead applies a hard spectral cutoff, discarding only the ill-conditioned directions and leaving the well-conditioned subspace untouched. We hypothesize this distinction is the primary source of the observed performance gap.

Task Arithmetic~\cite{ilharco2023editing} adopts a different form,
\begin{equation*}
\bm{W}_{\text{new}} = \bm{W} - \alpha\bm{W}_{\text{tv}},
\end{equation*}
where the task vector $\bm{W}_{\text{tv}} = \bm{W}_{\text{ft on forget}} - \bm{W}_{\text{pt}}$ is obtained by fine-tuning the pretrained model on the forget set and subtracting the pretrained weights. Although not strictly a covariance-based estimator, it shares the gradient-free, single-shot paradigm and serves as a natural baseline for closed-form unlearning.

\paragraph{Hyperparameter sweeps.}
For each baseline we report the configuration achieving the best Overall score:
\begin{itemize}
    \item \textbf{Task Arithmetic}: $\alpha \in \{0.5, 1.0, 2.0, 5.0, 20.0\}$ (5 values).
    \item \textbf{UCE}: $\lambda \in \{0, 0.01, 0.05, 0.1, 0.3, 0.5, 1.0\}$ (7 values).
\end{itemize}
All methods are applied to the same set of Llama-3.2-1B projection layers (Q, K, V, O, Gate, Up, Down across all 16 transformer blocks plus the LM head).

\endgroup

\begingroup\color{black}%

\subsection{Layer-Type Ablation: Empirical Validation of W-Norm Localization}
\label{app:layer_ablation}

The W-Norm heatmap (Fig.~\ref{fig_weight_norm}) suggests that memory 
traces concentrate within Query, Key, and MLP-Gate projections in 
Llama-3.2-1B. To empirically validate this localization, we conduct a 
layer-type ablation: applying engram extraction selectively to subsets 
of projection layers and measuring the resulting unlearning performance 
on TOFU forget10.

\begin{table}[h]
\caption{\textbf{Layer-type ablation on TOFU} (Llama-3.2-1B, forget10). 
We selectively extract engrams from different projection layer subsets to 
isolate the contribution of each layer type. Q/K + Gate alone matches the 
all-layer Overall score, while removing Q/K/Gate collapses unlearning 
performance to near-baseline, empirically confirming the W-Norm 
localization pattern (Fig.~\ref{fig_weight_norm}). The Utility column 
shows that excluding Q/K/Gate preserves general capability ($0.968$) at 
the cost of completely failing to unlearn—indicating that these layers 
are precisely where the target memory resides.}
\label{tab:layer_ablation}
\vskip 0.1in
\centering
\small
\resizebox{\columnwidth}{!}{
\begin{tabular}{lccccc}
\toprule
\textbf{Targeted Layers} & \textbf{Overall}~$\uparrow$ & \textbf{Mem.}~$\uparrow$ & \textbf{Util.}~$\uparrow$ & \textbf{Priv.}~$\uparrow$ & \textbf{EM}~$\downarrow$ \\
\midrule
All linear projections     & 0.819 & 0.963 & 0.926 & 0.645 & \textbf{0.028} \\
Q/K only                   & 0.510 & 0.715 & 0.590 & 0.359 & 0.554 \\
\textbf{Q/K + Gate}        & \textbf{0.819} & 0.930 & \textbf{0.928} & \textbf{0.661} & 0.076 \\
No Q/K/Gate                & 0.238 & 0.663 & 0.968 & 0.099 & 0.505 \\
\bottomrule
\end{tabular}
}
\vskip -0.1in
\end{table}

\paragraph{Findings.}
Three observations emerge from Table~\ref{tab:layer_ablation}. 
\emph{First}, Q/K + Gate alone achieves Overall $0.819$, matching the 
all-layer configuration ($0.819$). This confirms that the spatial 
concentration revealed by the W-Norm heatmap is functionally meaningful: 
memory traces are not merely \emph{statistically} concentrated in these 
projections, they are \emph{causally} sufficient for unlearning. 
\emph{Second}, excluding Q/K/Gate collapses Overall to $0.238$ while 
preserving Utility at $0.968$—the resulting model retains general 
capability but completely fails to unlearn the target, demonstrating 
that these layers are precisely where the target memory resides. 
\emph{Third}, Q/K projections alone are insufficient (Overall $0.510$); 
the Gate projection contributes a critical complementary signal, 
suggesting that attention-based recall and MLP-based association both 
participate in storing the unlearned content.

\paragraph{Implication for scaling.}
This finding has practical implications for applying Engram at larger 
scales. As noted in Appendix~\ref{app:limitations}, the dominant storage 
cost at 8B--70B parameters is the MLP-down covariance ($d = 14336$ for 
Llama-3 8B). Table~\ref{tab:layer_ablation} suggests that restricting 
extraction to Q/K + Gate layers eliminates this bottleneck with negligible 
loss in unlearning quality, making the method viable at scales where 
storing all-layer covariances would be prohibitive.

\endgroup 
\section{Compositional Memory States Hypothesis}
\label{app:unlearned_memory_states}

\begin{figure}[h]
\vspace{-0.2cm}
\centering
\includegraphics[width=0.8\columnwidth]{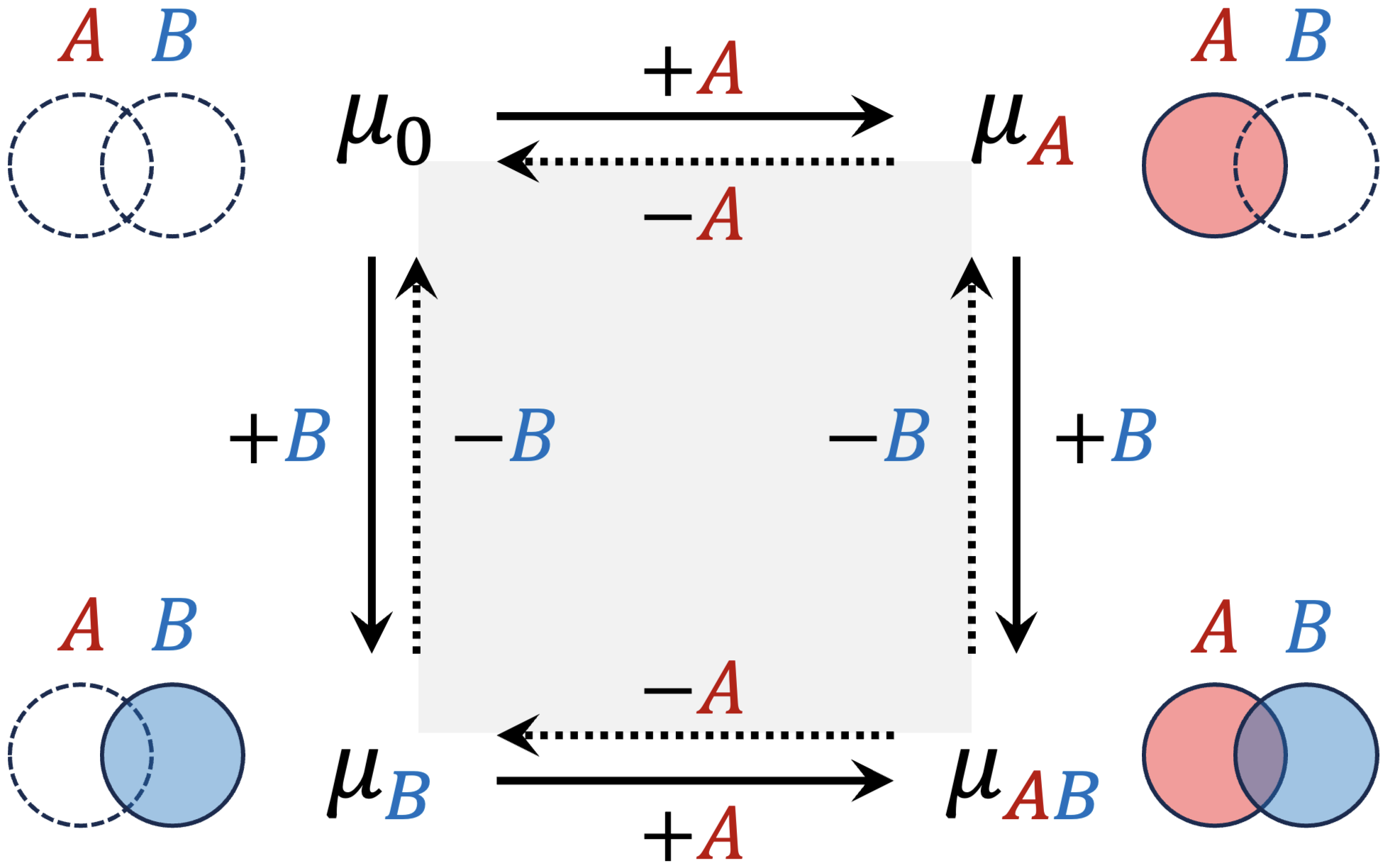}
\vspace{-0.2cm}
\caption{\textbf{Transitions between fundamental memory states in compositional learning.} For a concept $\mathcal{C}$ consisting of sub-concepts $\{A, B\}$, we identify four compositional memory states: (i)~the \textbf{knowledge vacuum} $\bm{\mu}_0$; (ii)~\textbf{isolated states} $\bm{\mu}_A, \bm{\mu}_B$ for individual concepts; and (iii)~the \textbf{composite state} $\bm{\mu}_{AB}$ for their superposition. Solid and dotted arrows represent acquisition ($+$) and elimination ($-$) processes, respectively. Our framework posits a commutative manifold where the integration of $A$ and $B$ reaches a consistent equilibrium regardless of the learning sequence.}
\label{fig_memory_states}
\vspace{-0.2cm}
\end{figure}

In this section, we provide an intuitive formalization of our framework, grounded in the perspective that learning and unlearning are symmetric operations traversing a structured parameter space.

\subsection{Idealized Parameter States}
We proceed with the assumption that for any given concept $c$ (or a combination thereof), there exists an \textit{idealized parameter state} $\theta_c^*$ that optimally represents that knowledge. Under this view, ``learning" is the trajectory from an initial state $\theta_0$ to $\theta_c^*$, and ``unlearning" is the inverse operation returning the model to a state indistinguishable from one that never observed $c$~\cite{golatkar2020EternalSunshineSpotless,bourtoule2021machine}.

Recent studies in \textit{Task Arithmetic}~\cite{ilharco2023editing} and \textit{Linear Mode Connectivity}~\cite{frankle2020linear} suggest that in the fine-tuning regime, these task-specific updates often reside in a linear subspace. Extending this to our context, we posit that the \textbf{Engram} $\bm\mu_c^+$ acts as a task vector connecting these states. Consequently, the transition between memory states can be modeled as vector addition and subtraction in the weight space:
\begin{align*}
    \bm\mu_{\text{learned}} &\approx \bm\mu_{\text{before learning}} + \bm{\mu}_c^+, \\ 
    \quad \bm\mu_{\text{after unlearning}} &\approx \bm\mu_{\text{learned}} - \bm{\mu}_c^+.
\end{align*}

\subsection{Combinatorial States and Zero-shot Traversal}
Consider a scenario with $n$ distinct concepts. The set of all possible knowledge combinations (the power set) yields $2^n$ unique memory states, ranging from the \textbf{Knowledge Vacuum} (where no concepts are known) to the fully composite state (where all concepts are known).

\begin{figure}[H]
\vspace{-0.3cm} 
\centering 
\includegraphics[width=0.99\columnwidth]{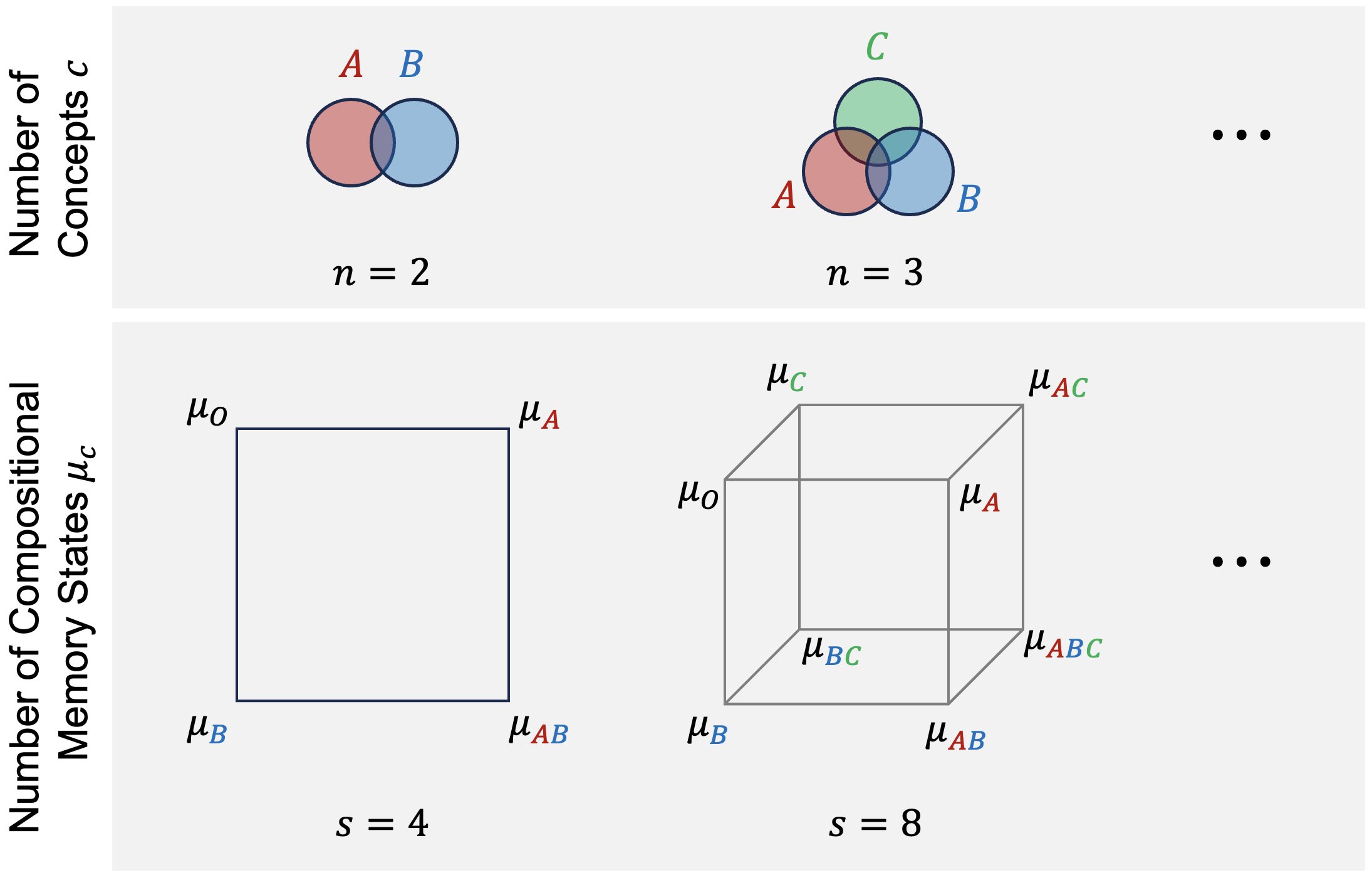} 
\caption{\textbf{Combinatorial State Space.} For $n$ concepts, the memory manifold consists of $2^n$ discrete nodes. Excluding the trivial starting state (Vacuum), there are $2^n - 1$ active knowledge states that a user might wish to reach (or unlearn to). Our Engrammatic Decomposition allows for zero-shot traversal between any two nodes in this hypercube via simple arithmetic.}
\vspace{-0.3cm} 
\label{fig_num_concepts_and_states}
\end{figure}

As illustrated in Figure~\ref{fig_num_concepts_and_states}, if we exclude the trivial starting point, there are \textbf{$2^n - 1$ distinct semantic states} that a model might need to adopt. 
Standard unlearning approaches would require iterative optimization to reach each specific state (e.g., ``forget A but keep B"). In contrast, our framework leverages the linear compositionality of engrams to synthesize any of these $2^n - 1$ states instantaneously, effectively resolving the combinatorial complexity of selective forgetting without additional training.
\begingroup\color{black}%

\section{Structural Role of the Tabula Rasa Instantiation}
\label{app:tabula_rasa}

Section~\ref{sec:retrospective} adopts the instantiation 
$\Delta\bm{W} \approx \bm{W}_{\text{ft}} - \mathbf{0}$ rather than the 
fine-tuning delta $\Delta\bm{W} = \bm{W}_{\text{ft}} - \bm{W}_{\text{pt}}$. 
This appendix justifies the choice on three grounds: (i) the two choices 
yield \emph{structurally different} optimization problems, not merely 
numerically different estimators; (ii) the tabula rasa form is what makes 
the estimator single-pass and parallelizable across layers; (iii) the 
delta-weight form, while attractive interpretively, would require iterative 
sequential optimization that is impractical at LLM scale. We also report 
the empirical consequence of substituting the delta naively without such 
optimization.

\subsection{Three-State Derivation}
\label{app:tabula_three_state}

Consider three reference states $s \in \{0, 1, 2\}$:
\begin{itemize}
    \item $s = 0$: the null state, $\bm{W}_0 = \mathbf{0}$;
    \item $s = 1$: the pretrained state, $\bm{W}_1$;
    \item $s = 2$: the fine-tuned state, $\bm{W}_2$.
\end{itemize}
The general closed-form editing solution, following the MEMIT/UCE 
form~\citep{meng2022mass, gandikota2024unified}, can be written as
\begin{equation}
\bm{W}_{\text{new}} = \big(\bm{V}^{*}\bm{X}^{+\top} + \bm{W}_2 \bm{X}^{-}\bm{X}^{-\top}\big) \bm{D}
\label{eq:general_editor}
\end{equation}
where $\bm{D} = (\bm{\Sigma}^{+} + \bm{\Sigma}^{-})^{\dagger}$ and $\bm{V}^{*}$ denotes the desired pre-activation output at the edited 
layer. The choice of source/target state determines $\bm{V}^{*}$ and, 
through it, the structure of the resulting optimization.

\paragraph{Case 1: Tabula rasa ($s = 2 \to s = 0$).}
Setting $\bm{V}^{*} = \bm{W}_0 \bm{X}^{*} = \mathbf{0}$ in 
Eq.~\eqref{eq:general_editor}:
\begin{align*}
\bm{W}_{\text{new}} 
&= \big(\underbrace{\bm{W}_0 \bm{X}^{*}}_{=\, \mathbf{0}} \cdot \bm{X}^{+\top}
+ \bm{W}_2 \bm{X}^{-}\bm{X}^{-\top}\big)\bm{D} \\
&= \bm{W}_2 \bm{\Sigma}^{-} (\bm{\Sigma}^{+} + \bm{\Sigma}^{-})^{\dagger}.
\end{align*}

The first term vanishes at \emph{every} layer, \emph{independently of the 
input}. Each layer's solution depends only on its own weights and activation 
statistics, with no dependence on the target activations of neighbouring 
layers. All $L$ layers can therefore be solved in parallel---this is the 
structural property exploited by Eq.~\eqref{eq:retrospective_estimator} in 
the main text.

\paragraph{Case 2: Delta weights ($s = 2 \to s = 1$).}
Setting $\bm{V}^{*} = \bm{W}_1 \bm{X}^{*}$ in Eq.~\eqref{eq:general_editor}:
\begin{equation*}
\bm{W}_{\text{new}} 
= \big(\underbrace{\bm{W}_1 \bm{X}^{*}}_{\neq\, \mathbf{0}} \cdot \bm{X}^{+\top}
+ \bm{W}_2 \bm{X}^{-}\bm{X}^{-\top}\big) \bm{D}.
\end{equation*}
The first term survives. Crucially, editing layer $\ell$ shifts the 
activations entering layer $\ell{+}1$, which in turn shifts the required 
$\bm{V}^{*(\ell+1)}$ for the next layer. This creates a \emph{cascading 
dependency} across the network's depth, so a rigorous solution to Case~2 
requires \textbf{backward sequential optimization} of $\bm{V}^{*}$ from 
later to earlier layers---structurally analogous to the optimization loop 
in MEMIT~\citep{meng2022mass}.

\paragraph{Summary.}
The tabula rasa instantiation is not merely a numerical convenience: it is 
the structural condition under which $\bm{V}^{*}$ collapses to zero, 
removing the cross-layer coupling that would otherwise force sequential 
optimization. Single-pass parallelism is a property of Case~1, not of the 
estimator template itself.

\subsection{Empirical Consequence of Naive Substitution}
\label{app:tabula_empirical}

A natural question is whether the delta-weight form can still be applied 
\emph{without} performing the sequential $\bm{V}^{*}$ optimization required 
by Case~2---i.e., by simply substituting $\Delta\bm{W} = \bm{W}_{\text{ft}} 
- \bm{W}_{\text{pt}}$ into Eq.~\eqref{eq:retrospective_estimator}. 
Table~\ref{tab:tabula_rasa} reports the result on TOFU (Llama-3.2-1B, 
forget10). The per-layer error from ignoring the cascading dependency 
accumulates across the network's depth, collapsing Overall performance 
($0.818 \to 0.446$) and degrading Exact Memorization suppression by more 
than an order of magnitude ($0.028 \to 0.617$).

\begin{table}[h]
\caption{Effect of the instantiation choice on TOFU (Llama-3.2-1B, forget10). 
The tabula rasa instantiation decouples the layer-wise sub-problems and 
solves all layers in parallel. Naive substitution of the fine-tuning delta 
without sequential $\bm{V}^{*}$ optimization accumulates per-layer error.}
\label{tab:tabula_rasa}
\vskip 0.1in
\centering
\small
\begin{tabular}{lcc}
\toprule
\textbf{Instantiation} & \textbf{Overall}~$\uparrow$ & \textbf{EM}~$\downarrow$ \\
\midrule
$\Delta\bm{W} = \bm{W}_{\text{ft}} - \bm{W}_{\text{pt}}$ (naive) & 0.446 & 0.617 \\
$\Delta\bm{W} = \bm{W}_{\text{ft}}$ (tabula rasa, \textbf{ours}) & \textbf{0.818} & \textbf{0.028} \\
\bottomrule
\end{tabular}
\vskip -0.1in
\end{table}

\subsection{Why We Do Not Pursue Sequential Optimization}
\label{app:tabula_sequential}

In principle, Case~2 admits a rigorous solution via MEMIT-style sequential 
editing: solve $\bm{V}^{*}$ from the deepest layer backward, re-propagating 
activations after each layer's edit. While this would yield an estimator 
that more directly identifies the perturbation contributed by concept $C$ 
during fine-tuning, the cost is prohibitive at the scales we target. 
ROME~\citep{meng2022locating} and MEMIT typically operate on 1--6 
transformer layers, where sequential coordination is tractable. Our 
framework targets all linear projection layers of Llama-3.2-1B simultaneously 
($\bm{W}_Q, \bm{W}_K, \bm{W}_V, \bm{W}_O$ across 16 attention blocks and 
$\bm{W}_{\text{gate}}, \bm{W}_{\text{up}}, \bm{W}_{\text{down}}$ across 16 
MLP blocks, plus the LM head). Sequential optimization at this scale would 
require backpropagation through every layer for every concept, eliminating 
the closed-form efficiency that motivates our framework.

\paragraph{Interpretation.}
The trade-off is interpretive rather than technical: the tabula rasa 
estimator identifies a \emph{functionally separable component of the 
converged manifold}---validated by the sufficiency and necessity tests in experimental results---rather than reconstructing the specific 
weight perturbation that concept $C$ contributed during learning. We retain 
the engram terminology because the extracted components causally satisfy 
the four neuroscience criteria, while explicitly acknowledging that the 
underlying mechanism is functional identification of the converged state.

\endgroup
\newpage
\section{Limitations}
\label{app:limitations}

\begingroup\color{black}%
\paragraph{Functional Identification, Not Learning Trajectory Reconstruction.}
Strictly speaking, the components isolated by Eq.~\eqref{eq:retrospective_estimator} are \emph{functionally separable subspaces of the trained manifold}, not reconstructions of the weight perturbations that each concept contributed during learning. The tabula rasa instantiation (Appendix~\ref{app:tabula_rasa}) makes this identification well-posed and single-pass, at the cost of conflating contributions from pretraining and fine-tuning into a single converged representation. We retain the term ``engram'' because the extracted components causally satisfy the four neuroscience criteria of specificity, reactivation, sufficiency, and necessity (Section~\ref{experiments}), and because they support compositional manipulation through linear arithmetic. We emphasize, however, that the underlying mechanism is the identification of a functional component of the converged state, rather than a reconstruction of the learning trajectory itself---an interpretive distinction that we view as a feature, not a limitation, of closed-form spectral memory isolation.
\endgroup

\paragraph{Requirement for Concept Datasets.}
Unlike unsupervised feature discovery methods such as Sparse Autoencoders~\cite{huben2024sparse} or Linear Parameter Decomposition~\cite{braun2025apd, bushnaq2025spd}, our framework requires explicit construction of target ($\bm{X}^{+}$) and reference ($\bm{X}^{-}$) datasets. This necessitates human labeling or domain knowledge to define concepts, limiting applicability in fully unsupervised discovery scenarios. \textcolor{black}{A promising extension is to combine the estimator with concept discovery methods—e.g., using SAE-identified features to automatically construct $\bm{X}^{+}/\bm{X}^{-}$ partitions—enabling fully unsupervised engram extraction.}

\paragraph{Retrospective-Only Estimation.}
The current framework operates retrospectively on fully trained models. It cannot be integrated into the training loop for online engram tracking or applied in streaming/incremental learning scenarios where real-time memory isolation might be desirable.

\paragraph{Covariance Matrix Storage at Scale.}
Although our estimator reduces space complexity from $\mathcal{O}(Nd)$ to $\mathcal{O}(d^2)$, storing $d \times d$ covariance matrices for each layer can become substantial at 8B--70B parameter scale. A layer with $d = 4096$ requires approximately 67 MB per covariance matrix in single precision, accumulating across layers in deep architectures. \textcolor{black}{Two complementary mitigations are available: (i) truncated SVD storage exploiting the typically low effective rank of these matrices, and (ii) restricting extraction to Q/K/Gate layers identified by W-Norm analysis (Section~\ref{scaleup}), which achieves comparable Overall performance while eliminating the expensive MLP-down covariance.}
\end{document}